\newcommand{\blind}{0}
\def\argmin{\mathop{\rm argmin}}
\def\RR{\mathbb R}
\def\N{{\cal N}}
\def\N{{\cal N}}
\newtheorem{proposition}{Proposition}
\newtheorem{theorem}{Theorem}
\newtheorem{corollary}{Corollary}
\newtheorem{lemma}{Lemma}
\newcommand{\bmu}{{\boldsymbol{\mu}}}
\newcommand{\beps}{{\boldsymbol{\epsilon}}}
\newcommand{\bx}{\mathbf{x}}
\newcommand\Tstrut{\rule{0pt}{2.5ex}}         
\begin{document}



\def\spacingset#1{\renewcommand{\baselinestretch}%
{#1}\small\normalsize} \spacingset{1}


\if0\blind
{
  \title{\b Unsupervised Learning of GMM with a Uniform Background Component}
  \author{Sida Liu\thanks{
    The authors gratefully acknowledge DARPA ARO W911NG-16-1-0579 for support}\hspace{.2cm}\\
    Department of Statistics, Florida State University\\
    and \\
    Adrian Barbu \\
    Department of Statistics, Florida State University}
  \maketitle
} \fi

\if1\blind
{
  \bigskip
  \bigskip
  \bigskip
  \begin{center}
    {\LARGE\bf Title}
\end{center}
  \medskip
} \fi

\bigskip
\begin{abstract}
Gaussian Mixture Models are one of the most studied and mature models in unsupervised learning. However, outliers are often present in the data and could influence the cluster estimation. In this paper, we study a new model that assumes that data comes from a mixture of a number of Gaussians as well as a uniform ``background'' component assumed to contain outliers and other non-interesting observations.  We develop a novel method based on robust loss minimization that performs well in clustering such GMM with a uniform background. We give theoretical guarantees  for our clustering algorithm to obtain best clustering results with high probability. Besides, we show that the result of our algorithm does not depend on initialization or local optima, and the parameter tuning is an easy task. By numeric simulations, we demonstrate that our algorithm enjoys high accuracy and achieves the best clustering results given a large enough sample size.  Finally, experimental comparisons  with typical clustering methods on  real datasets witness the  potential  of our algorithm in real applications.
\end{abstract}





\section{Introduction}

Over several past decades, mixture models have become the center of many clustering problems. 
Among various mixture models, Gaussian Mixture Models (GMM) are the  most well-known and studied. 
As a fundamental model in describing numerous natural and artificial phenomena, GMMs are being studied with different types of methods over the past few decades. In these applications, the data samples are always assumed to originate from various sources where each source can approximately fit a Gaussian model.

However, in many real clustering problems such as object recognition, observations from desired categories are always a minority while a majority of the observations are highly variable and cannot be clustered in any particular way. 

\begin{figure}[t]
\centering

\includegraphics[height=0.25\textwidth]{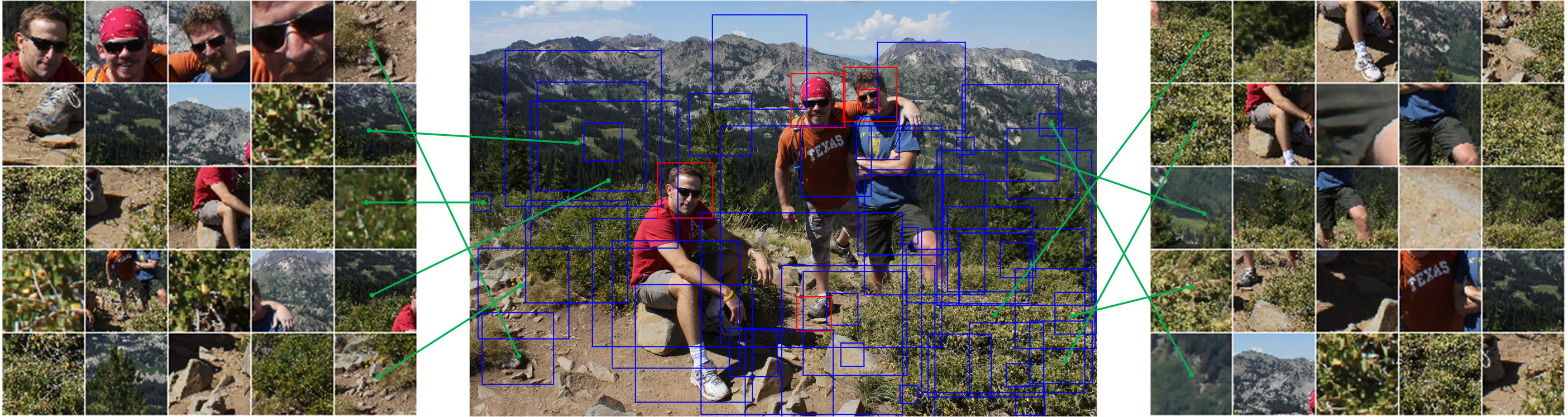}

\caption{Random patches cropped for an image. The red boxes contain some meaningful objects while the blue boxes are just irrelevant noise (background).}\label{fig:motivation}
\vspace{-3mm}
\end{figure} Furthermore, when designing algorithms for GMM, the prior knowledge or a reasonable estimate of the number of clusters is of great significance. However, in real image problems the total number of object clusters is very large, on the order of thousands and we are often interested in only a few of these clusters. This issue can be addressed by semi-supervised learning since assigning a label for a single example from a cluster makes it clear that the cluster of importance to us.

\begin{figure}[htb]
\vspace{-1mm}
\centering
\includegraphics[height=0.27\textwidth]{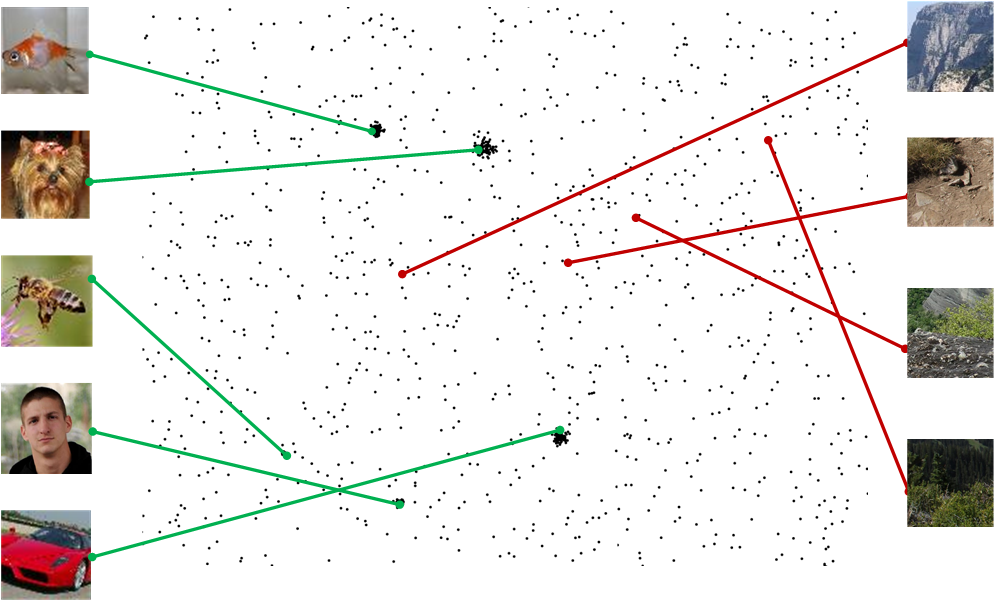}
\caption{Data from a mixture of five Gaussians plus a uniform background.}\label{fig:example2d}
\end{figure} 

In this paper we are interested in GMM corrupted by outliers. 
The motivation for this type of data comes from common problem of object detection of images. 
In this case, a sliding window extracts thousands or even millions of patches of different sizes from grid locations in the image. 
The patches are resized to the same size, for example $64\times 64$ pixels and a classifier is used to predict whether each patch contains an object of interest centered in it or not.

 As one could imagine, in the thousands of patches, only a small number correspond to objects of interest and the rest are just meaningless noise (background). 
Figure \ref {fig:motivation} illustrates this case. The red boxes are some clear objects like faces and shoes while the blue boxes are the random crops which don't form any meaningful object of interest. 

In this paper we will study one of the simplest models that exhibits this phenomenon, which we call a Gaussian Mixture Model with Uniform Background (GMMUB). The model consists of a Gaussian Mixture Model to represent the objects of interest (which we call {\em positives}), together with an extra mixture component that contains uniformly distributed observations in a large domain of the entire space for the background patches (called {\em negatives}). 
Usually, the negatives dominate the data with a large mixture proportion, as illustrated in Figure \ref{fig:example2d}. 

Figure \ref{fig:example2d} illustrated the distribution of the GMMUB in the observation space. There are five clusters  (e.g. human faces, dogs, fishes, bees and cars) and lots of background patches that don't cluster together.

Our model is somehow similar to \cite{browne2011model}, which is a mixture of GMM and uniform distributions, so it contains our GMMUB as a special case. 
However, our algorithm is different from the EM algorithm used in \cite{browne2011model}, as  it does not depend on an initialization and it enjoys better clustering performance results on different datasets .  

The contributions of this paper are the following:
\begin{itemize}
\item We introduce the Gaussian mixture model with uniform background (GMMUB) as a simple framework for handling the type of data existent in object detection problems in computer vision.
\item We introduce a novel clustering algorithm for this kind of data, which finds the positive clusters as local minima of a robust loss function, this way extracting them out of the uniform background. 
   This robust loss function is non-zero only within a certain distance from the cluster centers and zero everywhere else. 
 In this respect the robust loss function is similar to the negative of a kernel density function, where the kernel is a truncated quadratic.  A feature of the algorithm is that it does not rely on a well-chosen initialization. 
  Moreover, the process of loss minimization in our algorithm is quite simple and computationally efficient and avoids the problem of being trapped in local optima specific to gradient descent and EM based methods.
 \item We prove that our algorithm is able to correctly cluster all the positives with high probability under certain assumptions of separation and concentration, even when the majority of data comes from the uniform background.

\item We conduct experiments on simulated data and real data. 
The simulation results indicate that when the assumptions are met, our algorithm performs better than other clustering methods such as K-means, Spectral Clustering, Tensor Decomposition, etc. 
Furthermore,  experiments on real data indicate that our algorithm remains applicable and powerful on  real data applications when most of the assumptions are met.
\end{itemize}

\section{Related Work}
Research of GMM has advanced swiftly and vigorously with the advent of the information era.  
In 1977, the Expectation Maximization (EM) algorithm was formalized by \cite{dempster1977maximum}, marking the beginning of modern clustering algorithms regarding GMM. 
In 2000, Dasgupta and Schulman \cite{dasgupta2000two} built a framework for a two-step EM variant that has theoretical convergence guarantees. 
Since then, multiple algorithms have been proposed to improve the theoretical bounds and loosen the separation conditions. 
The spectral projection method from \cite{vempala2004spectral} improved the theoretical guarantees while the PAC learning method of \cite{feldman2006pac} makes no assumptions about the separation between the means of the Gaussians. 
Later on, \cite{kannan2008spectral} found another spectral method that can be applicable not only to GMM but also to mixtures of log-concave distributions. A polynomial-time algorithm for the case of two Gaussians was proposed in \cite{kalai2010efficiently} with provably minimal assumptions on the Gaussians and with polynomial data size requirements.

Tensor Decomposition (\cite {hsu2013learning}) is a spectral decomposition method based on low-order observable moments that has theoretical guarantees without additional separation conditions.
However, as shown experimentally in  this paper, this method is very sensitive to outliers, thus it does not work well on our uniform background setting. Furthermore, the method is not computationally efficient and has prohibitive computation cost for high dimensional data.

The above algorithms are designed for GMM or other distribution family models, and are known as distributions models. 
Aside from them,  there are other clustering methods that do not require specific distribution assumptions for the data. 
They actually measure the similarity between observations in different ways and perform clustering based on that similarity measure.
However, there is no universally accepted definition of the term "Clustering". 
From different points of view, different clustering algorithms can be divided into different categories. 
K-means clustering (\cite{hartigan1979algorithm,kanungo2002efficient,lloyd1982least}) and its variants are probably one kind of the most popular and widely-used clustering algorithms. 
Hierarchical Clustering (\cite{day1984efficient,johnson1967hierarchical}) builds a hierarchy of clusters with different distance metrics. 
They are typical distance-based clustering algorithms. 

DBSCAN (\cite{ester1996density}) is a representative of density models. 
Given a set of points, it clusters the points that have many nearby neighbors. 
It also marks the points that are not reachable from any other point as outliers.
Based on its properties, DBSCAN can obtain clusters with arbitrary shapes. 
Major variants for DBSCAN are l-DBSCAN(\cite{viswanath2006dbscan}), ST-DBSCAN (\cite{birant2007st}), C-DBSCAN(\cite{ruiz2007c}) and P-DBSCAN (\cite{kisilevich2010p}).

Another kind of density based clustering algorithm is KNN kernel density clustering. 
The most influential work is KNNCLUST(\cite {tran2006knn}). 
Some related work and variants are KNN-DBSCAN (\cite {sharma2017knn}), cKNN+ (\cite{gallego2018clustering}) and RECOME (\cite{geng2018recome}). 
KNNCLUST is based on a combination of KNN and Kernel Density Estimation. 
It is  similar to Partitional Clustering (\cite{tran2005clustering})with some modifications to make it more robust  and less sensitive to initialization. 
However,  KNNCLUST is not a perfect choice to estimate the true density for GMMUB, while our method does much better experimentally and has theoretical guarantees.

Spectral Clustering (\cite{shi2000normalized,ng2002spectral})  uses the eigenvectors of a similarity matrix for dimension reduction of the data before clustering. 

T-SNE (\cite {maaten2008visualizing}) is a typical data visualization and dimension reduction technique. It can be used for clustering, employing K-means or other clustering algorithms on the low-dimensional feature space obtained by T-SNE.

Though these methods may also be applied to GMM and other mixture models, as shown in our paper, their performance may be no better than the clustering algorithms that are specialized in clustering GMM data.

 The study of the convergence of  most GMM clustering algorithms is related to the initial value of the GMM parameters. 
 Many methods including EM can get stuck in local optima when the initialization is not close enough to the true values. This is why a good initialization is of great significance for many clustering algorithms.
There are some recent methods that try to overcome this drawback and provide good initialization methods.
K-means$++$ (\cite{arthur2007k}) chooses the initial centers in a fast and simple way and achieves certain theoretical guarantees that k-means cannot.

In \cite{karami2014choosing} is presented a hybrid clustering method based on DBSCAN that automatically specifies the appropriate parameter values.

\begin{table}[t]
\centering
\caption{Comparison between different clustering algorithms}\label{tab:alg}
\scalebox{0.73}{
\begin{tabular}{l|ccccc}
\hline
 &Convergence                    &Running                  &Theoretical                                  & Compatible w/ &Assumptions and   \Tstrut\\
Algorithm  &Rate &Time &Guarantee &Uniform Bgd. &Conditions\\
\hline
K-means with cluster shifting(\cite{pakhira2014linear})   &$-$ &$\mathcal{O}(n) $  & $\times$  & $\times$ &$-$                   \Tstrut\\

\hline
KNNCLUST(\cite{tran2006knn}) &$-$ &$\mathcal{O}(n\log(n) )$  & $\times$& $\times$  &$-$ \Tstrut\\

\hline
Batch K-means (\cite{bottou1995convergence})        &$-$      &$\mathcal{O}(n^2)$                             & $\checkmark$  & $\times$                &$-$                     \Tstrut\\ 
\hline
K-means$++$(\cite{arthur2007k})  &$-$     & $\mathcal{O}(n^{2}) $                               & $\checkmark$  & $\times$          &$-$ \Tstrut\\ 
\hline

Hierarchical Clustering (\cite{carlsson2010characterization})      &$-$       &$\mathcal{O}(n^3)$       & $\checkmark$  & $\times$ &Finite Metric Space\Tstrut    \\
\hline
Stochastic K-means (\cite{tang2016convergence}) &$-$ &$\mathcal{O}(n^{2}) $  &$\checkmark$   & $\times$ & Geometric Assumptions
\Tstrut\\ 
\hline
Spectral Clustering (\cite{von2008consistency}) &$\mathcal{O}(n^{-1/2})$   &$\mathcal{O}(n^3)$            & $\checkmark$      & $\times$                 &General Assumptions                          
\Tstrut\\
\hline

Tensor Decomposition(\cite{hsu2013learning} &$\mathcal{O}(n^{-1/2})$&$\mathcal{O}(n^3)$& $\checkmark$  & $\times$ &No outliers\\
\hline
EM (\cite{balakrishnan2017statistical})    &$\mathcal{O}(n^{-1/2})$  &$\mathcal{O}(nt) $         &  $\checkmark$  & $\times$  & Initialization close enough to MLE                          \Tstrut\\
\hline
EM  for GMM (\cite{balakrishnan2017statistical}) & $\mathcal{O}(n^{-1/2})$&$\mathcal{O}(nt) $  & $\checkmark$ & $\times$  &GMM with init. close enough to MLE\Tstrut\\
\hline

EM  for GMMUB (\cite{browne2011model},\cite{melchior2016filling}) &$ \mathcal{O}(n^{-1/2})$ & $\mathcal{O}(nt) $  & $\checkmark$ &$\checkmark$ &GMMUB\Tstrut\\
\hline
DBSCAN (\cite{sriperumbudur2012consistency})       &$\mathcal{O}(n^{-1/2})$   &$\mathcal{O}(n^2)$         & $\checkmark$  & $\checkmark$   &Holder Continuous Assumption \Tstrut\\
\hline
CRLM  (ours)  &$\mathcal{O}(n^{-1/2})$   &$\mathcal{O}(n^2)$            &$\checkmark$  & $\checkmark$                                                     &\hspace{-5mm}GMM+uniform, separation and coverage
\Tstrut\\
\hline
\end{tabular}}
\vspace{-6mm}
\end{table}

Some  papers have focused on EM algorithms for GMM with a small proportion of noise or outliers. 
For example, in \cite{browne2011model} and \cite{melchior2016filling} presented EM versions that can deal with noisy and incomplete GMM data samples.

In Table \ref{tab:alg} is shown  an overview of the various clustering methods described in this section as well as some of their variations. 
For each method is  shown the computation time in $\mathcal{O}$ notation, whether it has theoretical guarantees of convergence to the true parameters, the convergence rate to the true parameters, whether it is compatible to adding lots of uniform background points (outliers), and the assumptions made by the algorithm about the data.
For some of the algorithms, $t$ represents the number of iteration steps.
 Our algorithm is called CRLM, and we will see that it enjoys a fast convergence rate and an acceptable computational complexity. 

The K-means$++$ method enjoys certain theoretical guarantees, since it finds an optimum of the potential function $\displaystyle \phi=\sum_{x\in \mathcal{X}}\min_{c \in \mathcal{C}}||x-c||^2$ which is bounded by a factor of $\mathcal{O}(\log(k)) $ from the local optimum.
However, the actual  rate of convergence of the estimated parameters to the true model parameters is not clear.
 As for Hierarchical Clustering (\cite{carlsson2010characterization}), the stability and convergence of Hierarchical Clustering are established by measuring the Gromov-Hausdorff distance. Still, the actual  rate of convergence remains unclear. 
Hence, batch K-means, Hierarchical Clustering and K-means$++$, EM  for GMMUB are labeled as clustering methods with theoretical guarantees but without a convergence rate.

\section{Formulation and Algorithm} \label{sec:form}

The problem we are addressing is to cluster a set of unlabeled training examples 
$S=\{\bx_i\in \RR^d, i=1,...n\}$ coming from a mixture of $k$ isotropic Gaussians $\N(\bmu_j, \sigma_j^2I_d)$ with mixture weights $\pi_j$ plus a ``negative'' mixture component containing uniform samples from inside a large ball with radius $D\sqrt{d}$. An example for $d=2$ is shown in Figure \ref{fig:example2d}.

\subsection{Robust Loss Functions}

We will use the following robust loss function
\[
\ell(\bx,\sigma)=\min(\frac{\Vert \bx\Vert^{2}}{d\sigma^{2}}-G,0)
\]
where we fix $G=4$.
Observe that the loss function  is zero outside a ball of radius $R_\sigma=\sigma\sqrt{dG}$.
A graph of the loss for different values of $\sigma$ is given in Figure \ref{fig:loss_d}, left.
\begin{figure*}[htb]
\vspace{-3mm}
\centering
\includegraphics[width=5cm]{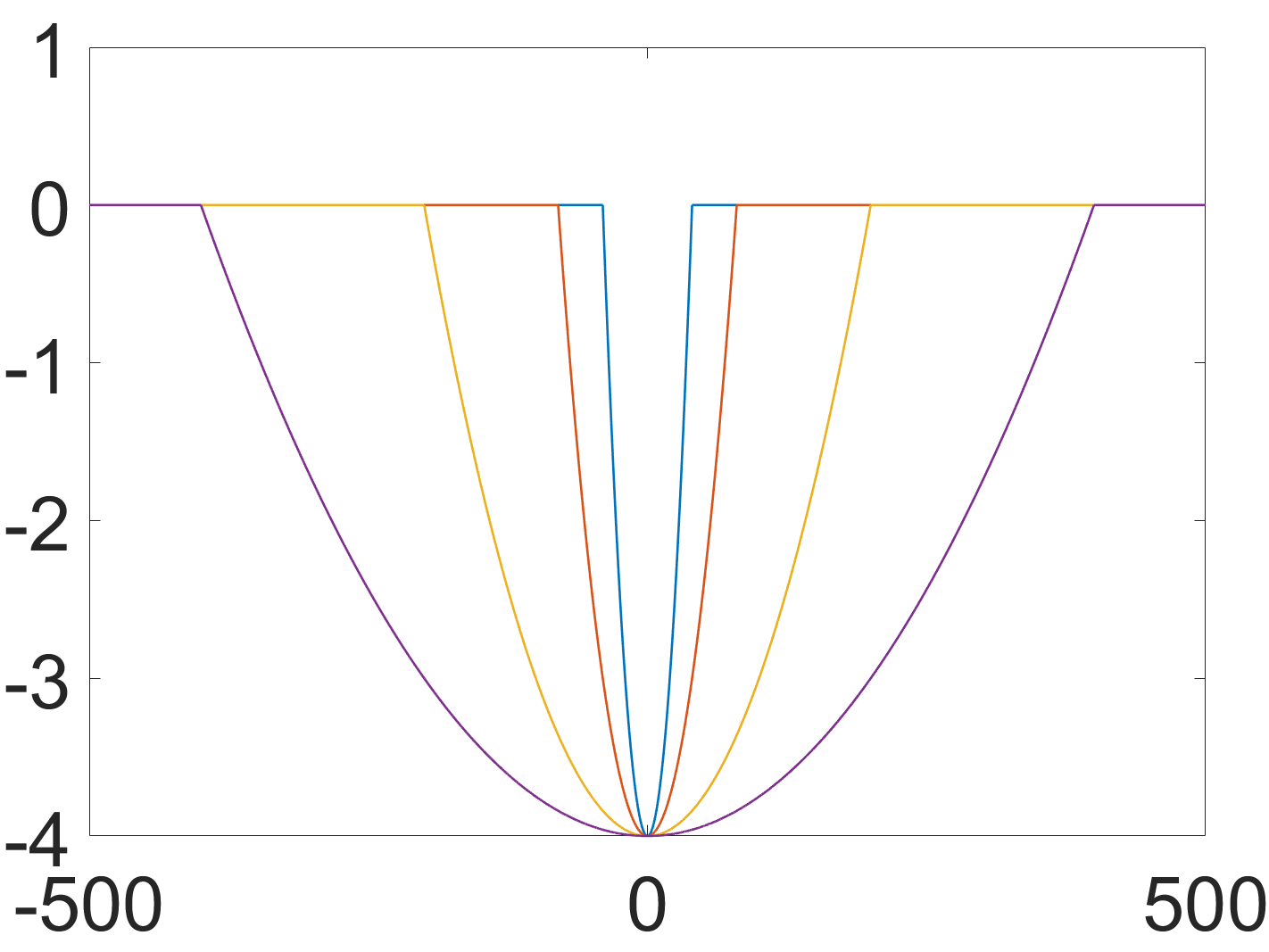}
\hspace{3mm}
\includegraphics[width=5cm]{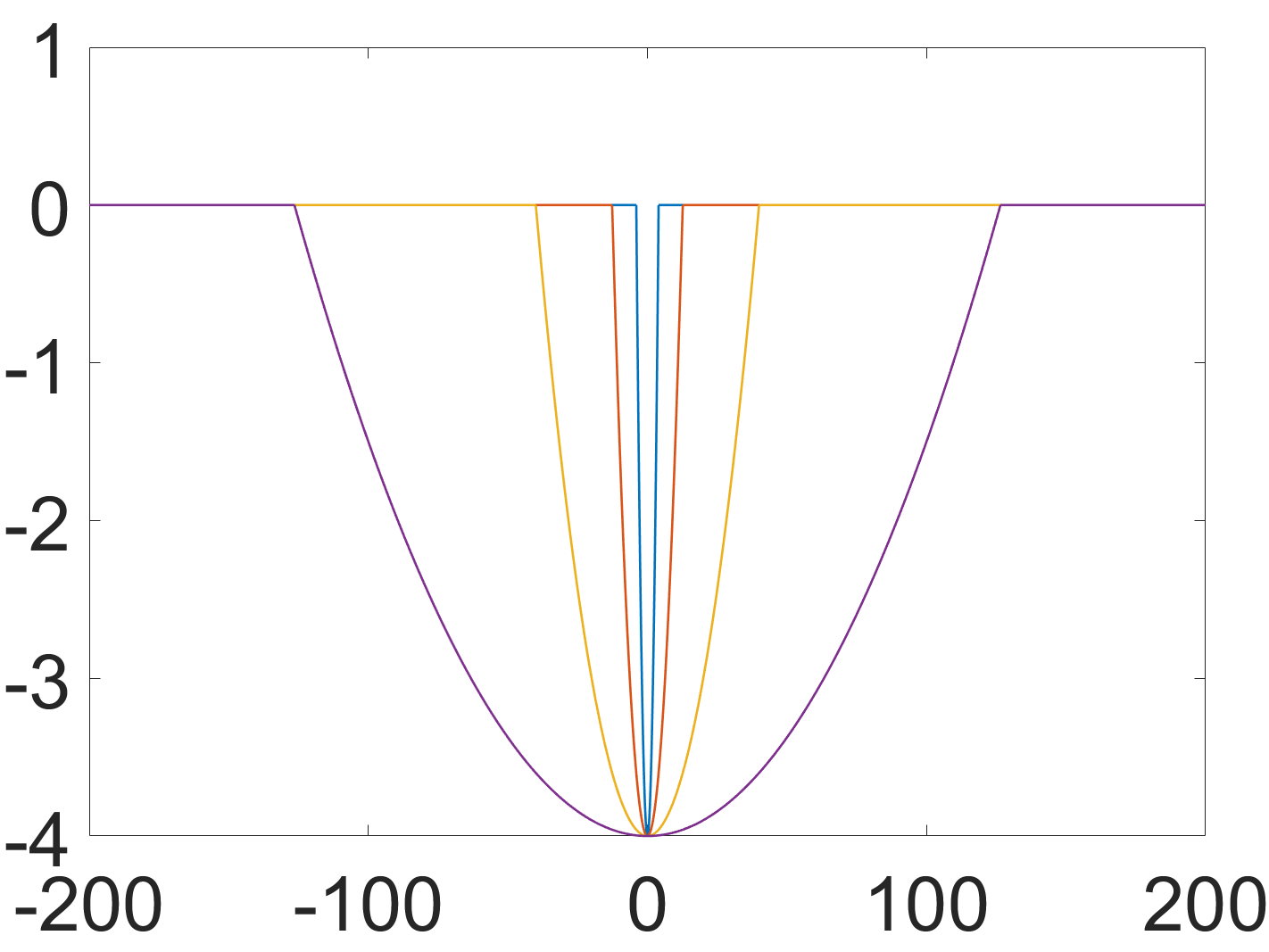}
\vskip -3mm
\caption{Left: The robust loss function $\ell(\bx,\sigma)$ for different values of $\sigma$. Right:
The robust loss function for different values of $d$.}\label{fig:loss_d}
\vspace{-6mm}
\end{figure*}

\subsection{Finding one Cluster By Loss Minimization and One Step Mean Shift}

The goal is to find the cluster parameters $({\bmu},\sigma)$ by minimizing the cost function:
\vspace{-2mm}
\begin{equation}
L(\bmu,\sigma)=\sum_{i=1}^N \ell(\bx_i -\bmu,\sigma) \label{eq:loss}
\vspace{-2mm}\end{equation}

For that, the cost function $L(\bmu,\sigma)$ is computed with center $\bmu$ at each training example and $\sigma=\sigma_{\max}$, a fixed value. The pair $(\bx_i,\sigma_{max})$ of minimum loss is then used as the initialization for one step of the mean shift algorithm. 
The algorithm is described in detail in Algorithm \ref{alg:closs}. 
\vspace{-2mm}
 \begin{algorithm}[htb]
   \caption{{\bf Finding One Cluster by Robust Loss Minimization (OCRLM)}}
   \label{alg:closs}
\begin{algorithmic}
   \STATE {\bfseries Input:} Training examples $S=\{\bx_i\in \RR^d, i=1,...,n\}$, initial standard deviation $\sigma_{\max}$.
   \STATE {\bfseries Output:} Cluster points C, cluster center $\hat \bmu$ and standard deviation $\hat \sigma$.
\end{algorithmic}
\begin{algorithmic} 
\STATE Find $i=\argmin_{i}L(\bx_{i},\sigma_{\max})$.
\STATE Obtain the positive cluster as 
\vspace{-2mm}
\[
                C=\{\bx\in S, \|\bx-\bx_i\|<\sqrt{dG}\sigma_{\max}\}
\vspace{-6mm}
 \]
 \IF {$|C|=1$}
                \STATE 
\vspace{-5mm}
                \[
                \hat \bmu=\bx_i, \hat \sigma=\sigma_{\max}
  \vspace{-2mm}
               \]
  \ELSE 
                \STATE 
\vspace{-5mm}
\[
\hat \bmu=\frac{1}{|C|}\sum_{\bx\in C}\bx, \;\;
               \hat  \sigma^2=\frac{1}{d(|C|-1)}\sum_{\bx\in C}\|\bx-\hat \bmu\|^2
 \vspace{-2mm}
               \]
  \ENDIF
\end{algorithmic}
\end{algorithm}

\subsection{Finding Multiple Clusters}

To find multiple clusters, the one cluster finding algorithm is called repeatedly, after each call eliminating the detected cluster points.

The first cluster by CRLM is regarded as the cluster with the largest clusterability in terms of minimization of robust loss function.  It is similar to  the cluster with minimal  distances within the points of the cluster.  
Unlike other methods that update all cluster means at the same time, CRLM finds the means of different clusters at different iterations. 
Another notable feature for CRLM is that it doesn't cluster all the point, but leaves the points that cannot be clustered to the background cluster. That is a key point why it can cluster data from a GMMUB model with high probability.
 \begin{algorithm}[htb]
   \caption{{\bf Clustering by  Robust Loss Minimization (CRLM)}}
   \label{alg:multclust}
\begin{algorithmic}
   \STATE {\bfseries Input:} Training examples $\{\bx_i\in \RR^d\}_{i=1}^{n}$, maximum number of clusters $k$,  initial standard deviation $\sigma_{\max}$.
   \STATE {\bfseries Output:} Cluster centers $\hat \bmu_j$ with standard deviation $\hat \sigma_j, j=1,...,k$.
\end{algorithmic}
\begin{algorithmic} 
        \FOR {j = 1 to $k$}
                \STATE Find cluster $(C_j,\hat \bmu_j,\hat \sigma_j)$ using OCRLM.
               \IF {$|C_j|=1$}
                \STATE break
               \ENDIF                
               \STATE Remove all observations $\bx_i\in C_j$.
      \ENDFOR
\end{algorithmic}
\end{algorithm}

\section{Main Results}
First, we will set up the notation used in this paper and the main assumptions used in the derivation of our main theorems.
\subsection{Notations}
In the rest of the paper we will use the following terms:
\begin{itemize}
\item $n$ - the number of observations
\item $k$ - the number of positive clusters 
\item $d$ - the dimension of the observations, $\bx_i\in \RR^d$.
\item $D\sqrt{d}$ - a bound for the norm of the observations to be clustered in $\RR^d$
\item $\pi_j$ - the true mixture weight of positive cluster $j$
\item $\bmu_j,\sigma_j$ - the true mean and  standard deviation of positive cluster $j$
\item $\hat\bmu_j, \hat \sigma_j$ - the estimated  mean and standard deviation of positive cluster $j$.
\item $S_{j}$ -  the points contained in the positive cluster $j$
\item $\sigma_{\max}$ - a large initial standard deviation for clustering
\item $G$ - a constant in the loss function, usually $G>1$. In this paper, for the experiments, we use $G = 4$
\item $R_\sigma= \sigma\sqrt{dG}$
\end{itemize}

\begin{figure*}[ht]
\centering
\includegraphics[height=4.cm]{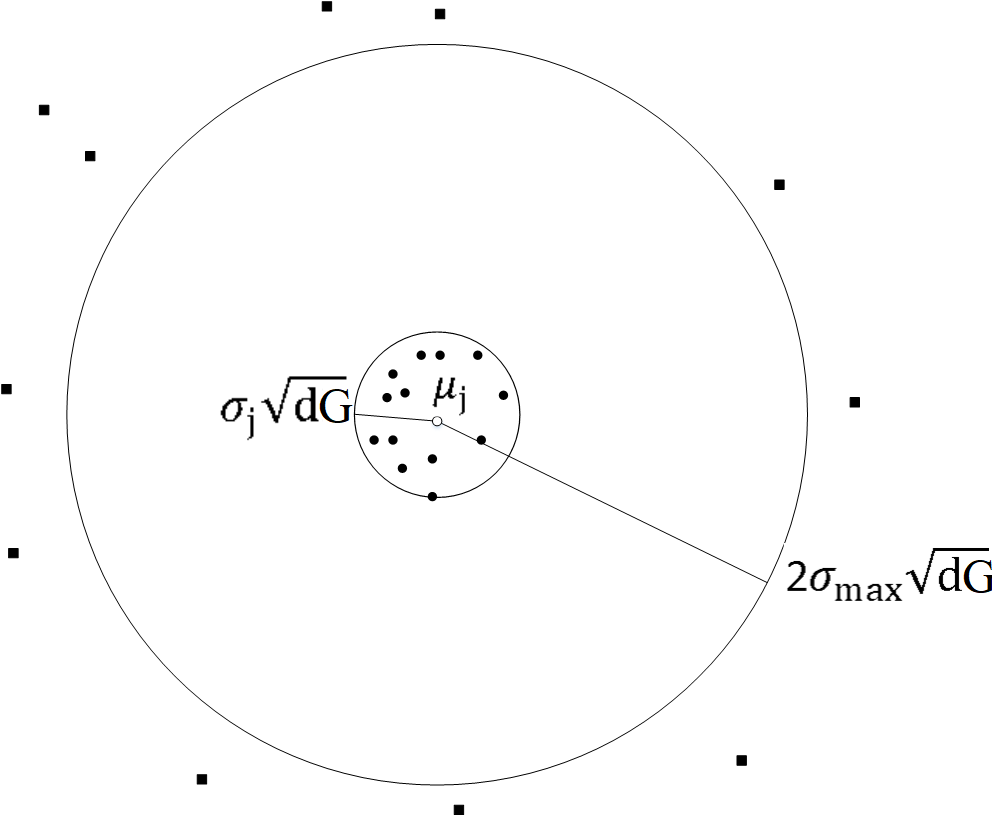}
\hspace{10mm}
\includegraphics[height=4.cm]{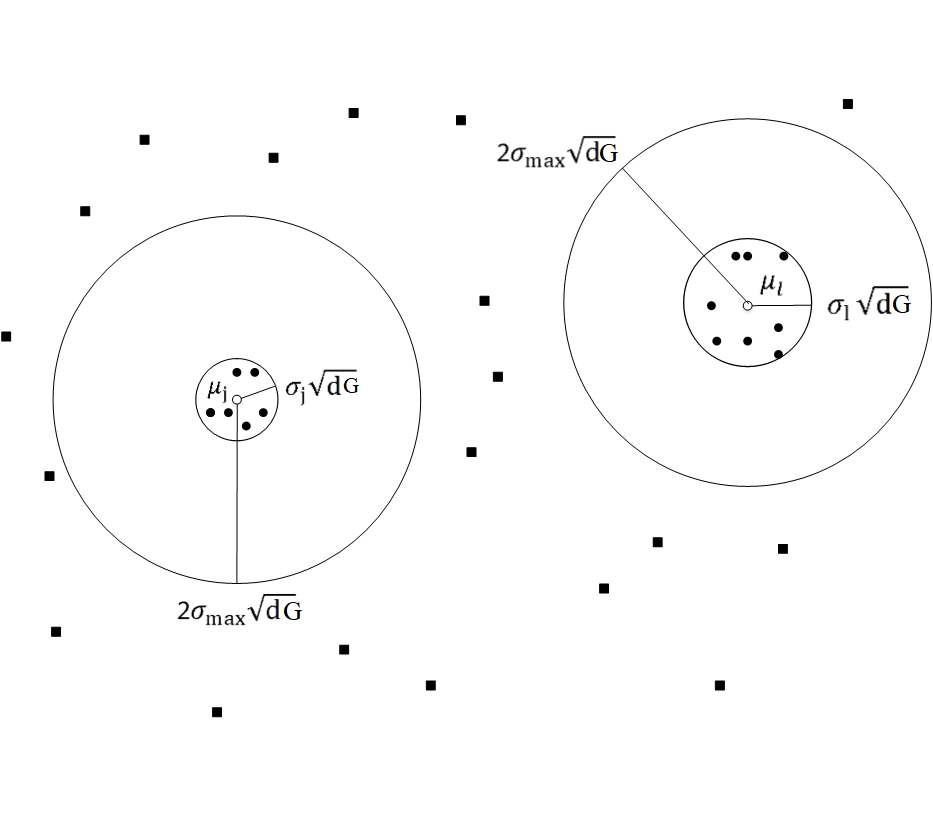}
\vskip -3mm
\caption{Diagram illustrating  C1 (left) and C2 (right). The positives are shown as circles and the negatives as squares.}\label{fig:Conditions}
\vspace{-3mm}
\end{figure*} 

\subsection{Assumptions} \label{sec:assump}
The following Separation and Concentration Conditions will be used in the proof of our main theorem. 
These conditions are illustrated in Figure \ref{fig:Conditions}. We will later show that these conditions happen with high probability.

\noindent{\bf C1: Separation Condition Between Positives and Negatives}: There are no negative points at a distance less than $R=\sigma_{\max}\sqrt{dG}$ from any positive point.

\noindent{\bf C2: Concentration  Condition for Positives}:
For any positive cluster $S_j$ with true mean $\bmu_j$ and covariance matrix $\sigma_j^2I_d$ we have
\[
 ||\bx_i-\bmu_j||<\sigma_j\sqrt{dG},\quad \forall \bx_i \in S_j.
 \]

To get an overall probability guarantee for C1 and C2, we have Proposition \ref{prop:allconditions}, based on the following assumptions:

\noindent{\bf A1:  Large $D$ assumption} 
\[
D>2\sigma_{\max}\sqrt{G} \text{ and } D\sqrt{d}>\|\bmu_j\|+2\sigma_{\max}\sqrt{dG} , \quad  \forall j \in\{1,...,k\}.
\]

\noindent{\bf A2: Separation Assumption Between Positive Clusters}

\[
||\bmu_l-\bmu_j||> 2\sigma_{\max}\sqrt{dG}, \quad  \forall j,l\in\{1,...,k\}, l\not = j.
\]

\noindent{\bf A3: Lower Bound  Assumption for $\sigma_{\max}$}

\[
\sigma_{\max}>2\sigma_j, \quad \forall j\in\{1,...,k\}.
\]

\begin{proposition}\label {prop:allconditions}
Consider  $n$ observations from a GMMUB of  $k$ isotropic Gaussians with mixture weights $\pi_1,\cdots,\pi_k$, true means $\bmu_1,\cdots,\bmu_k$  and variances $\sigma_{1}^2I_d,\cdots,\sigma_{k}^2I_d$ respectively,  and uniform distribution within a ball of radius $D\sqrt{d}$, with weight $\pi_{k+1}$. 
If A1 is satisfied, then C1 and C2 hold with probability at least $$ 1-2ne^{-d(G-1)^2/8}-nk(2\sigma_{\max}\sqrt{G}/D)^d.$$
\end{proposition}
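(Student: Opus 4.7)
My plan is to handle the two conditions C1 and C2 separately, bound each failure probability, and combine them with a union bound, relying on the fact that positive and negative samples come from disjoint mixture components.

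For C2, each positive point $\bx_i$ in cluster $j$ has $\|\bx_i-\bmu_j\|^2/\sigma_j^2 \sim \chi^2_d$. The event C2 fails for that point is exactly $\chi^2_d \ge dG$, and a standard chi-square tail inequality (e.g.\ Laurent-Massart, or a direct Chernoff argument giving a Hoeffding-type two-sided bound $P(|\chi^2_d-d|\ge d(G-1))\le 2e^{-d(G-1)^2/8}$) gives the per-point bound $2e^{-d(G-1)^2/8}$. Since there are at most $n$ positive points, a union bound yields $P(\text{C2 fails})\le 2n e^{-d(G-1)^2/8}$. This step is purely probabilistic and does not use A1, A2, or A3.

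For C1, I would use a geometric/volumetric argument on the uniform component, conditioning on C2 holding. When C2 holds, every positive point $\bx_i$ in cluster $j$ satisfies $\|\bx_i-\bmu_j\|<\sigma_j\sqrt{dG}$, and by A3 this is strictly less than $R=\sigma_{\max}\sqrt{dG}$. Consequently the ball $B(\bx_i,R)$ is contained in $B(\bmu_j,2R)$, so C1 fails only if some negative point lands in $\bigcup_{j=1}^k B(\bmu_j,2R)$. Assumption A1 guarantees $\|\bmu_j\|+2R \le D\sqrt{d}$, so each $B(\bmu_j,2R)$ lies inside the ambient ball $B(0,D\sqrt{d})$ where the negatives are uniformly distributed. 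The probability that a single negative falls in $B(\bmu_j,2R)$ is therefore the volume ratio $(2R/(D\sqrt{d}))^d = (2\sigma_{\max}\sqrt{G}/D)^d$. A union bound over at most $n$ negatives and $k$ clusters gives $P(\text{C1 fails} \cap \text{C2 holds})\le nk(2\sigma_{\max}\sqrt{G}/D)^d$.

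Combining via $P(\text{C1 fails or C2 fails})\le P(\text{C2 fails})+P(\text{C1 fails}\cap\text{C2 holds})$ produces the claimed bound $2ne^{-d(G-1)^2/8} + nk(2\sigma_{\max}\sqrt{G}/D)^d$. The only mildly subtle point is the geometric step linking C1 to A1: one has to notice that C2 allows us to replace the union of $R$-balls around the $N_j$ positive points of cluster $j$ with the single $2R$-ball around $\bmu_j$, so the number of balls in the union bound is $k$ rather than the (random) total number of positives. Everything else is a routine tail bound for $\chi^2$ and a volume calculation; assumption A2 plays no role here (it is needed later, to separate clusters from each other, not for C1/C2 as stated).
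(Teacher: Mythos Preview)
Your proof is correct and follows essentially the same route as the paper: bound the failure of C2 by a $\chi^2$ tail estimate union-bounded over the (at most) $n$ positives, bound the failure of C1 by showing the negatives avoid the balls $B(\bmu_j,2R)$ via a volume ratio and a union bound over $n$ points and $k$ centers, and then pass from the $\bmu_j$-centered balls to the $\bx_i$-centered balls by the triangle inequality (your ball-containment phrasing is exactly that step). One small remark: you invoke A3 to get $\sigma_j\sqrt{dG}<R$, while the proposition as stated assumes only A1; the paper's own proof has the same tacit reliance on $\sigma_j\le\sigma_{\max}$, so this is a shared omission in the hypotheses rather than a flaw in your argument.
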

\begin{proof}
Based on Lemma \ref{coverage} in the Appendix, C2 holds with probability at least $1-2ne^{-d(G-1)^2/8}$. This is mainly because for large $d$ the norm $\|\bx_i-\bmu_j\|$ for $\bx_i\in S_j$ is mostly concentrated around $\sigma_j\sqrt{d}$, as illustrated in Figure \ref{fig:gaussian_norm} and guaranteed by the proof of Lemma \ref{deltato0}.
\begin{figure*}[ht]
\centering
\includegraphics[width=10cm]{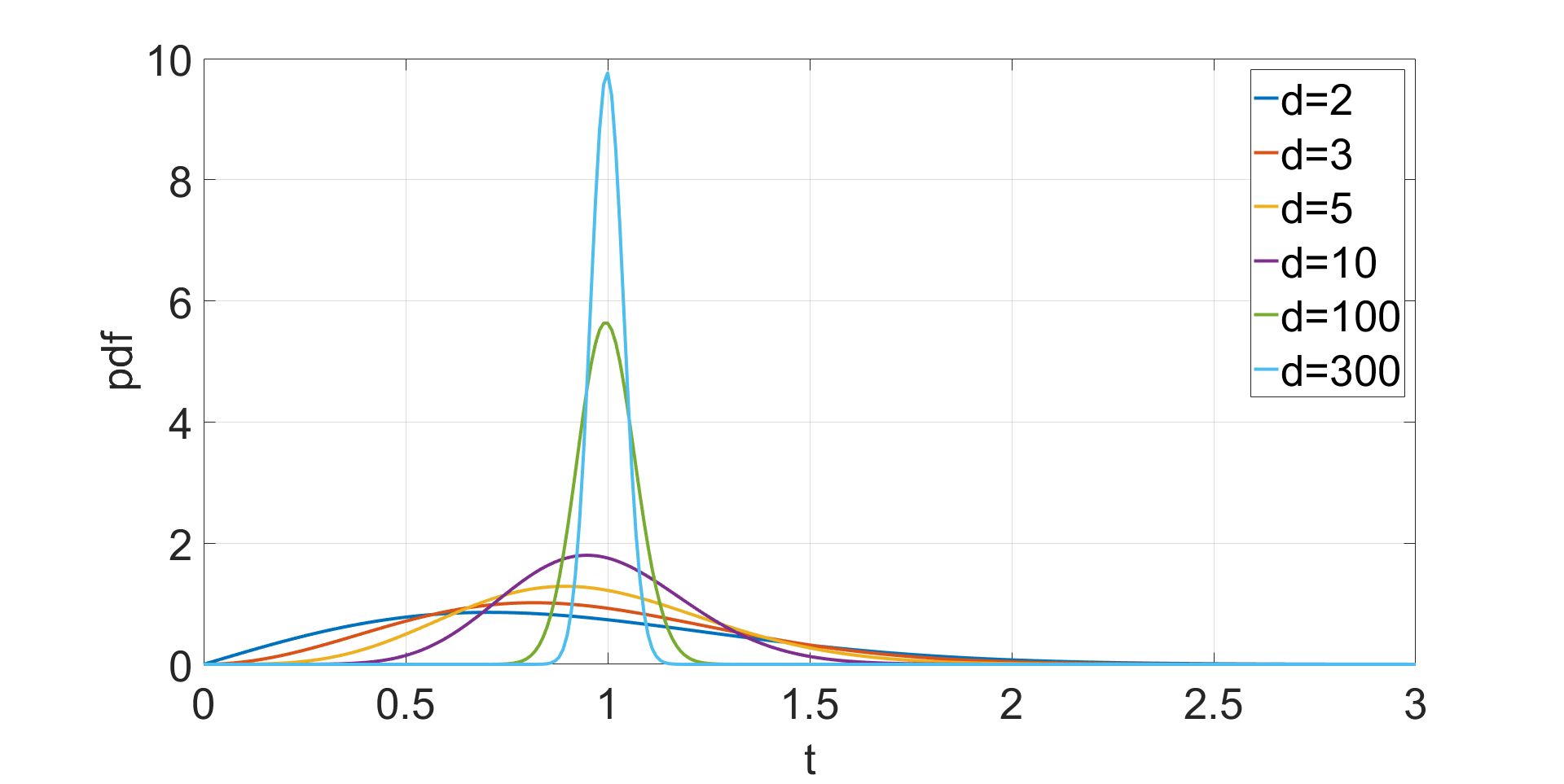}
\vskip -4mm
\caption{The pdf of $\frac{||\bx||}{\sqrt{d}}$ for $d$-dimensional normals $\bx\sim \N(0,I_d)$. Observe that this quantity is concentrated around 1 for large $d$.}\label{fig:gaussian_norm}
\vspace{-2mm}
\end{figure*} 

If A1 is satisfied then let $\bx_m$ be any  negative point. Based on Lemma \ref{C1bounds} , $$P(||\bx_m-\bmu_j||>2\sigma_{\max}\sqrt{dG}, \forall m, \forall j)\geq 1-nk(2\sigma_{\max}\sqrt{G}/D)^d.$$

Then, for any positive point $\bx_i\in S_j$, and for any  negative point $\bx_m$, we have:
\[
||\bx_m-\bx_i||>||\bx_m-\bmu_j||-||\bmu_j-\bx_i||>2\sqrt{dG}\sigma_{\max}-\sqrt{dG}\sigma_{\max}=\sqrt{dG}\sigma_{\max}.
\]
Therefore, C1-C2 hold with probability at least $1-nk(2\sigma_{\max}\sqrt{G}/D)^d-2ne^{\frac{-d(G-1)^2}{8}}$ if A1 holds.
\end{proof}

From C2, two other important results that will be useful for the proof of the main theorem have been derived in Lemma \ref{original condition} in the Appendix.


\subsection{ Theoretical Guarantees }

We start by giving  theoretical guarantees for OCRLM, assuming there is only one Gaussian cluster.

\begin{proposition}\label{prop:oneclust}
Let $\bx_{i}\in \RR^d, i=1,...,n$ be $n$ observations sampled from a mixture of a Gaussian $\N(\bmu_1, \sigma_1^2I_d)$  with weight $\pi_1$ and a uniform distribution inside the ball of radius $D\sqrt{d}$ centered at $0$.
 If for a given $\sigma_{\max}$, C1 and C2 are satisfied and  
\[
\pi_1>\frac{({\sigma_{\max}}\sqrt{G}/D)^dG}{(G-\frac{(1+G)\sigma_1^2}{\sigma_{\max}^2})(d/2+1)+({\sigma_{\max}}\sqrt{G}/D)^dG},
\]
 then with probability at least $1-2n\exp(-nW^2/2G^2)$, OCRLM will cluster all the observations correctly, where
 \begin{equation}
W= \pi_1\left(G-\frac{(1+G)\sigma_1^2}{\sigma_{\max}^2}\right)-(1-\pi_1)\frac{G}{d/2+1}({\sigma_{\max}}\sqrt{G}/D)^d \label{eq:w}
.\end{equation}
\end{proposition}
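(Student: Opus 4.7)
The plan is to argue, on a high-probability event, that (i) the minimizer $i^{*}=\argmin_{i} L(\bx_{i},\sigma_{\max})$ is a positive point $\bx_{i^{*}}\in S_{1}$, and (ii) the ball $\{\bx:\|\bx-\bx_{i^{*}}\|<R\}$ with $R=\sigma_{\max}\sqrt{dG}$ intersects the data in exactly $S_{1}$. Part (ii) is the routine half: C1 rules out every negative from that ball, and C2 together with the deterministic consequences packaged in Lemma \ref{original condition} places every positive inside it. The substance of the proof is part (i), which I will handle by comparing, at the level of expectations, the loss at a fixed positive centre to the loss at an arbitrary negative centre, and then using Hoeffding's inequality to close the gap. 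Throughout, it is essential that every pointwise contribution $\ell(\cdot,\sigma_{\max})$ is bounded in $[-G,0]$, which makes Hoeffding directly applicable.

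Fix any $\bx_{i}\in S_{1}$. By C1 every negative contributes $0$ to $L(\bx_{i},\sigma_{\max})$, so $L(\bx_{i},\sigma_{\max})$ is a sum of $n-1$ iid bounded contributions of which only those coming from other positives are nonzero. Using the pointwise bound $\ell(\bx,\sigma_{\max})\leq \|\bx\|^{2}/(d\sigma_{\max}^{2})-G$, the identity $E[\|\bx_{j}-\bx_{i}\|^{2}\mid \bx_{i}]=d\sigma_{1}^{2}+\|\bx_{i}-\bmu_{1}\|^{2}$ for $\bx_{j}\sim\N(\bmu_{1},\sigma_{1}^{2} I_{d})$, and the C2 bound $\|\bx_{i}-\bmu_{1}\|^{2}\leq \sigma_{1}^{2} dG$, I obtain $E[\ell(\bx_{j}-\bx_{i},\sigma_{\max})\mid \bx_{i}]\leq (1+G)\sigma_{1}^{2}/\sigma_{\max}^{2}-G$ whenever $\bx_{j}\in S_{1}$. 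Summed over the $\approx \pi_{1} n$ positives other than $\bx_{i}$, this yields $E[L(\bx_{i},\sigma_{\max})\mid \bx_{i}]\leq -\pi_{1} n\bigl[G-(1+G)\sigma_{1}^{2}/\sigma_{\max}^{2}\bigr]$, which is the first summand of $-nW$ and in particular explains where the $(1+G)$ coefficient comes from.

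Now fix any negative $\bx_{m}$. C1 again kills every positive contribution, so only other negatives feed into $L(\bx_{m},\sigma_{\max})$. A uniform sample in $B(0,D\sqrt{d})$ falls in the ball of radius $R$ around $\bx_{m}$ with probability $(\sigma_{\max}\sqrt{G}/D)^{d}$ (the required containment of this smaller ball inside the support is secured by A1, as already used in Proposition \ref{prop:allconditions}), and on that event the conditional expectation of $\ell$ equals $-G/(d/2+1)$ via the standard identity $E\|X\|^{2}=dr^{2}/(d+2)$ for $X$ uniform on a ball of radius $r$. Multiplying gives $E[L(\bx_{m},\sigma_{\max})\mid \bx_{m}]\geq -(1-\pi_{1})n\cdot\frac{G}{d/2+1}(\sigma_{\max}\sqrt{G}/D)^{d}$, the second summand of $-nW$, so the expected gap satisfies $E[L(\bx_{m},\sigma_{\max})]-E[L(\bx_{i},\sigma_{\max})]\geq nW$.

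To close, apply Hoeffding's inequality to each centre separately with deviation $t=nW/2$: the per-centre tail is at most $2\exp(-nW^{2}/(2G^{2}))$, and a union bound over one fixed positive centre and the at most $n$ negative centres gives the stated failure probability $2n\exp(-nW^{2}/(2G^{2}))$. On the good event $L(\bx_{m},\sigma_{\max})>L(\bx_{i},\sigma_{\max})$ for every negative $\bx_{m}$, so the argmin is a positive, and the cluster extraction in part (ii) finishes the argument. The main obstacle I expect is the usual tightness of a two-sided concentration argument: the two Hoeffding deviations must sum to strictly less than $nW$, so the strict inequality in the hypothesis on $\pi_{1}$ is essential — it is what guarantees $W>0$ and simultaneously supplies the slack needed to turn the mean-level gap into a strict comparison between the realized losses.
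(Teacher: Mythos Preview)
Your proposal is correct and follows essentially the same route as the paper (Lemma~\ref{lem:loss1neg} and Corollary~\ref{Loss comparison}): bound the expected loss gap between a fixed positive centre and each negative centre by $nW$, apply Hoeffding's inequality, union bound over the negatives, and then use C1--C2 together with Lemma~\ref{original condition} to recover the cluster exactly. The only cosmetic difference is that the paper applies Hoeffding directly to the difference $L(\bx_l,\sigma_{\max})-L(\bx_m,\sigma_{\max})$ (summands in $[-G,G]$, deviation $t=nW$) rather than to each loss separately with $t=nW/2$; the resulting exponent $-nW^2/(2G^2)$ is identical either way.
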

The proof of this proposition is given in the Appendix. 
 This proposition assumes that conditions C1 and C2 are satisfied but the following theorem replaces these conditions with assumptions A1 and A3.

\begin{theorem}\label{thm:onecluster}
Let $\bx_{i}\in \RR^d, i=1,...,n$ be $n$ observations sampled from a mixture of a Gaussian $\N(\bmu_1, \sigma_1^2I_d)$  with weight $\pi_1$ and a uniform distribution inside the ball of radius $D\sqrt{d}$ centered at $0$. If  A1 and A3 are satisfied and for  a given $\sigma_{\max}$ such that
\begin{equation}
\pi_1>\frac{({\sigma_{\max}}\sqrt{G}/D)^dG}{(G-\frac{(1+G)\sigma_1^2}{\sigma_{\max}^2})(d/2+1)+({\sigma_{\max}}\sqrt{G}/D)^dG}, \label{eq:wtcond1}
\end{equation}
then OCRLM will cluster all observations correctly with probability at least 
\begin{equation}
1-2n\exp(-nW^2/2G^2)-2ne^{-d(G-1)^2/8}-n({2\sigma_{\max}}\sqrt{G}/D)^d,\label{eq:probbnd1}
\end{equation} where $W$ has been defined in Eq. \eqref{eq:w}.
\end{theorem}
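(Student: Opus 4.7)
The plan is to derive Theorem 1 by combining the two preceding results: Proposition \ref{prop:oneclust}, which handles the clustering under the deterministic conditions C1 and C2, and Proposition \ref{prop:allconditions}, which shows C1 and C2 hold with high probability under the distributional assumption A1. Since Theorem \ref{thm:onecluster} only involves a single Gaussian cluster, Proposition \ref{prop:allconditions} will be instantiated with $k=1$, which cleans up the negative--negative union bound term nicely.

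First, I would invoke Proposition \ref{prop:allconditions} in the case $k=1$. Assumption A1 is explicitly part of the hypothesis, so this proposition yields
\[
P(\text{C1 and C2 hold}) \;\geq\; 1-2n e^{-d(G-1)^2/8} - n\bigl(2\sigma_{\max}\sqrt{G}/D\bigr)^d,
\]
which matches exactly the two extra subtracted terms in \eqref{eq:probbnd1}. Next, conditional on C1 and C2 holding, I would apply Proposition \ref{prop:oneclust}; its hypotheses on $\pi_1$, $D$, and $\sigma_{\max}$ coincide with condition \eqref{eq:wtcond1}, so OCRLM clusters all observations correctly with conditional probability at least $1-2n\exp(-nW^2/2G^2)$, with $W$ as in \eqref{eq:w}. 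Combining the two events by a straightforward union bound then gives precisely the probability \eqref{eq:probbnd1} claimed in the theorem.

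The one item requiring a small verification is the role of assumption A3 (namely $\sigma_{\max}>2\sigma_1$), which is not stated in Proposition \ref{prop:oneclust} but is needed here. I would check that A3 ensures the quantity $G-(1+G)\sigma_1^2/\sigma_{\max}^2$ appearing in the denominator of the $\pi_1$ bound and in $W$ is strictly positive, so that condition \eqref{eq:wtcond1} is sensible and $W$ is a well-defined positive quantity. For $G=4$, A3 gives $\sigma_{\max}^2>4\sigma_1^2$, hence $(1+G)\sigma_1^2/\sigma_{\max}^2 < 5/4 < G$, as required; more generally $(1+G)/4 < G$ holds as soon as $G>1/3$, which is easily satisfied. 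This also guarantees that the $\pi_1$--threshold in \eqref{eq:wtcond1} lies in $(0,1)$, so the hypothesis on $\pi_1$ is non-vacuous.

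The main (and essentially only) obstacle is therefore bookkeeping: making sure that the two failure events being unioned are the right ones, and that the assumptions of the theorem (A1, A3, and \eqref{eq:wtcond1}) together imply the hypotheses of both Proposition \ref{prop:allconditions} and Proposition \ref{prop:oneclust}. Once this alignment is checked, the theorem follows immediately as a two-line union-bound argument, with no new probabilistic or geometric estimates required beyond those already established.
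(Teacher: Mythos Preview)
Your proposal is correct and follows essentially the same two-step union-bound argument as the paper: invoke Proposition~\ref{prop:allconditions} (with $k=1$) under A1 to guarantee C1--C2 with the stated probability, then apply Proposition~\ref{prop:oneclust} on that event. Your extra remark on the role of A3 in making $G-(1+G)\sigma_1^2/\sigma_{\max}^2>0$ (hence $W>0$) is a welcome clarification that the paper leaves implicit.
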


\begin{proof} Based on Prop \ref{prop:oneclust}, when C1 and C2 are satisfied,  
OCRLM clusters all observations correctly with probability at least $1-2n\exp(-nW^2/2G^2)$.

Using Prop \ref{prop:allconditions}, it is clear that the probability that both C1 and C2 hold is at least $1-n({2\sigma_{\max}}\sqrt{G}/D)^d-2ne^{-d(G-1)^2/8}$ when A1 and A3 are satisfied.

Hence, OCRLM correctly clusters all observations with probability at least 
 $$1-2n\exp(-nW^2/2G^2)-2ne^{-d(G-1)^2/8}-n({2\sigma_{\max}}\sqrt{G}/D)^d.
 $$
\end{proof}
When there is only one positive cluster, OCRLM will be employed 1 time to find all the positive points.
When  the dimension $d$ of the data and the number $n$ of observations are large enough, the probability in Theorem \ref{thm:onecluster} can converge to 1.

To generalize Theorem  \ref{thm:onecluster} to $k$ positive clusters, we need to prove  that  Statement 1 and Statement 2 from Lemma \ref{original condition} hold with high probability. For that we generalize Proposition \ref{prop:oneclust} to multiple Gaussians.

\begin{proposition}\label{prop:multiclust}
Let $\bx_{i}\in \RR^d, i=1,...,n$ be $n$ observations sampled  from a mixture of $k$  isotropic Gaussians with means $\bmu_1,\cdots,\bmu_k$, covariance  matrices $\sigma_1^2 I_{d},\cdots,\sigma_k^2I_{d}$, weights $\pi_1,\cdots,\pi_k$ and the uniform distribution within a ball of radius $D\sqrt{d}$ centered at the origin, with weight $\pi_{k+1}$, so that  $\pi_1+\cdots\pi_k+\pi_{k+1}=1$. 
 Assume C1-C2 and A1-A3 hold.. If  $\forall j \in \{1,\cdots k\} $, $${\pi_j}>\frac{(G/(d/2+1))({\sigma_{\max}}\sqrt{G}/D)^d}{(G-\frac{(1+G)\sigma_j^2}{\sigma_{\max}^2})+(G/(d/2+1))({\sigma_{\max}}\sqrt{G}/D)^d},$$ 
then CRLM will correctly cluster all the points  with probability at  least  \[1-2nk\exp(-n\min_j W_j^2/2G^2),\] where
 \begin{equation}
W_j= \pi_j(G-\frac{(1+G)\sigma_j^2}{\sigma_{\max}^2})-\pi_{k+1}({\sigma_{\max}}\sqrt{G}/D)^d\frac{G}{d/2+1}\label{eq:w_j}.
\end{equation}
\end{proposition}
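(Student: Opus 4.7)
The plan is to reduce the multi-cluster statement to Proposition \ref{prop:oneclust} inductively: CRLM invokes OCRLM up to $k$ times, removing the cluster it finds each time, so it suffices to show that with high probability every single invocation of OCRLM (i) picks the minimizer of $L(\bx_i,\sigma_{\max})$ at a true positive point, and (ii) recovers exactly one of the Gaussian clusters $S_j$, leaving the remaining data as a smaller GMMUB to which the same argument applies.

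The geometric step uses A2, A3, C1, and C2 together. By C2 every $\bx_i\in S_j$ lies within $\sigma_j\sqrt{dG}$ of $\bmu_j$, which by A3 is at most $R/2$ where $R=\sigma_{\max}\sqrt{dG}$; by A2 the true centers are pairwise more than $2R$ apart; and by C1 no negative lies within $R$ of any positive. Since $\ell(\cdot,\sigma_{\max})$ is supported in a ball of radius $R$, the value $L(\bx_i,\sigma_{\max})$ at any $\bx_i\in S_j$ depends only on the points of $S_j$ plus the negatives falling inside $B(\bx_i,R)$. This localizes the multi-cluster loss landscape around each true mean to exactly the one-Gaussian-plus-uniform landscape already analyzed in Proposition \ref{prop:oneclust}; Lemma \ref{original condition} (its Statements 1 and 2) encapsulates this reduction and supplies the expected-loss gap $W_j$ between a true cluster center and any candidate non-cluster point.

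Next I would run the induction. At iteration $t$, assume the preceding $t-1$ iterations correctly extracted $t-1$ Gaussian clusters. The residual dataset of $n_t\le n$ points is a GMMUB on the remaining Gaussians plus the background, with renormalized weights $\pi_l'=\pi_l/(1-\sum_{s<t}\pi_{j_s})\ge \pi_l$ and $\pi_{k+1}'=\pi_{k+1}/(1-\sum_{s<t}\pi_{j_s})$; in particular the threshold hypothesis on $\pi_l$ and the gap $W_l$ are only strengthened by the rescaling. Applying the Hoeffding-style concentration used in the proof of Proposition \ref{prop:oneclust} to this reduced instance then shows that OCRLM correctly identifies one remaining Gaussian cluster with failure probability at most $2n\exp(-n W_l^2/(2G^2))$, using the original (more conservative) $n$ and $W_l$. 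A union bound over the $k$ iterations and over the worst cluster yields the stated overall bound $1-2nk\exp(-n\min_j W_j^2/(2G^2))$.

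The main obstacle is the disjointness and localization step: without A2 and A3 it would not be true that the ball of radius $R$ around the selected positive point contains exactly one Gaussian cluster and no other positives, and in that case both the expected-loss gap $W_j$ and the reduction to the one-cluster Hoeffding argument could break. Once this geometric reduction is secured, the rest is bookkeeping: tracking the iterative removal, verifying that each residual instance still satisfies the hypotheses of Proposition \ref{prop:oneclust}, and finally taking the union bound.
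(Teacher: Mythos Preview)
Your strategy is correct and essentially matches the paper's. Both arguments hinge on the same two ingredients: (i) the geometric localization from C1, C2, A2, A3 (packaged in Lemma~\ref{original condition}), which guarantees that the ball of radius $R_{\sigma_{\max}}$ around any positive $\bx_l\in S_j$ contains exactly $S_j$ and no negatives or other positives, so that $L(\bx_l,\sigma_{\max})$ depends only on $S_j$ while $L(\bx_m,\sigma_{\max})$ at any negative $\bx_m$ depends only on negatives; and (ii) a Hoeffding bound showing the resulting loss gap is at least $nW_j$ with high probability, followed by a union bound over the $k$ calls to OCRLM.

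The one place where your write-up diverges from the paper is the inductive ``residual GMMUB with renormalized weights'' step. The paper does not re-invoke Proposition~\ref{prop:oneclust} on the residual data; instead it proves a multi-cluster loss-comparison lemma (Lemma~\ref{lem:loss1negmult} and Corollary~\ref{Loss comparisonmult}) directly on the original $n$ points, where localization makes the other positive clusters contribute zero to the loss difference, yielding the gap $W_j$ with $\pi_{k+1}$ (rather than $1-\pi_j$) in the negative term. Your renormalization framing is a detour and carries a subtle risk: after removing a data-dependently chosen cluster, the residual is not a fresh i.i.d.\ sample from a smaller mixture, so Proposition~\ref{prop:oneclust} does not literally apply to it, and the claim that ``the original $n$ and $W_l$ are more conservative'' would need its own justification. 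The fix is already implicit in your localization paragraph: because points of $S_{j'}$ contribute zero to both $L(\bx_l)$ and $L(\bx_m)$ whenever $\bx_l\in S_j$ with $j\neq j'$ and $\bx_m$ is negative, the loss comparison computed on the residual is identical to the one computed on the full sample, so the Hoeffding bound proved once on all $n$ points persists verbatim across iterations. With that observation the renormalized-weight bookkeeping becomes unnecessary, and your proof collapses to the paper's.
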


The proof of this Proposition is given in the Appendix.

Based on this Proposition, we obtain a theorem for finding multiple positive clusters:

\begin{theorem}\label{thm:multclust}
Let $\bx_{i}\in \RR^d, i=1,...,n$ be $n$ observations sampled  from a mixture of  isotropic GMM with means $\bmu_1,\cdots,\bmu_k$, covariance  matrix $\sigma_1^2 I_{d},\cdots,\sigma_k^2I_{d}$, weight $\pi_1,\cdots,\pi_k$ and uniform distribution within radius $D\sqrt{d}$, with weight $\pi_{k+1}$. $\pi_1+\cdots\pi_k+\pi_{k+1}=1$.  
If  $\forall j \in \{1,\cdots k\} $,  A1-A3  and $G>1$ are satisfied and
$$
{\pi_j}>\frac{(G/(d/2+1))({\sigma_{\max}}\sqrt{G}/D)^d}{(G-\frac{(1+G)\sigma_j^2}{\sigma_{\max}^2})+(G/(d/2+1))({\sigma_{\max}}\sqrt{G}/D)^d},$$  then CRLM correctly clusters all the positives with probability at least  
$$1-2nk\exp(-n\min_j W_j^2/2G^2)-nk ({2\sigma_{\max}}\sqrt{G}/D)^d-2ne^{-d(G-1)^2/8},$$  
where $W_j$ has been defined in Eq. \eqref{eq:w_j}.

\end{theorem}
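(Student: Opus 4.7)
The plan is to mirror the one-cluster argument in Theorem \ref{thm:onecluster}: first invoke Proposition \ref{prop:multiclust} to control the clustering error conditional on the geometric conditions C1 and C2, then invoke Proposition \ref{prop:allconditions} to control the probability that C1 and C2 themselves hold under the distributional assumptions A1--A3, and finally combine the two bounds by a union bound. No new probabilistic estimate is needed; the content of the theorem is essentially a clean rewriting of what has already been proved, with the conditions C1--C2 replaced by the (verifiable) assumptions A1--A3.

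In more detail, I would proceed as follows. First, observe that Proposition \ref{prop:multiclust} asserts that, on the event that C1 and C2 both hold, CRLM correctly recovers every positive cluster with probability at least $1-2nk\exp(-n\min_j W_j^2/2G^2)$, provided each $\pi_j$ satisfies the threshold
\[
\pi_j>\frac{(G/(d/2+1))(\sigma_{\max}\sqrt{G}/D)^d}{(G-(1+G)\sigma_j^2/\sigma_{\max}^2)+(G/(d/2+1))(\sigma_{\max}\sqrt{G}/D)^d},
\]
which is exactly the lower bound assumed in the theorem statement. Note that A3 ($\sigma_{\max}>2\sigma_j$) guarantees that the denominator term $G-(1+G)\sigma_j^2/\sigma_{\max}^2$ is positive for $G=4$, so the bound on $\pi_j$ is well-defined, and A2 ($\|\bmu_l-\bmu_j\|>2\sigma_{\max}\sqrt{dG}$) together with A1 is what allows the hypotheses of Proposition \ref{prop:multiclust} to be invoked for each cluster in turn after the previous one has been peeled off.

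Next, apply Proposition \ref{prop:allconditions}, which under A1 (and implicitly using the concentration described in Lemma \ref{coverage}) tells us that C1 and C2 both hold with probability at least
\[
1 - 2n\exp(-d(G-1)^2/8) - nk(2\sigma_{\max}\sqrt{G}/D)^d.
\]
Combining the two statements by a union bound gives the advertised probability
\[
1-2nk\exp(-n\min_j W_j^2/2G^2)-nk(2\sigma_{\max}\sqrt{G}/D)^d-2n\exp(-d(G-1)^2/8),
\]
with $W_j$ defined by \eqref{eq:w_j}, completing the argument.

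Because the heavy lifting has already been carried out in Propositions \ref{prop:oneclust}, \ref{prop:multiclust} and \ref{prop:allconditions}, the only real work is bookkeeping. The main thing to be careful about, and arguably the only obstacle, is verifying that the conditioning on C1--C2 is compatible with the independent concentration bound used inside Proposition \ref{prop:multiclust}: concretely, one has to make sure that the Hoeffding-type bound giving the $2nk\exp(-n\min_j W_j^2/2G^2)$ term is valid simultaneously for all $k$ successive peel-off steps (hence the factor $k$ from a union bound over clusters), and that when a cluster is removed, the remaining data still satisfies C1--C2 for the subsequent iterations -- which follows from A2 since removing points cannot destroy separation. Once this is checked, the theorem follows directly from the union bound above.
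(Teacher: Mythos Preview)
Your proposal is correct and follows essentially the same approach as the paper: invoke Proposition~\ref{prop:multiclust} for the clustering guarantee conditional on C1--C2, invoke Proposition~\ref{prop:allconditions} for the probability that C1--C2 hold under A1--A3, and combine by a union bound. Your additional remarks about the peel-off steps preserving C1--C2 are more careful than the paper's own two-line proof, but the structure is identical.
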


\begin{proof}

We already showed that C1 and C2 hold with probability at least $1-nk ({2\sigma_{\max}}\sqrt{G}/D)^d-2ne^{-d(G-1)^2/8}$, when A1-A3 hold.
According to Prop \ref{prop:multiclust}, if C1 and C2 hold, CRLM will cluster all the points correctly with probability at least $1-2nk\exp(-n\min_j W_j^2/2G^2).$ 
Hence, when A1-A3 are satisfied, CRLM correctly clusters all observations with probability at least 
 $$1-nk ({2\sigma_{\max}}\sqrt{G}/D)^d-2ne^{-d(G-1)^2/8}-2nk\exp(-n\min_j W_j^2/2G^2).$$
\end{proof}



Based on the result, we have the following Corollary \ref{cor:convergencerate} for the convergence of CRLM in terms of the norm of the distance between the estimated means and the true means.

\begin{corollary}\label{cor:convergencerate}
Let $\bx_{i}\in \RR^d, i=1,...,n$ be $n$ observations sampled  from a mixture of  isotropic GMM with means $\bmu_1,\cdots,\bmu_k$, covariance  matrices $\sigma_1^2 I_{d},\cdots,\sigma_k^2I_{d}$, weights $\pi_1,\cdots,\pi_k$ and a uniform distribution within radius $D\sqrt{d}$, with weight $\pi_{k+1}$, so that $\pi_1+\cdots\pi_k+\pi_{k+1}=1$.  
If A1-A3  hold, and
$$
{\pi_j}>\frac{(G/(d/2+1))({\sigma_{\max}}\sqrt{G}/D)^d}{(G-\frac{(1+G)\sigma_j^2}{\sigma_{\max}^2})+(G/(d/2+1))({\sigma_{\max}}\sqrt{G}/D)^d}, \forall j \in \{1,\cdots k\},
$$  
 denote $\hat{\bmu_j}$ as the estimated mean of the $j$-th positive cluster by CRLM. Then with probability at least 
\[
1-2nke^{-\frac{n\min_j W_j^2}{2G^2}}
\hspace{-1mm} -\hspace{-0.4mm}2\hspace{-1mm}\sum_{j=1}^{k}e^{
-\frac{d}{8}(\frac{2\pi_j\max_j\sigma_j^2}{\sigma_j^2\min_j\pi_j}-1)^2
}\hspace{-1mm}-\hspace{-1mm}
\sum_{j=1}^{k}e^{-\frac{n\pi_j^2}{2}}\hspace{-1mm}-nk \left(\frac{2\sigma_{\max}\sqrt{G}}{D}\right)^d
\hspace{-2mm}-2ne^{\frac{-d(G-1)^2}{8}},\]
we have 
\[
||\hat{\bmu_j}-{\bmu_j}||^2<\frac{4d\max_j\sigma_j^2}{n\min_j\pi_j}, \forall j,
\]
 where $W_j$ has been defined in Eq. \eqref{eq:w_j}.
\end{corollary}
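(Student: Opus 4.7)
The plan is to chain three high-probability events via union bounds: correct clustering, sufficient per-cluster sample size, and Gaussian concentration of each empirical mean. First, I would invoke Theorem \ref{thm:multclust} directly: under A1--A3 and the stated $\pi_j$ lower bounds (which are identical to those of Theorem \ref{thm:multclust}), CRLM assigns every positive observation to its true component with probability at least
\[
1 - 2nk\exp\!\left(-n\min_j W_j^2/(2G^2)\right) - nk(2\sigma_{\max}\sqrt{G}/D)^d - 2n e^{-d(G-1)^2/8}.
\]
This accounts for the last three deductions in the claimed probability, and on this event $\hat{\bmu}_j$ is exactly the sample mean of the $n_j$ true members of cluster $j$.

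Conditional on correct clustering, write $\hat{\bmu}_j = n_j^{-1}\sum_{\bx_i\in S_j}\bx_i$, where $n_j$ is Binomial$(n,\pi_j)$. Hoeffding's inequality gives $P(n_j \leq n\pi_j/2) \leq \exp(-n\pi_j^2/2)$; union-bounding over $j$ supplies the $\sum_j e^{-n\pi_j^2/2}$ term and guarantees $n_j \geq n\pi_j/2 \geq n\min_\ell\pi_\ell/2$ for every $j$. On this intersection, $(n_j/\sigma_j^2)\|\hat{\bmu}_j-\bmu_j\|^2 \sim \chi_d^2$. A short algebraic manipulation using $n_j\geq n\pi_j/2$ shows that the target inequality $\|\hat{\bmu}_j-\bmu_j\|^2 < 4d\max_\ell\sigma_\ell^2/(n\min_\ell\pi_\ell)$ is implied by $\chi_d^2 < d(1+t_j)$ with $t_j = \frac{2\pi_j\max_\ell\sigma_\ell^2}{\sigma_j^2\min_\ell\pi_\ell} - 1$. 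A chi-squared tail bound of the form $P(\chi_d^2 \geq d(1+t)) \leq 2\exp(-dt^2/8)$ followed by one more union bound over $j$ then supplies the remaining $2\sum_j \exp(-dt_j^2/8)$ term in the claim, and the three events are combined by the standard conditional-probability decomposition.

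The main obstacle is the chi-squared tail bound in the regime relevant here. Since $\max_\ell \sigma_\ell^2 \geq \sigma_j^2$ and $2\pi_j \geq 2\min_\ell\pi_\ell$, one has $1 + t_j \geq 2$, hence $t_j \geq 1$, whereas the textbook form $P(\chi_d^2 \geq d(1+t)) \leq e^{-dt^2/8}$ is most cleanly stated for $t\in[0,1]$. To close this gap, one either appeals to the full Laurent--Massart inequality $P(\chi_d^2 - d \geq 2\sqrt{du}+2u) \leq e^{-u}$, choosing $u$ so that $2\sqrt{du}+2u = dt_j$ and verifying that the resulting exponent can be cast in the desired quadratic form (up to adjusting constants in the probability statement), or one uses a sub-exponential concentration argument, in which the quadratic regime still dominates in the parameter range under consideration. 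Once this technical point is dispatched, the rest of the proof is routine union-bound accounting.
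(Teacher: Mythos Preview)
Your proposal is correct and follows essentially the same route as the paper: invoke Theorem~\ref{thm:multclust} for correct clustering, Hoeffding for $n_j\geq n\pi_j/2$, and a Gaussian-norm concentration bound on $\|\hat{\bmu}_j-\bmu_j\|$, then combine by union bound. For the chi-squared step the paper simply applies its Lemma~\ref{deltato0} (stated as $P(\|\bx\|^2/d>G)<2e^{-d(G-1)^2/8}$ for all $G\geq 1$) without addressing the $t\geq 1$ regime you flag, so your extra care there is warranted but does not alter the overall argument.
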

\begin{proof}
Based on Proposition \ref{prop:allconditions}, C1-C2 hold with probability at least \[1-nk \left(\frac{2\sigma_{\max}\sqrt{G}}{D}\right)^d-2ne^{-d(G-1)^2/8}.\]
Based on Theorem \ref{thm:multclust} and Lemma \ref{original condition}, when C1-C2 and A1-A3 hold, then with probability at least $1-2nk\exp(-n\min_j W_j^2/2G^2),$
$\hat{\bmu_j}= \bmu_j^{'}$, $\forall j \in \{1,\cdots,k \}$, where $\bmu_j^{'}$ is the sample mean for $j$-th positive cluster.

Let $\epsilon=\sqrt{\frac{4\max_j\sigma_j^2}{\min_j\pi_j}\frac{d}{n}}$. 
It suffices to show that $||\bmu_j^{'}-\bmu_j||_2<\epsilon$, $\forall j$ ,  with certain  probability. Denote by $n_j$ the true number of positive points in cluster $j$.

Since $n_j \sim$ Binomial $(n, \pi_j)$, the Hoeffding's inequality generates the bound $$P(n_j<n\pi_j/2)\leq \exp\left(-\frac{n\pi_j^2}{2}\right)$$

Therefore, with probability at least $1-\sum_{j=1}^{k}\exp\left(-\frac{n\pi_j^2}{2}\right)$, we have that $n_j>n\pi_j/2$, $\forall j$.

If $\epsilon=\sqrt{\frac{4\max_j\sigma_j^2}{\min_j\pi_j}\frac{d}{n}}$, we therefore have that 
$\epsilon > \frac{\sigma_j\sqrt{2d}}{\sqrt{n\pi_j}} >\frac{\sigma_j\sqrt{d}}{\sqrt{n_j}}$, $\forall j$ thus $\hat{\epsilon}_j=\frac{\sqrt{n_j}}{\sigma_j}\epsilon > \sqrt{d}$, for any $j$. Observe that 
$\bmu_j^{'} \sim N(\bmu_j, \frac{\sigma_j^2}{n_j}  I_d)$, therefore $\frac{\sqrt{n_j}}{\sigma_j}(\bmu_j^{'}-\bmu_j) \sim N (0,I_d)$. Let $z =\frac{\sqrt{n_j}}{\sigma_j}(\bmu_j^{'}-\bmu_j) $, therefore, $G^{*} =\frac{\hat{\epsilon}^2}{d} >1$ . By Lemma \ref{deltato0} we have:
 \begin{eqnarray*}
P(||z||_2>\hat{\epsilon})&=&P\left (||\bmu_j^{'}-\bmu_j||_2>\epsilon\right)\\
& = & P (\frac{||z||_2}{\sqrt{d}}>\frac{\hat{\epsilon}}{\sqrt{d}}) = P (\frac{||z||_2^2}{d}>G^{*} ) \\ 
&<&2e^{-d(G^{*}-1)^2/8}=2\exp{(-\frac{d}{8}(\frac{n_j\epsilon^2}{d\sigma_j^2}-1)^2)}\\
\end{eqnarray*}

Hence, if CRLM obtains  right clusters, for  $j$, with probability at least $ 1-2\exp{(-\frac{d}{8}(\frac{n_j\epsilon^2}{d\sigma_j^2}-1)^2)}$, $||\bmu_j^{'}-\bmu_j||_2<\epsilon$. For all $j$, we have that:  $||\bmu_j^{'}-\bmu_j||_2<\epsilon$, $\forall j$,   with probability at least $ 1-2\sum_{j=1}^{k}\exp{(-\frac{d}{8}(\frac{n_j\epsilon^2}{d\sigma_j^2}-1)^2)}$, but since $n_j>n\pi_j/2$, $\forall j$, we have
\[
\sum_{j=1}^{k}2 \exp{(-\frac{d}{8}(\frac{n_j\epsilon^2}{d\sigma_j^2}-1)^2)}<2\sum_{j=1}^{k}\exp{(-\frac{d}{8}(\frac{n\pi_j\epsilon^2}{2d\sigma_j^2}-1)^2)}.
\]

Putting it all together,  when A1-A3 hold, we have that $||\hat{\bmu_j}-\bmu_j||_2<\epsilon, \forall j$,  with probability at least
$$1-2nke^{-\frac{n\min_j W_j^2}{2G^2}}\hspace{-1mm}-\hspace{-0.4mm}2\hspace{-1mm}\sum_{j=1}^{k}e^{
-\frac{d}{8}(\frac{2\pi_j\max_j\sigma_j^2}{\sigma_j^2\min_j\pi_j}-1)^2
}\hspace{-1mm}-\hspace{-1mm}
\sum_{j=1}^{k}e^{-\frac{n\pi_j^2}{2}}\hspace{-1mm}-nk \left(\frac{2\sigma_{\max}\sqrt{G}}{D}\right)^d
\hspace{-2mm}-2ne^{\frac{-d(G-1)^2}{8}}.$$
\end{proof}

Observe that the probabilities for correct clustering in Theorem \ref{thm:multclust} and Corollary \ref{cor:convergencerate} decrease as the sample size $n$ increases if the space dimension $d$ is fixed. This is because the ball where all observations are assumed to reside is bounded, and it becomes crowded as $n\to \infty$. However, we will see that this is not a problem for practical applications. Furthermore, by imposing some mild constraints on $d$ and $D$ we can make sure that every probability bound converges to 1.

\begin{corollary}\label{cor:probcheck}
Based on the conditions of Corollary \ref{cor:convergencerate}, let $c=\min(\frac{(G-1)^2}{8},\log \frac{D}{2\sigma_{\max}\sqrt{G}}) >0$. If $cd>2\log(n) $, then for any $\hat{\epsilon} >0$, there exists $N>0$, such that when $n>N$, the probability bound from  Corollary \ref{cor:convergencerate} satisfies 
\[
1-2nke^{-\frac{n\min_j W_j^2}{2G^2}}
\hspace{-1mm} -\hspace{-0.4mm}2\hspace{-1mm}\sum_{j=1}^{k}e^{
-\frac{d}{8}(\frac{2\pi_j\max_j\sigma_j^2}{\sigma_j^2\min_j\pi_j}-1)^2
}\hspace{-1mm}-\hspace{-1mm}
\sum_{j=1}^{k}e^{-\frac{n\pi_j^2}{2}}\hspace{-1mm}-nk \left(\frac{2\sigma_{\max}\sqrt{G}}{D}\right)^d
\hspace{-2mm}-2ne^{\frac{-d(G-1)^2}{8}} > 1- \hat{\epsilon}\]
\end{corollary}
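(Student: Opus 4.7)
The plan is to show that each of the five subtracted terms in the bound from Corollary \ref{cor:convergencerate} tends to zero as $n\to\infty$; then, given $\hat\epsilon>0$, choosing $N$ large enough so that the finite sum of these terms is below $\hat\epsilon$ completes the argument. Observe first that since $c>0$ is a fixed constant and the hypothesis is $cd>2\log n$, the dimension $d$ must grow at least logarithmically with $n$; in particular $d\to\infty$ as $n\to\infty$, which is essential for the terms that decay only in $d$.

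The key observation is that $c$ is designed precisely to dominate the two slowest-decaying terms, namely $nk(2\sigma_{\max}\sqrt{G}/D)^d$ and $2n\exp(-d(G-1)^2/8)$. Writing $(2\sigma_{\max}\sqrt{G}/D)^d=\exp(-d\log(D/(2\sigma_{\max}\sqrt{G})))$, both terms have exponent at most $-cd$ by the definition of $c$, and the constraint $cd>2\log n$ yields $\exp(-cd)<n^{-2}$. Hence these two terms are bounded respectively by $k/n$ and $2/n$, both of which vanish as $n\to\infty$.

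For the remaining three terms the decay is immediate. The terms $2nk\exp(-n\min_j W_j^2/(2G^2))$ and $\sum_j\exp(-n\pi_j^2/2)$ decay super-polynomially in $n$, since $W_j>0$ is ensured by the lower bound on $\pi_j$ in Corollary \ref{cor:convergencerate} and each $\pi_j>0$. The term $2\sum_j\exp(-\frac{d}{8}(\frac{2\pi_j\max_j\sigma_j^2}{\sigma_j^2\min_j\pi_j}-1)^2)$ decays exponentially in $d$, because $\frac{\pi_j}{\min_j\pi_j}\cdot\frac{\max_j\sigma_j^2}{\sigma_j^2}\geq 1$ forces the inner ratio to be at least $2$, so the exponent is bounded above by $-d/8$; since $d\to\infty$ with $n$, this too tends to $0$. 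I anticipate no real obstacle: the argument is essentially a bookkeeping verification that $c$ was defined precisely to neutralize the bottleneck terms, namely those in which $d$ appears only in the base of a power rather than in the exponent of an exponential.
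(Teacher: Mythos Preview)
Your proposal is correct and follows essentially the same approach as the paper: both arguments show that each of the five terms tends to zero, using the definition of $c$ to control the two ``slow'' terms $nk(2\sigma_{\max}\sqrt{G}/D)^d$ and $2ne^{-d(G-1)^2/8}$, and observing that the inner ratio in the second term is at least $2$ so that its exponent is bounded by $-d/8$. Your handling of the fourth and fifth terms is in fact slightly cleaner than the paper's---you directly obtain the bounds $k/n$ and $2/n$ from $e^{-cd}<n^{-2}$, whereas the paper reverts to an explicit $\epsilon$--$N$ argument for each.
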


\begin{proof}
Take $\tilde{\epsilon} = \hat{\epsilon}/5$, it suffices to show each term in the bound is smaller than $\tilde{\epsilon} $.

Since $ \lim_{n\to \infty} 2nke^{-\frac{n\min_j W_j^2}{2G^2}} =0 $, $\exists N_1>0$, when $n>N_1$,   $2nke^{-\frac{n\min_j W_j^2}{2G^2}} < \tilde{\epsilon} $.

For the second term we have
 \begin{eqnarray*}
\lim_{n\to \infty} \exp{({
-\frac{d}{8}(\frac{2\pi_j\max_j\sigma_j^2}{\sigma_j^2\min_j\pi_j}-1)^2
})}&\leq &\lim_{n\to \infty} \exp{(-\frac{d}{8})}\leq \lim_{n\to \infty} \frac{1}{8n}=0
\end{eqnarray*}

Therefore, $\exists N_2>0$, when $n>N_2$,   $\sum_{j=1}^{k}e^{
-\frac{d}{8}(\frac{2\pi_j\max_j\sigma_j^2}{\sigma_j^2\min_j\pi_j}-1)^2
} < \tilde{\epsilon} $.

It is clear that $ \lim_{n\to \infty} \sum_{j=1}^{k}e^{-\frac{n\pi_j^2}{2}}=0 $, so $\exists N_3>0$, when $n>N_3$,   $\sum_{j=1}^{k}e^{-\frac{n\pi_j^2}{2}} <\tilde{\epsilon}$.

For the fourth term, let $C=\log \frac{D}{2\sigma_{\max}\sqrt{G}}>0$. Then $nk \left(\frac{2\sigma_{\max}\sqrt{G}}{D}\right)^d < \tilde{\epsilon} $ is equivalent to $dC>\log(nk/\tilde{\epsilon})$. Since $dc>\log(n)$, then $dC>2\log (n)$ so  $\exists N_4>0$ so that when $n>N_4$, $dC>2\log(n)>\log(n) + \log(k/\tilde{\epsilon})=\log(nk/\tilde{\epsilon})$.

Finally, $2ne^{\frac{-d(G-1)^2}{8}} < \tilde{\epsilon} $ is equivalent to $d\frac{(G-1)^2}{8}>\log(2n/\tilde{\epsilon})$. But since $dc>2\log(n)$ then $\exists N_5>0$ such that when $n>N_5$ we have $d\frac{(G-1)^2}{8}>2\log(n)>\log(n)+\log(2/\tilde{\epsilon})=\log(2n/\tilde{\epsilon})$.

To sum it up, taking $N = \max(N_1,N_2,N_3,N_4,N_5)$, we have that when $n>N$,
\[
1-2nke^{-\frac{n\min_j W_j^2}{2G^2}}
\hspace{-0.4mm}-2\hspace{-0.4mm}\sum_{j=1}^{k}e^{
-\frac{d}{8}(\frac{2\pi_j\max_j\sigma_j^2}{\sigma_j^2\min_j\pi_j}-1)^2
}
\hspace{-2mm}-\sum_{j=1}^{k}e^{-\frac{n\pi_j^2}{2}}-nk \left(\frac{2\sigma_{\max}\sqrt{G}}{D}\right)^d
\hspace{-1mm}-2ne^{\frac{-d(G-1)^2}{8}} > 1-5\tilde{\epsilon} = 1- \hat{\epsilon}\]
\end{proof}

The purpose of Corollary \ref{cor:probcheck} is to show that in order for the probability bound from Corollary \ref{cor:convergencerate} to stay high, we need the dimension $d$ to grow on the order of $\log(n)$.  

For example, one typical setting in our experiments is that $G=4$, $k=1$, $\sigma_{max} = 10, \sigma_1 =1$.
The constant $D$ is an important parameter as well and in the experiment, $D$ is usually large enough compared to 
  $2\sigma_{\max}\sqrt{G}$. A very common choice is $D=100$. 
Then in the case when the dimension $d =500$,  using the results from Corollary \ref{cor:probcheck}, we can achieve a $99.99\%$ probability in Corollary \ref{cor:convergencerate}, as long as the sample size is $n<3.0 \times 10^{99}$, which sufficient for all practical applications. Observe that in computer vision applications, the dimension $d$ of the representation space (obtained using convolutional neural networks) is typically in the interval $d\in [1024,4096]$, obtaining an even larger maximum sample size.

\subsection{ Computational Complexity}
In CRLM, for each iteration we run OCRLM once. Hence, the complexity of CRLM is just $k$ times the complexity of OCRLM. For OCRLM, in the worst case,  it suffices to calculate $\ell(\bx_i - \bx_j, \sigma_{\max}), i,j \in \{1,\cdots n\}.$ Hence the computational complexity for CRLM is 
$\mathcal{O}(kn^2d)$. Since $k$ and $d$ are fixed, the computational complexity of CRLM is therefore $\mathcal{O}(n^2).$

\section{Experiments}
In this section, we will perform experiments to compare our method with other clustering algorithms. 
First of all, we conduct experiments on synthetic data and show an analysis of the effect of $\sigma_{\max}$ on the clustering results, and a method to find the number of clusters $k$. 
Finally, we perform experiments on several kinds of real image data to show the value of our algorithm in real applications.

The algorithms involved in the comparison of the experiments are: K-means (\cite{lloyd1982least}), where we use the K-means++ version (\cite{arthur2007k}), 
DBSCAN\footnote{from \url{https://www.mathworks.com/matlabcentral/fileexchange/52905-dbscan-clustering-algorithm}} (\cite{dempster1977maximum}), 
Complete Linkage Clustering (CL) (\cite{johnson1967hierarchical}) based on Euclidean distance, 
EM (\cite{dempster1977maximum}), 
Spectral Clustering (SC)\footnote{from \url{https://www.mathworks.com/matlabcentral/fileexchange/26354-spectral-clustering-algorithms}} (\cite{ng2002spectral,shi2000normalized}),
Tensor Decomposition (TD) \cite{hsu2013learning}, T-SNE \cite {maaten2008visualizing}, KNN-CLUST(KC)\cite{tran2006knn}
 and CRLM.

For Spectral Clustering we choose the method from \cite{ng2002spectral}.

For Tensor Decomposition (TD), we use an implementation of Theorem 2 from \cite{hsu2013learning}.

For KNNCLUST(KC), we implement the method based on the MATLAB implementation in the toolbox as specified in \cite {tran2006knn}. We select the best result with different  $k$.

We use MATLAB for most of the experiments. 
 For K-means, we use the built-in function 'kmeans' from Matlab that actually implements K-means++.
 For Hierarchical Clustering, we use the built-in function 'linkage' with complete linkage based on Euclidean distance.
For T-SNE + K-means, the Matlab built-in functions 'tsne' and 'kmeans' were employed.

In terms of EM, a standard EM for GMM is used for Table \ref{Kimia216}.  For all other experiments regarding EM, we derived updates for GMMUB by adding a likelihood term for the uniform background and choose the result with largest likelihood from 10 different initializations.

\subsection{Simulation Experiments}

In this section we perform experiments on data coming from a GMMUB that satisfies  A1-A3.

\begin{figure}[t]\centering
\includegraphics[height=3.5cm]{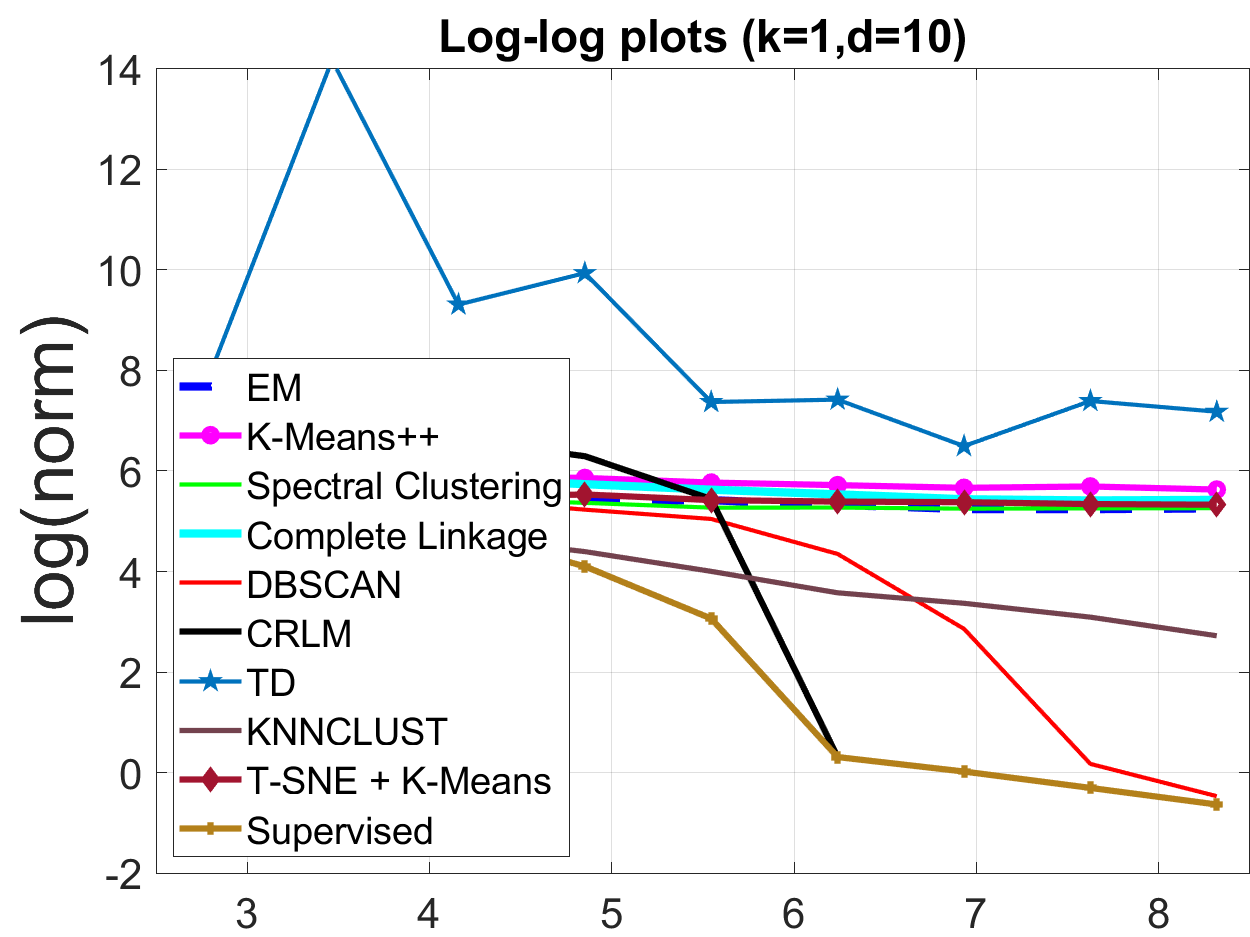}
\includegraphics[height=3.5cm]{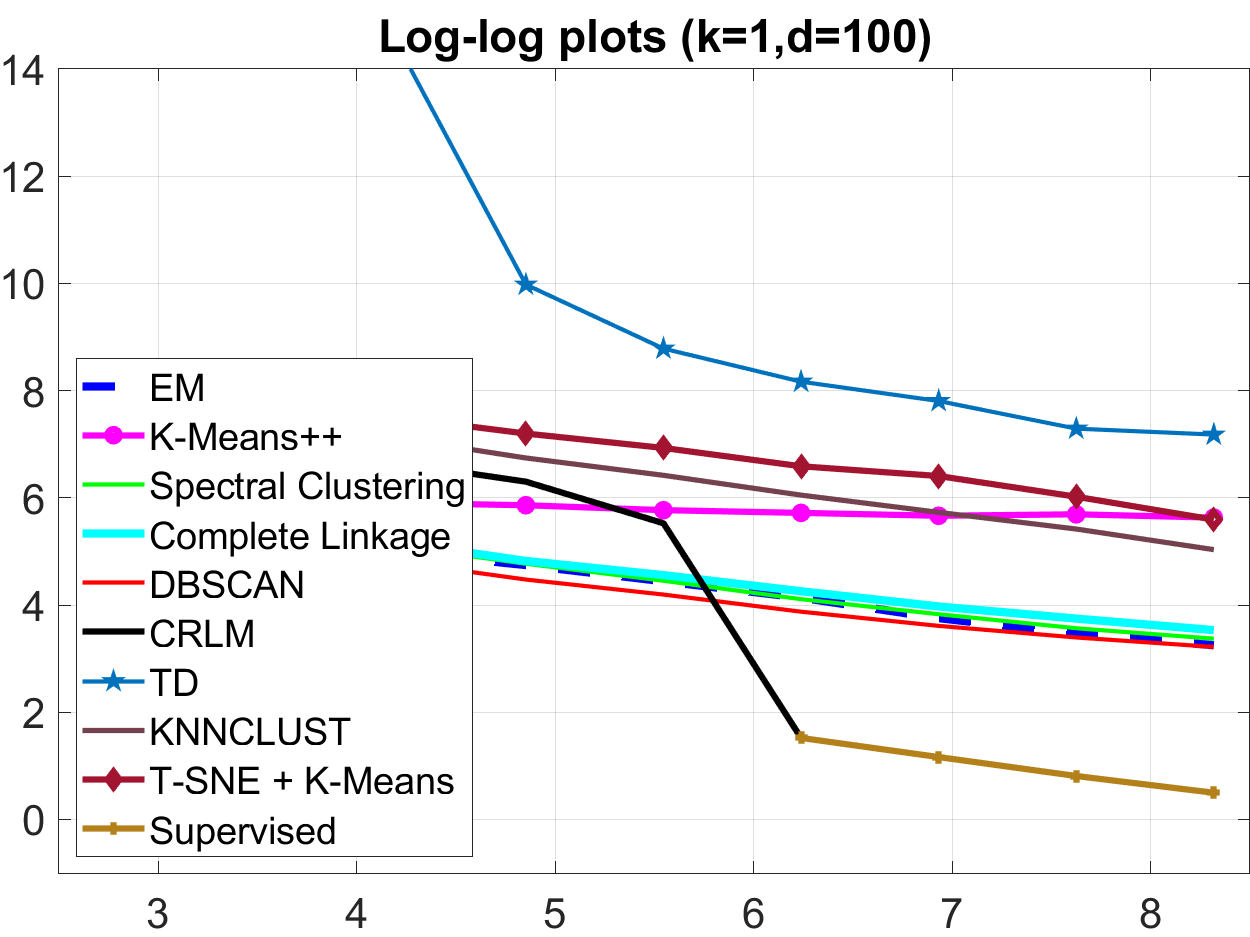}
\includegraphics[height=3.5cm]{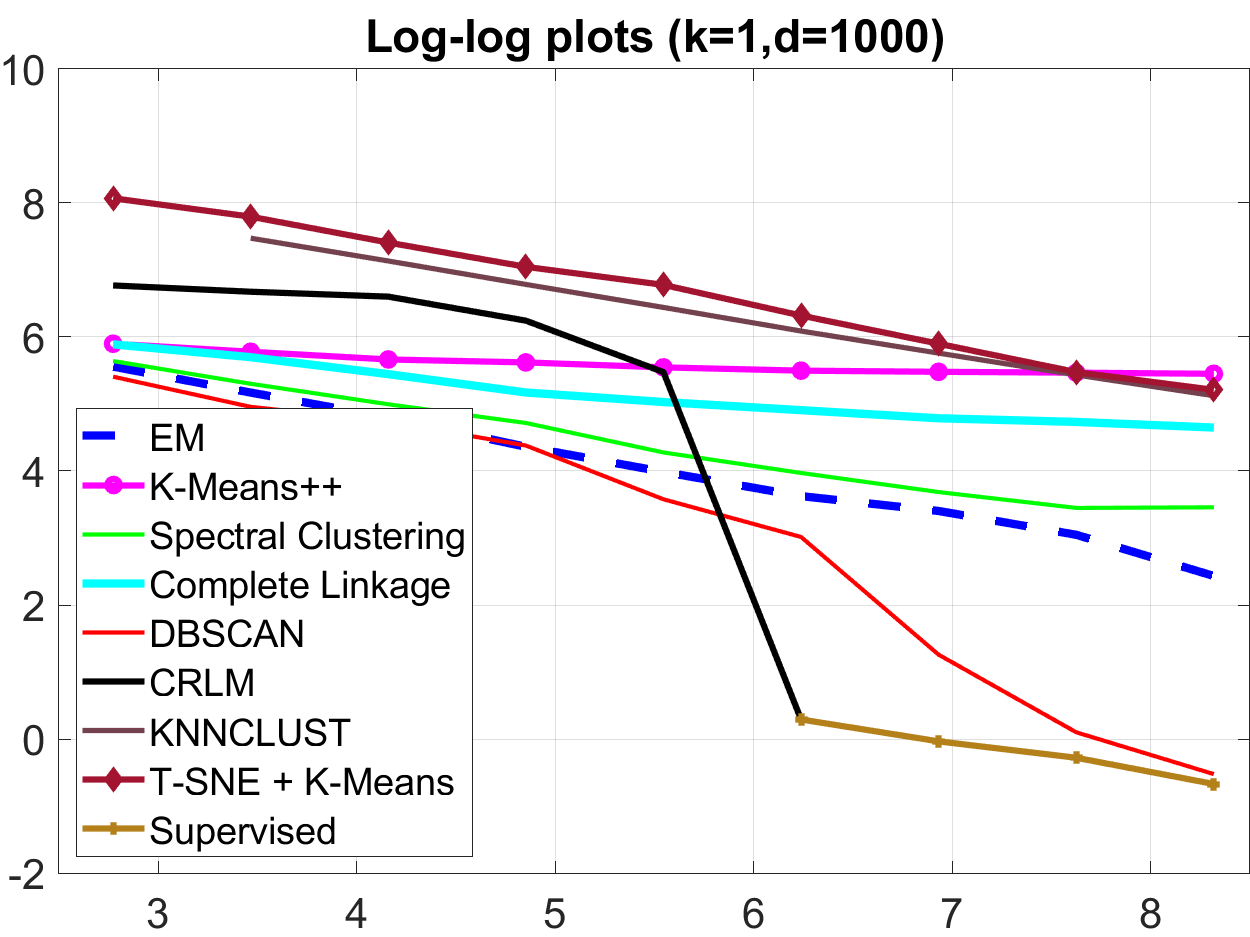}
\includegraphics[height=3.5cm]{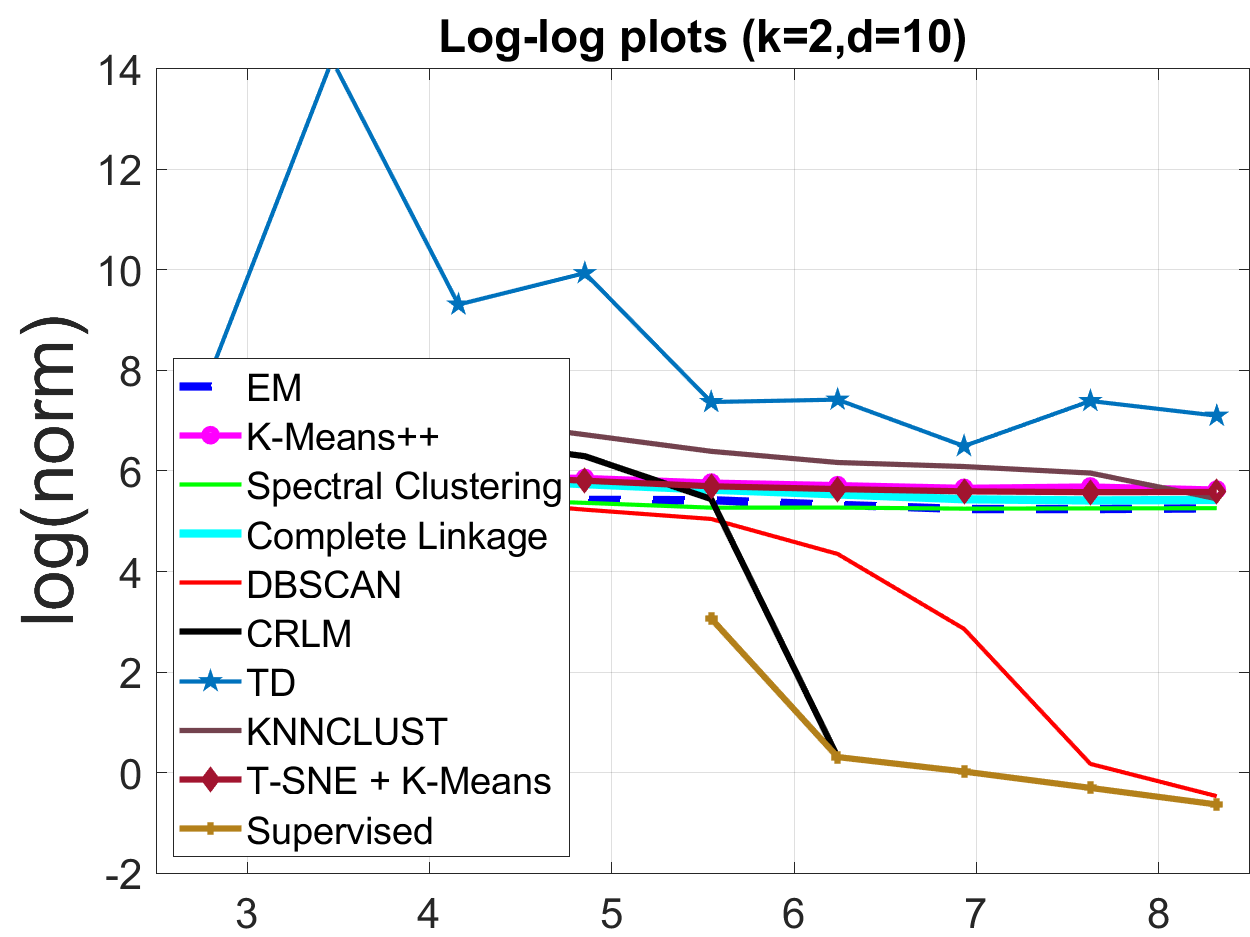}
\includegraphics[height=3.5cm]{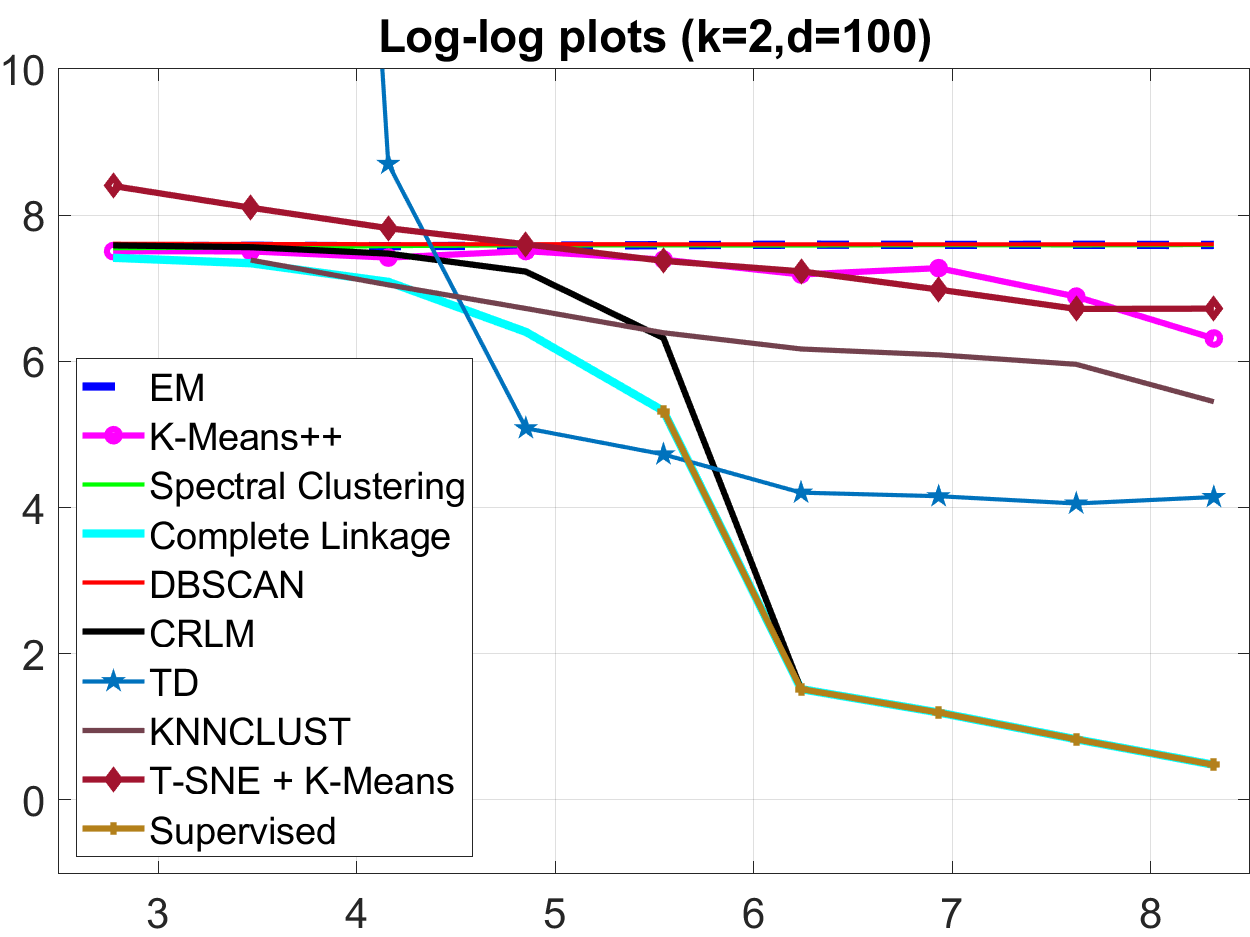}
\includegraphics[height=3.5cm]{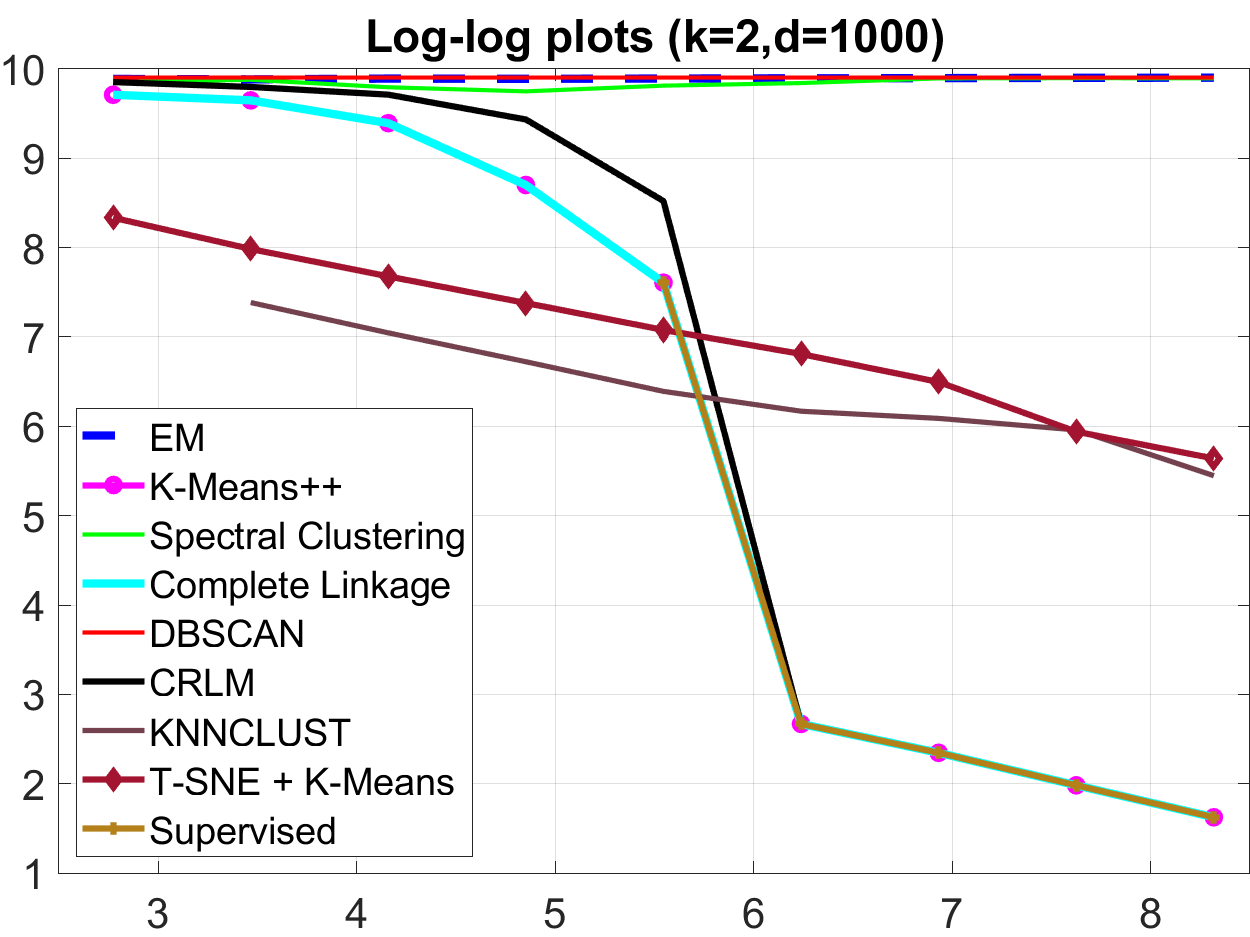}
\includegraphics[height=3.5cm]{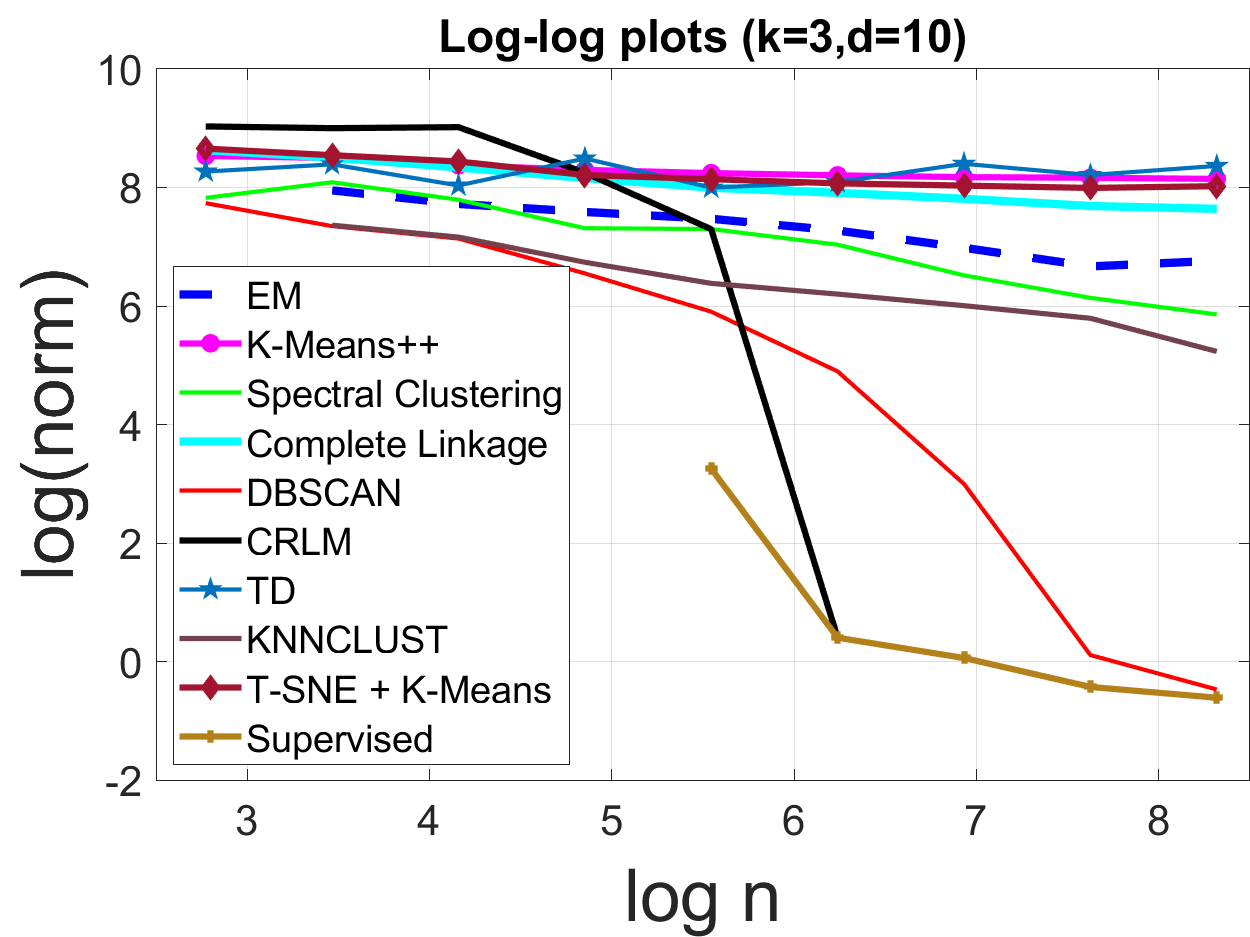}
\includegraphics[height=3.5cm]{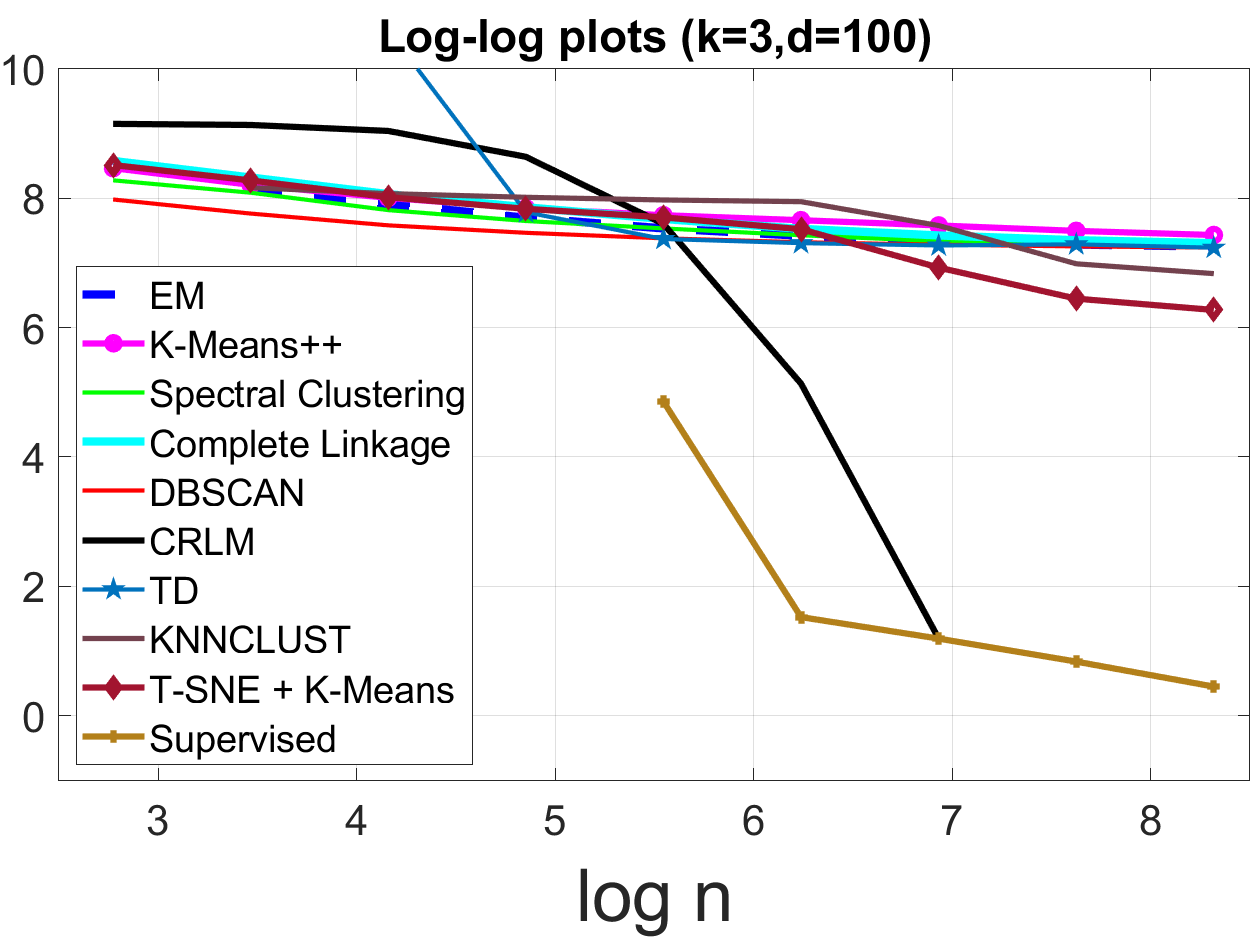}
\includegraphics[height=3.5cm]{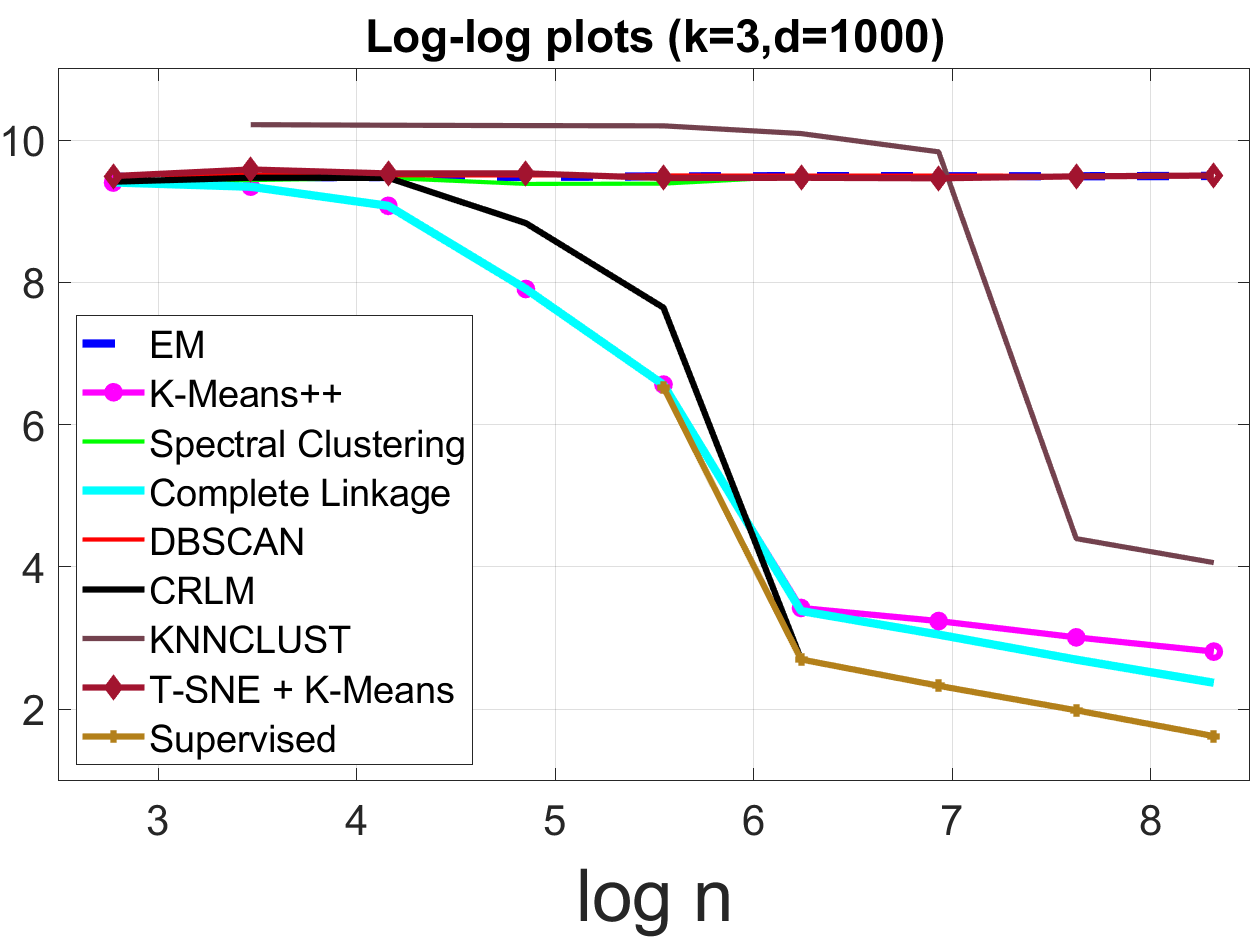}
\vskip -3mm
\caption{Convergence comparison of the proposed algorithm with other popular clustering algorithms.}\label{fig:loglog}
\vspace{-6mm}
\end{figure}
\subsubsection {Convergence Plots}

In  the simulation, we generate our data with the model proposed at the beginning of Section \ref{sec:form} to satisfy the assumptions A1-A3 from Section \ref{sec:assump}. To sample from  a uniform distribution within a $d$-dimension closed ball, we employ a standard method proposed by \cite{muller1959note}.

For the mixture proportion parameter ${\pi_1,\cdots,\pi_k}$ for each positive cluster, in experiments, we set them all equal to $\pi_j=0.01,j=1,...,k$. We take $\sigma_1=1$, $\sigma_2=2$, $\sigma_3=3$ and $\sigma_{\max}=10$. 
For the radius $D\sqrt{d}$ of the uniform ball, based on our previous theoretical probability bounds, we set $D$ large enough to make  all the probability bounds close 
 to 1. 
We generate simulated data with different $d$ and $k$ and make comparison plots with the clustering methods described above, along with our method. 

For Tensor Decomposition, we could only perform experiments on $d=10$ and $d =100$ as this method is computationally expensive for high-dimensional data. 
For $d=1000$, we observed that the whole Tensor Decomposition experiment will take many days to run with the same hardware as the other methods.

To compare the convergence rate of the estimated means obtained by different algorithms to the true means, we record the criterion $(1/k) \sum_{j=1}^{k}||\bmu_j-\hat{\bmu_j}||$, where $\bmu_j$ is the true mean for positive cluster  $j$ and $\hat{\bmu_j}$ is estimated mean obtained by the  clustering algorithm. 
The estimated mean  for  positive cluster $j$ is calculated by taking the average of the data samples clustered with  the same label by the algorithm. Since the labels given by the algorithm might not correspond to the true labels, we search over all permutations $\pi$ of the true labels to obtain the smallest value of the $\sum_{j=1}^{k}||\bmu_{\pi(j)}-\hat{\bmu_j}||$ criterion.
The 'Supervised' results are generated using the true cluster labels. We take the log-log plots with $\log(n)$ as x-axis and $\log((1/k) \sum_{j=1}^{k}||\bmu_j-\hat{\bmu_j}||)$ as y-axis and obtain the  convergence plots shown in Figure \ref{fig:loglog}.

One could see from Figure \ref{fig:loglog} that CRLM always converges to the supervised results based on the true cluster labels, reaching a convergence rate of $O(1/\sqrt{n})$. 
DBSCAN, K-Means and Complete Linkage Clustering also converge to the supervised results in some of the experiments, but not always. Tensor Decomposition and T-SNE never reach the accuracy of the supervised results in these experiments.


\subsubsection {Stability of Clustering with Respect to \texorpdfstring{$\sigma_{\max}$}{Lg}}
Our algorithm has two tuning parameters, the bandwidth $\sigma_{\max}$, and  the estimated number of positive clusters $\hat{k}$.  
In this section, we will discuss the selection of $\sigma_{\max}$ and $\hat{k}$. 

In terms of $\sigma_{\max}$, in order for A1-A3 to hold, we need  $\sigma_{\max}\geq 2\max_{j}{\sigma_{j}}$. In fact, if the data exactly follows the GMMUB structure and $D$ is sufficiently large,  the selection for $\sigma_{\max}$ is very flexible.
 The flexibility increases with the increasing value of $D\sqrt{d}$. 
 
To measure the impact of different values of $\sigma_{\max}$ on the clustering results, we introduce three measures of quality of a clustering result for two or more clusters : Rand Index, F-measure and Purity (\cite{sokolova2009systematic}). 
Among these measures, F-measure is the most relevant measure for our purpose since it can measure the clustering accuracy much better than the Rand Index for an unbalanced dataset. 

However, the F-measure is defined for binary labeled data, while in our setup we have $k\geq 1$ positive clusters and one negative cluster. 
Furthermore, the labels obtained by the clustering algorithm might only correspond to the true labels up to a permutation. 
For these reasons, we define the F-measure as the average of the $k$ F-measures obtained by evaluating the detection of each positive cluster with respect to the rest. 
For that, for any $j\in \{1,...,k\}$ we compute the F-measure $F_j$ obtained by considering the observations with label $j$ as positives and the other observations as negatives. Then we compute the average F-measure as the average of  $\{F_j\}_{j=1}^{k}$.
In a similar way we compute the Rand Index.
\begin{figure*}[ht]
\vspace{-3mm}
\centering\includegraphics[height=4.5cm]{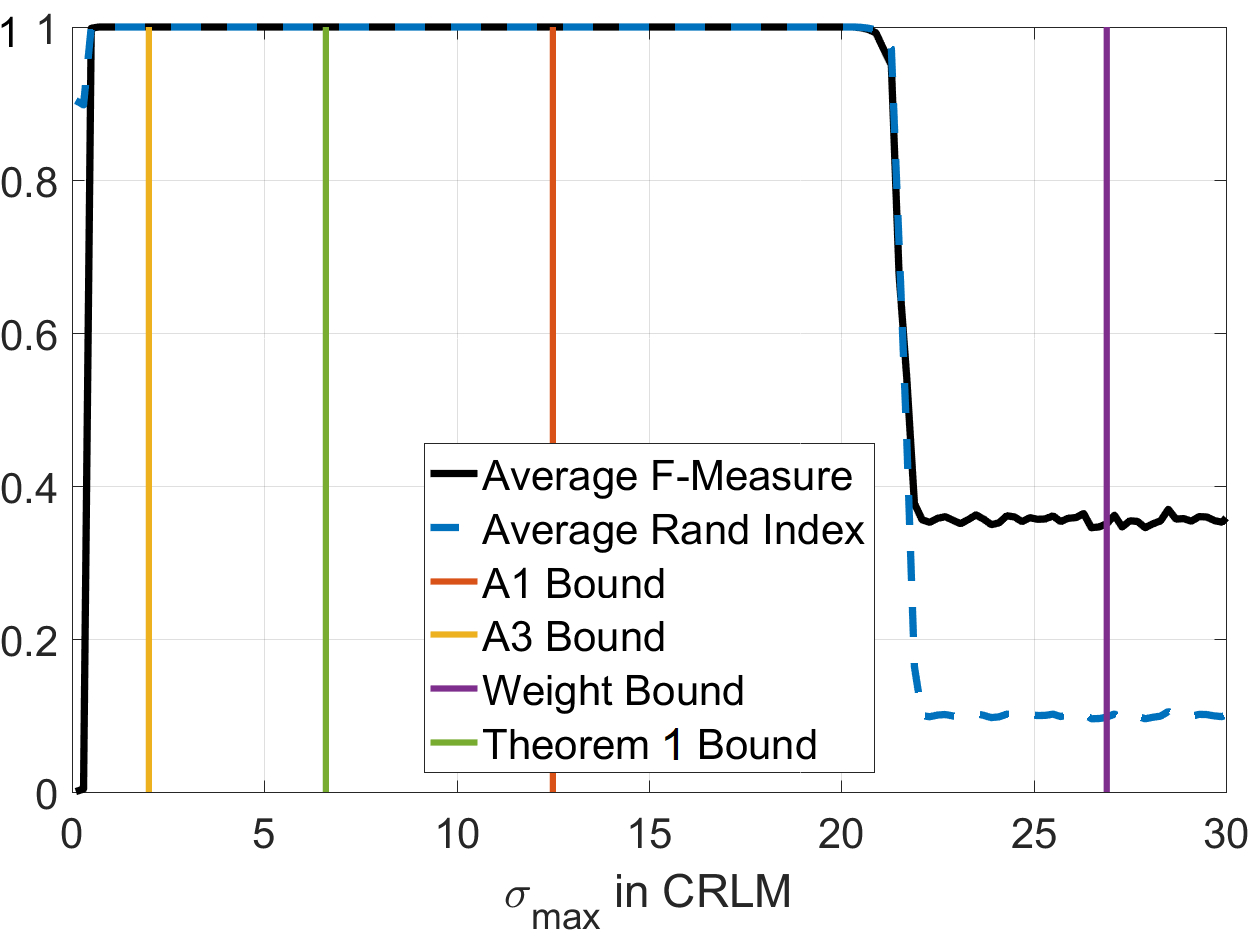}
\vskip -4mm
\caption{Rand Index and F-Measure vs $\sigma_{\max}$, for $k=1,d=20,n=10^4$.}\label{fig:find_sigma_max_d10_D60}
\vspace{-4mm}
\end{figure*}

When $k=1$, we  perform clustering for different values of the dimension $d$, keeping $D=50$ fixed. In Figure \ref{fig:find_sigma_max_d10_D60} is shown the average result of 20 runs when $n=10^4,d=20,\sigma_1=1$. We see that the F-measure is close to 1 for a large range of values of $\sigma_{max}$.

We obtain an experimental upper and lower bound of $\sigma_{max}$ where a F-Measure of at least 0.99 is obtained. 
\begin{figure*}[htb]
\vspace{-2mm}
\centering
\includegraphics[height=4.5cm]{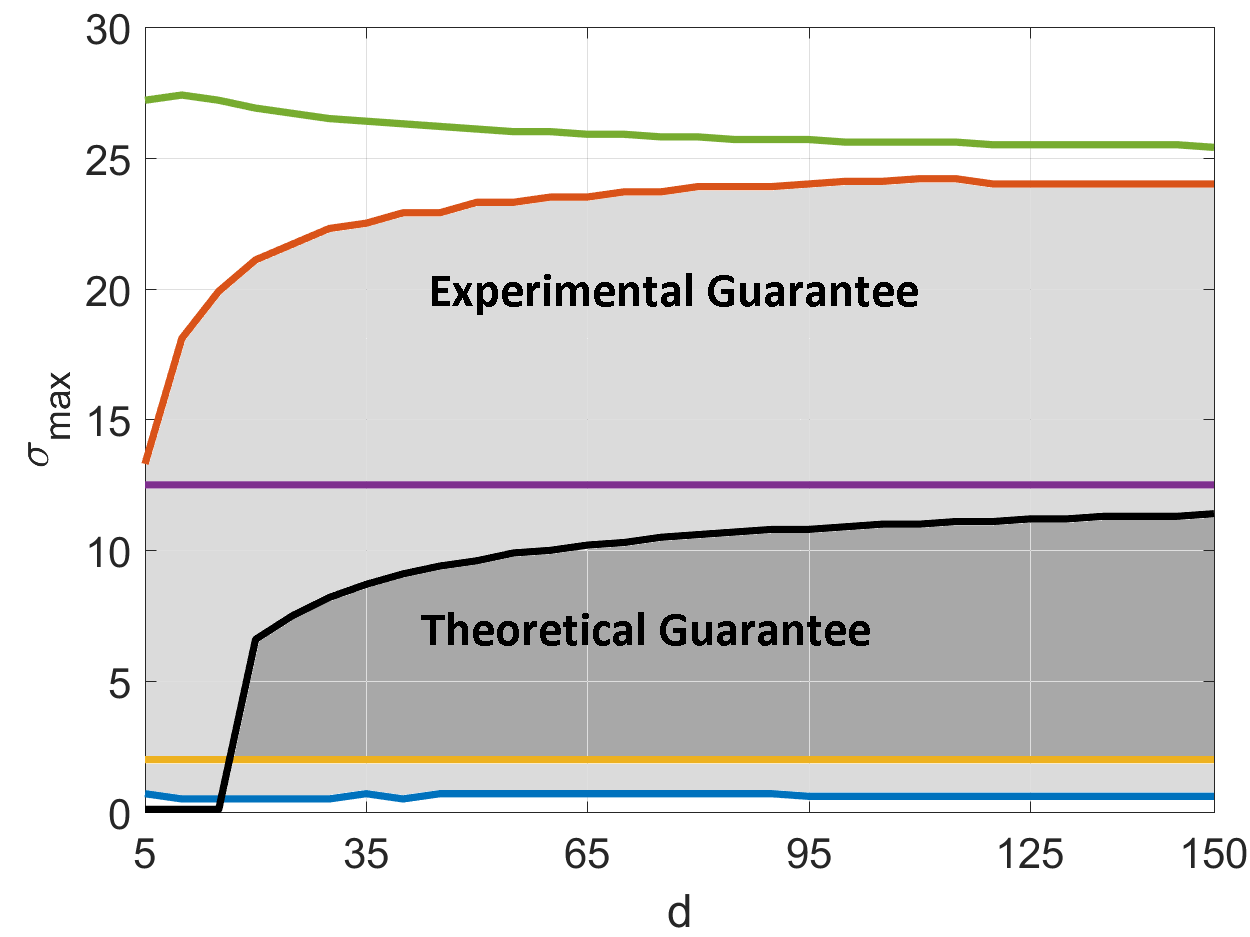}
\includegraphics[height=4.5cm]{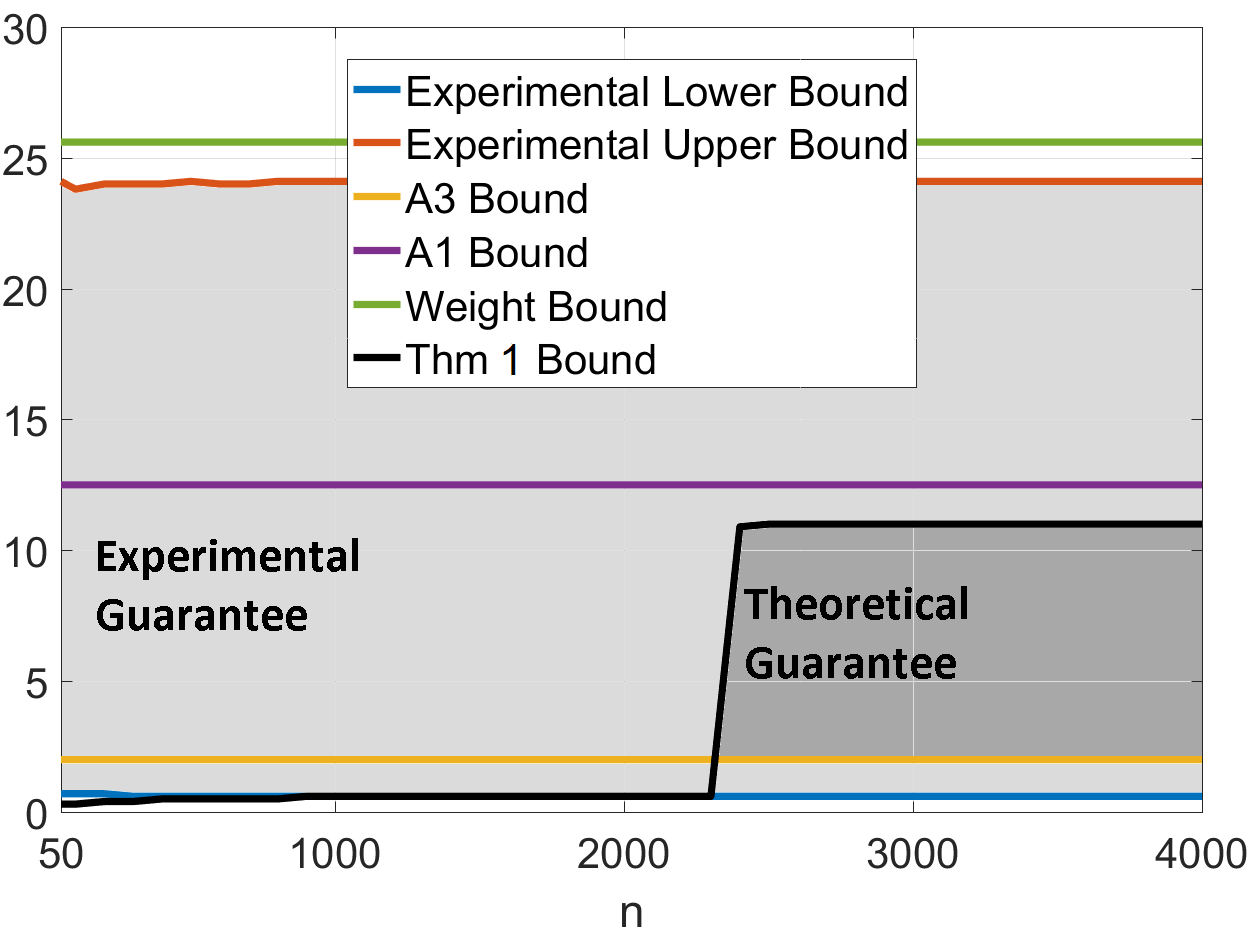}
\includegraphics[height=4.5cm]{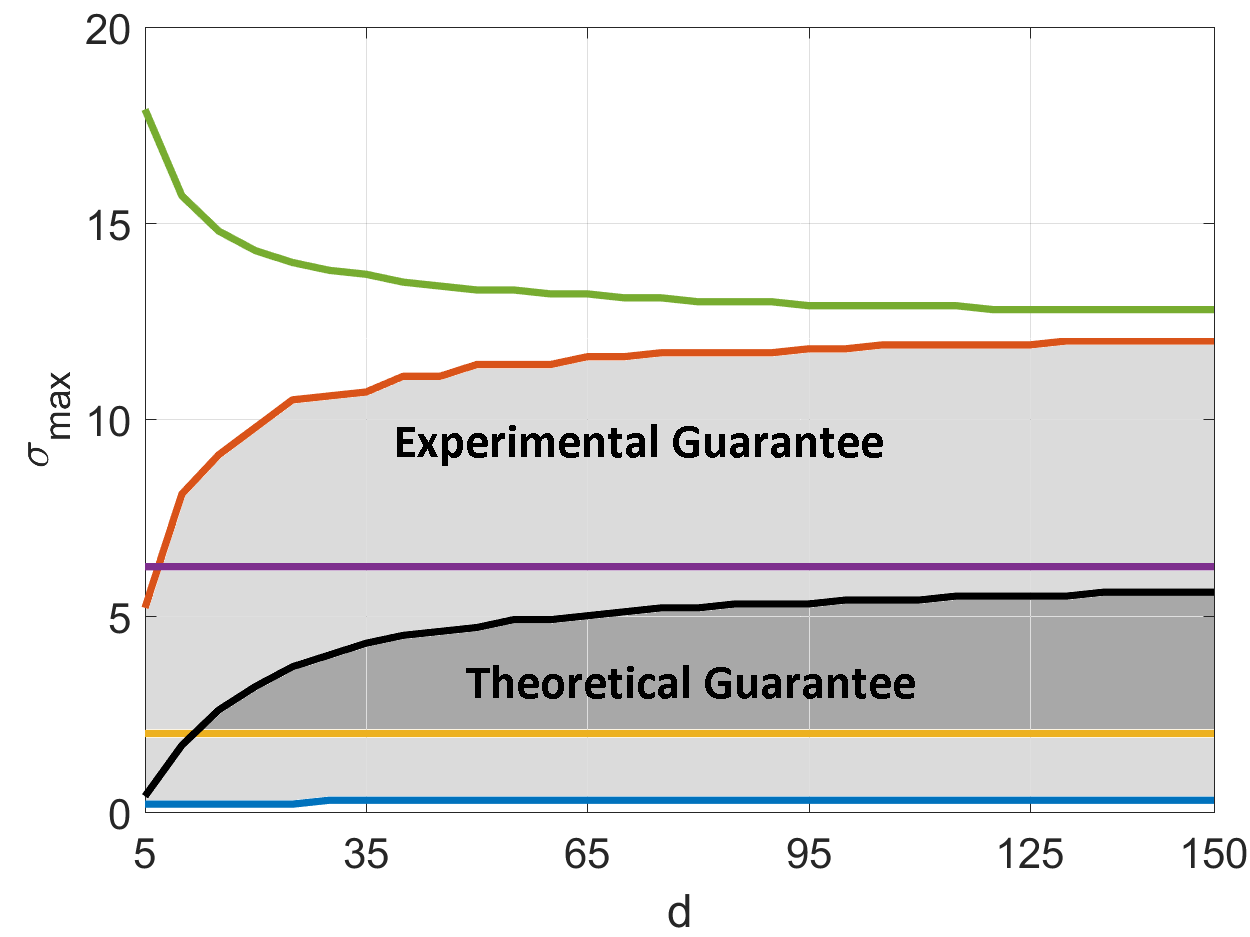}
\includegraphics[height=4.5cm]{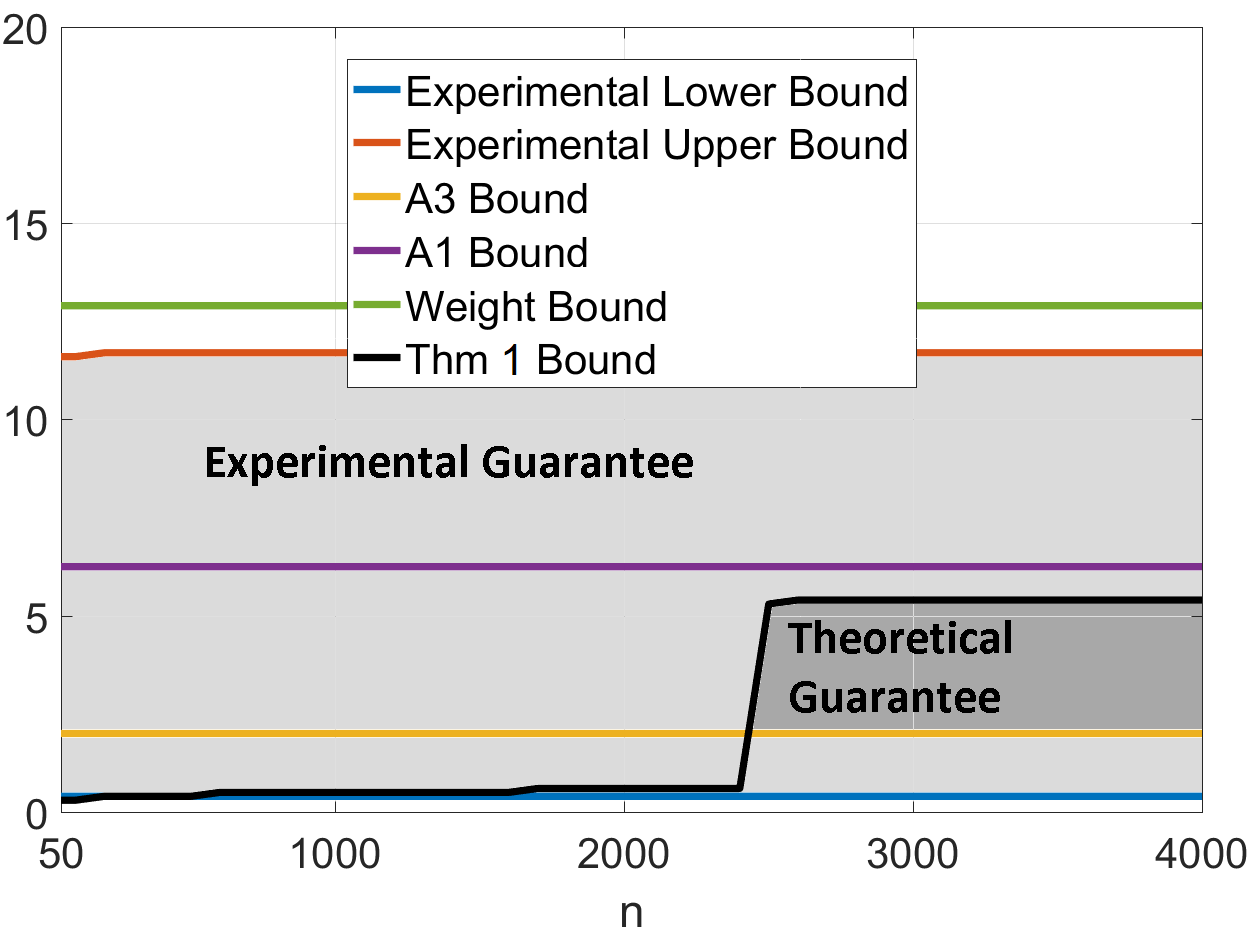}
\vskip -4mm
\caption{Comparison between the theoretical bounds and the region of experimental success of OCRLM ($k=1$) vs. space dimension $d$ (left) and number of observations $n$ (right), when $n=10^{4}$ (left) and $d=100$ (right). First row: $G=4$, second row: $G=16$.}\label{fig:find_sigma_maxd10_150}
\vspace{-4mm}
\end{figure*}

Shown in Figure \ref{fig:find_sigma_maxd10_150} as lighter gray and darker gray areas are such experimental bounds and respectively the theoretical bounds, for different dimensions $d$ (left) and sample sizes $n$ (right).

For the theoretical bounds, the lower bound is from assumption A3 and the three upper bounds are from A1, the weight condition \eqref{eq:wtcond1} from Theorem \ref{thm:onecluster}, and the condition that the probability \eqref{eq:probbnd1} from Theorem \ref{thm:onecluster} is at least $0.99$. These theoretical bounds are labeled in Figure \ref{fig:find_sigma_maxd10_150} as the A3 bound, A1 bound, weight bound, and Thm \ref{thm:onecluster} bound respectively.  
 Our theoretical guarantee for $\sigma_{\max}$ is the area between the A3 bound curve and the Theorem \ref{thm:onecluster} bound, shown as a darker gray area in Figure \ref{fig:find_sigma_maxd10_150}.
 
The theoretically guaranteed region is a sub region of the lighter gray region that contains the practical choices of the $\sigma_{\max}$ for which an F-measure of at least 0.99 was obtained.
 
Figure \ref{fig:find_sigma_maxd10_150} also shows a lower bound for $d$. With increasing $d$, the range for $\sigma_{\max}$ becomes larger. 
For the impact of $n$, the area becomes stable when $n$ is large enough.
 However, the experimental bounds shows that CRLM can work with a more flexible choice of   $\sigma_{\max}$ and for a larger range of $d$ and $n$.
\begin{figure*}[t]
\vspace{-3mm}
\centering
\includegraphics[height=4.4cm]{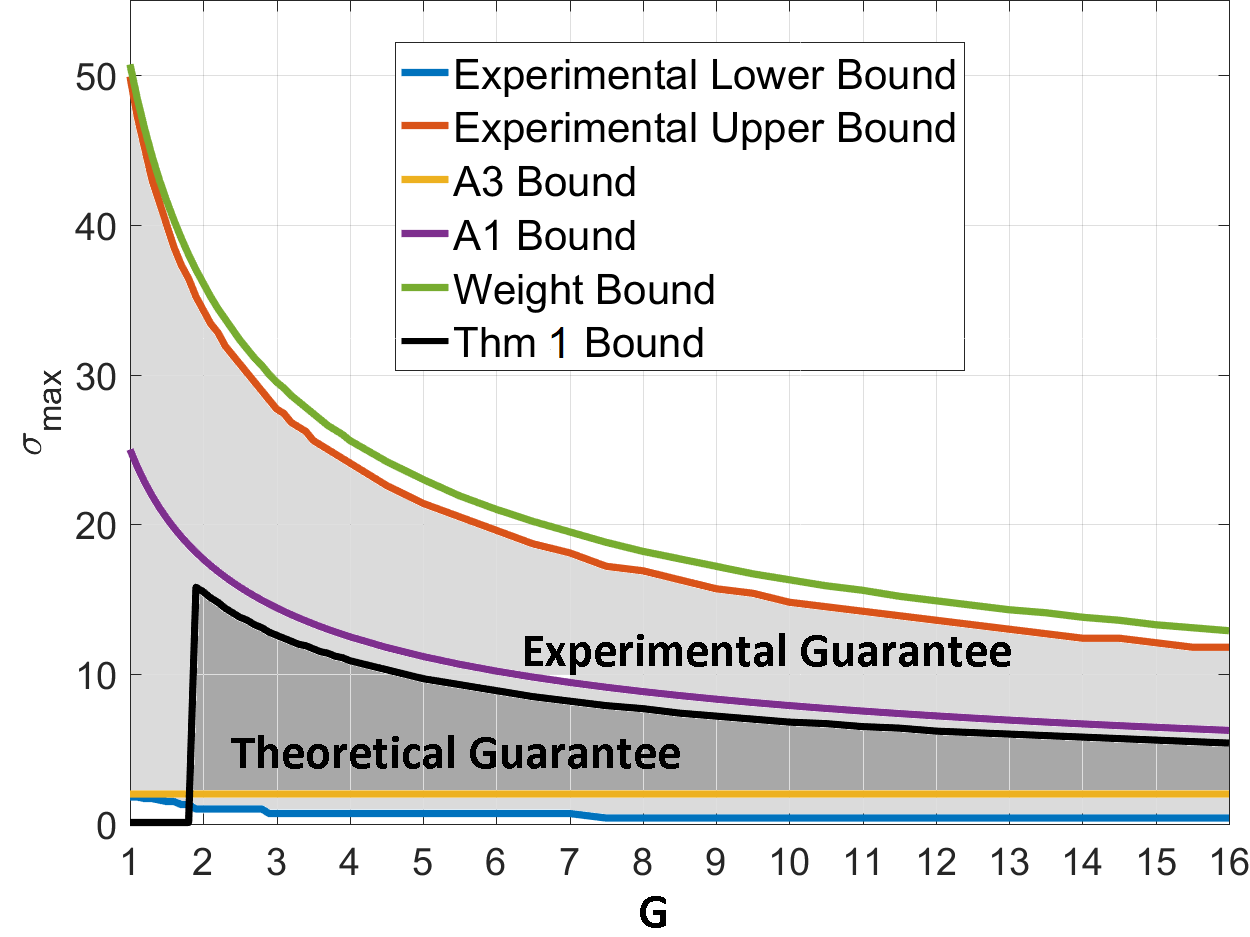}
\vskip -4mm
\caption{Comparison between the theoretical bounds and the region of experimental success of OCRLM ($k=1$) vs. $G$, when $n=10^{4},D=80,d=100$.}\label{fig:find_sigma_max_F}
\vspace{-5mm}
\end{figure*} 

In Figure \ref{fig:find_sigma_max_F} are shown the same bounds for different values of the $G$ parameter, for fixed $d=100$ and $n=10^4$.
For the Thm \ref{thm:onecluster} bound, $G$ needs to be large enough to make the probability close to 1 for a certain $\sigma_{\max}$. 
With large enough $G$, the theoretical region and experimental region will decrease as $G$ increases.

\begin{figure*}[ht]
\vspace{-3mm}
\centering
\includegraphics[height=4.4cm]{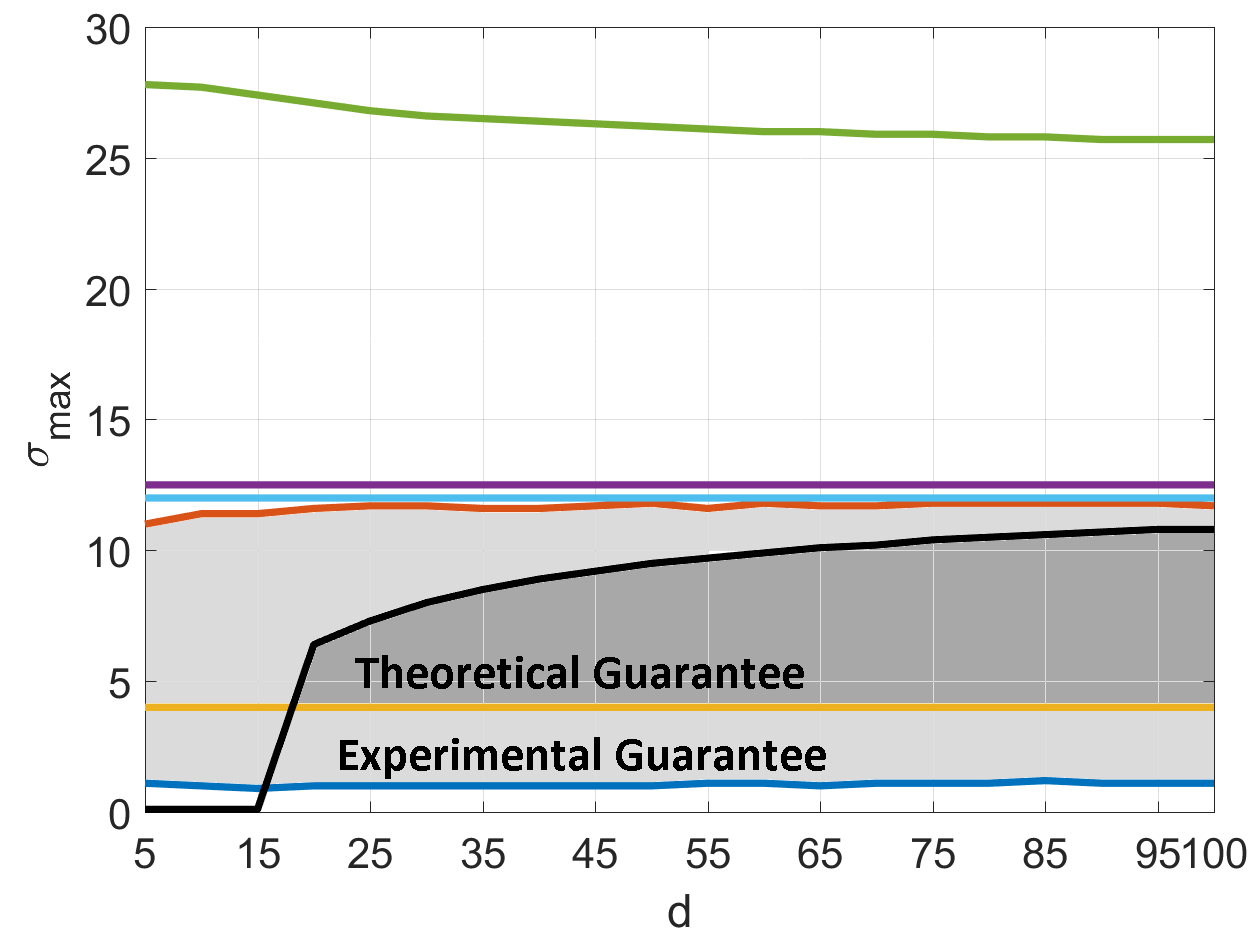}
\includegraphics[height=4.4cm]{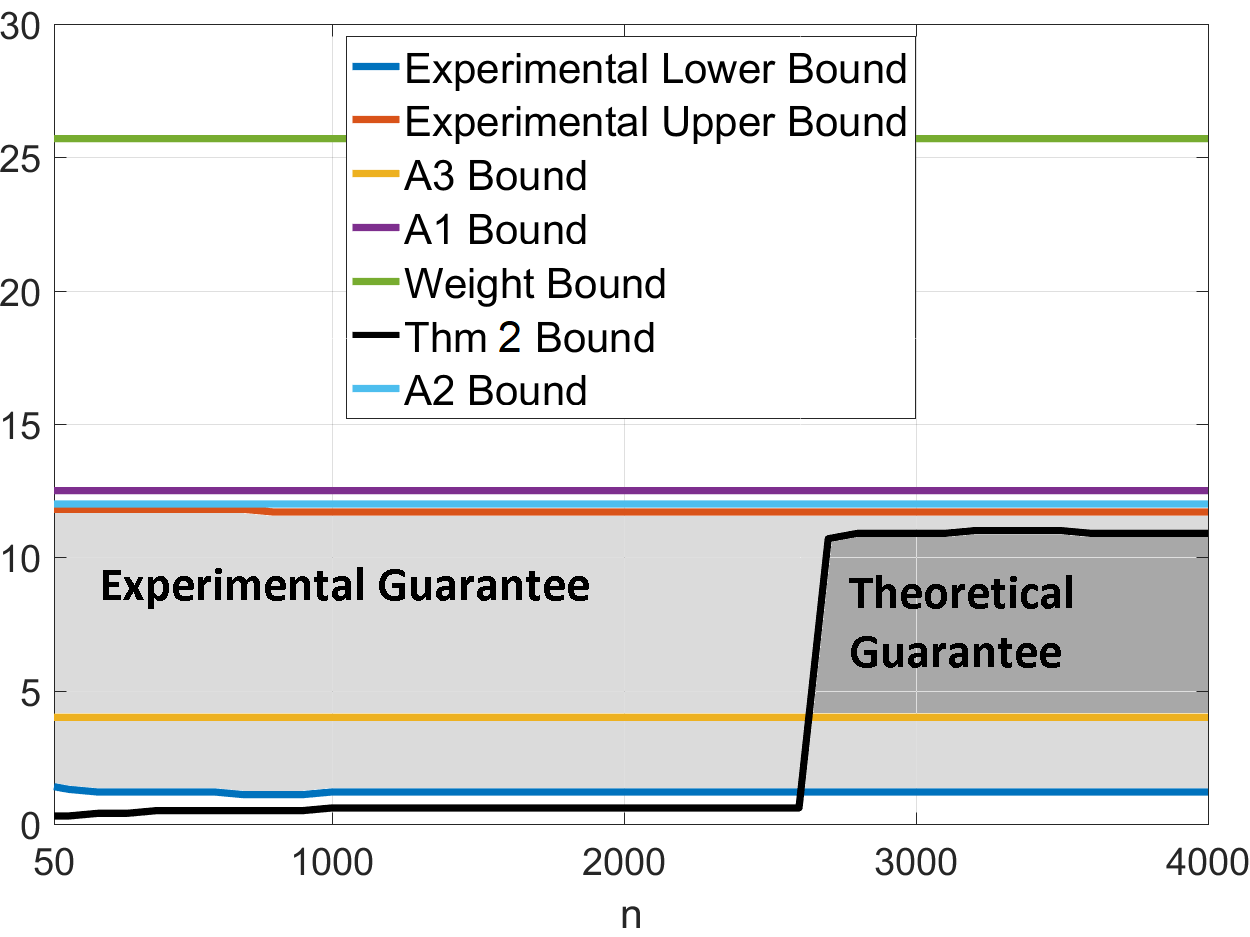}
\vskip -3mm
\caption{Comparison between the theoretical bounds and the region of experimental success of CRLM vs. space dimension $d$ (left) and number of observations $n$ (right), when $k=2,G=4$ and $n=10^{4}$ (left) and $d=100$(right).}\label{fig:find_sigma_maxd10_100_k2}
\vspace{-3mm}
\end{figure*}When the number of clusters $k>1$, the results are similar. 
In Figure \ref{fig:find_sigma_maxd10_100_k2} is shown the case when $k=2, \pi = [0.1,0.1,0.8] , \sigma_1=1$, $\sigma_2=2$, $D=50$. 
 Adding the A2 bound and replacing Thm \ref{thm:onecluster} bound by Thm \ref{thm:multclust} bound are the major differences from the case of one positive cluster. Also, we  can see that experimental upper bound is mainly bounded by the A2 bound.

\begin{figure*}[ht]

\centering
\includegraphics[height=4.5cm]{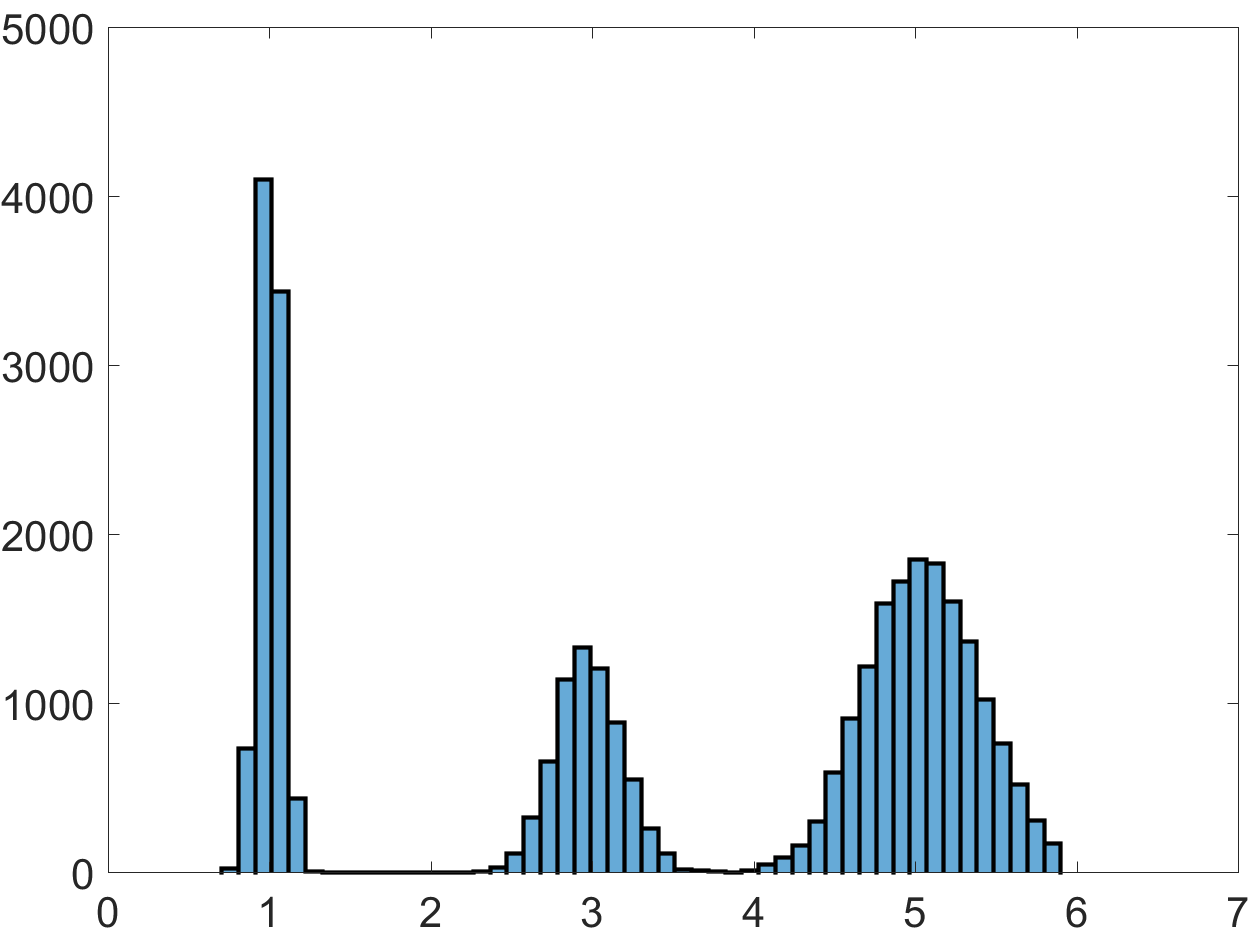}
\includegraphics[height=4.5cm]{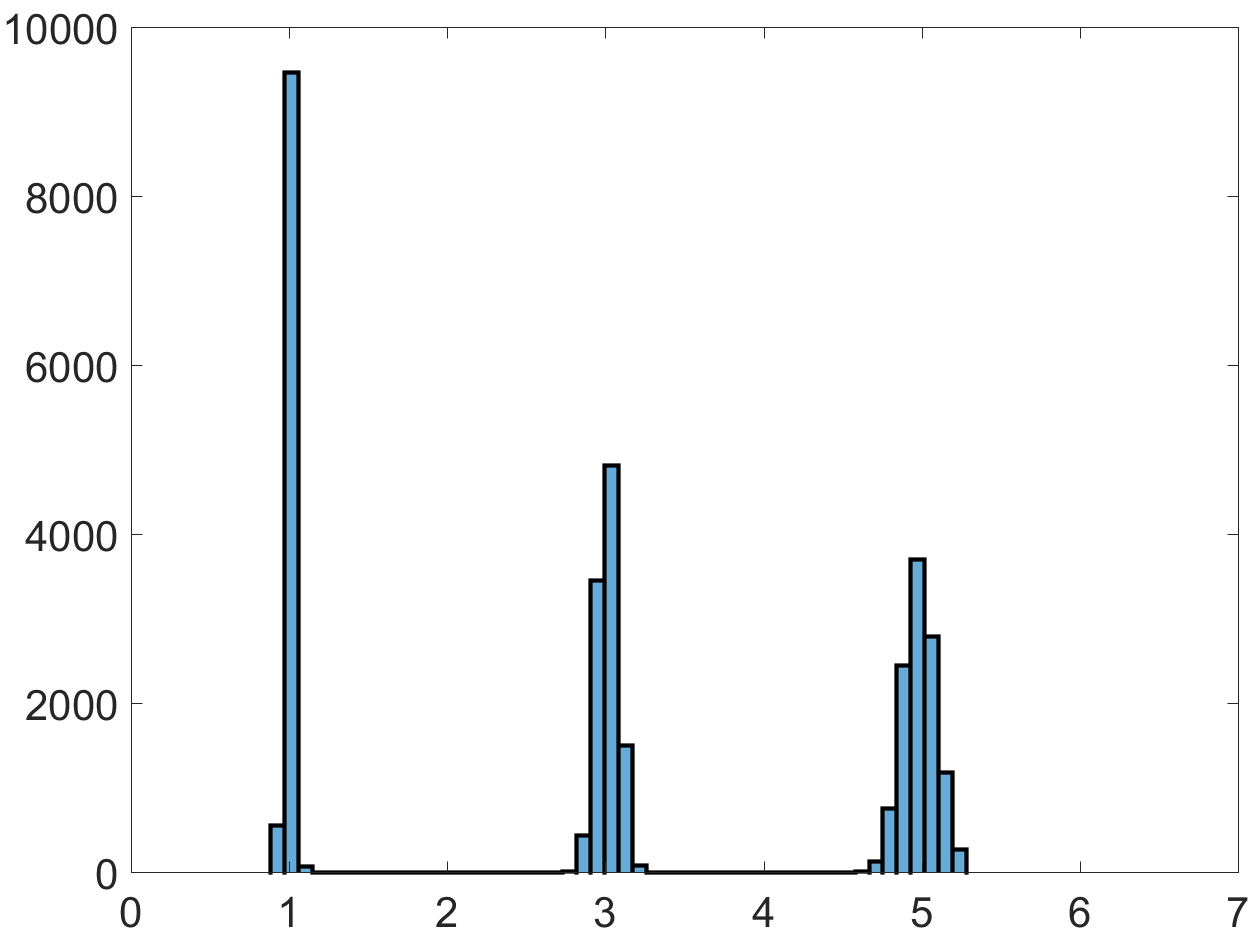}

\caption{Histogram plot of the first 5 $\%$ shortest distances between the observations divided by $\sqrt{2d}$ (Left: $d=100$, Right: $d=1000$) }\label{find_sigma2}
\end{figure*}To choose an appropriate $\sigma_{max}$ in practice, we need to roughly estimate $\sigma$ for each positive cluster and then choose a $\sigma_{max}$ just slightly larger than twice the largest estimated $\sigma$. We propose a novel way to estimate the $\sigma$ for the positive clusters, using the histogram of pairwise distances between observations.

 Here, we simulated a GMMUB with $k=3,d=100$ or $d=1000$, $p_1 = p_2=p_3 =0.01$, $\sigma =[1 ,3 ,5]$, with C1-C2 and A1-A3 satisfied. 
We calculate the the distances between the observations and compute the histogram of the 5\% shortest distances, divided by $\sqrt{2d}$. This histogram is shown in Figure \ref{find_sigma2}.
One can see that clear peaks are formed around the true standard deviations $1,3,5$. This is because the pairwise distances between samples from a Gaussian $\N(\bmu,\sigma^2I_d)$ are actually norms of samples from a Gaussian $\N(0,2\sigma^2I_d)$, so they cluster around $\sigma \sqrt{2d}$, as it was illustrated in Figure \ref{fig:gaussian_norm}.

\subsubsection {Estimating the number of clusters \texorpdfstring{$k$}{Lg}}
Some relevant methods for estimating of the number of positive clusters $k$ include the elbow methods (\cite{thorndike1953belongs}), X-means (\cite{pelleg2000x}) and the silhouette method (\cite{lleti2004selecting}). 
However, a simple way to estimate the number of clusters can be derived naturally and directly from our algorithm. 

Suppose  $D$ is sufficiently large, then for a well chosen parameter $\sigma_{\max}$, we can run CRLM for a large number $K> k$ of iterations and record the number of observations in each cluster. 
The algorithm is stopped when the number of observations in the new clusters becomes $1$. If this happens at iteration $i$, then the estimated number of positive clusters is $i-1$.
\begin{figure*}[htb]
\vspace{-3mm}
\centering
\includegraphics[height=4.5cm]{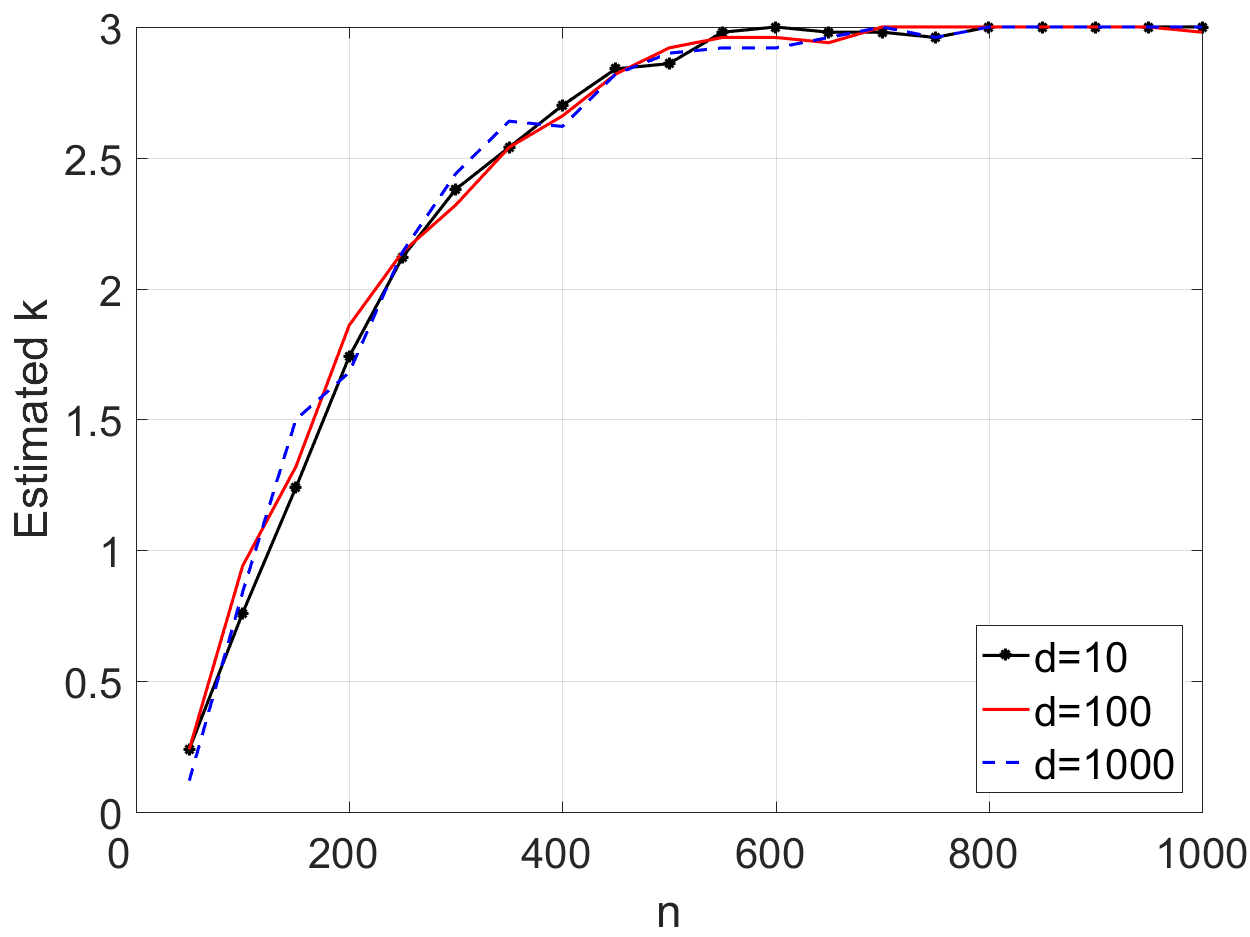}
\vskip -3mm
\caption{Estimated $k$ versus $n$, for $k =3, D=1000, G=4$) }\label{fig:find_k}
\vspace{-4mm}
\end{figure*}

We conduct experiments on synthetic data with $k=3$ to see how well this method finds the correct number of clusters. In Figure \ref{fig:find_k} is shown the average value of the estimated $k$ obtained this way vs the number $n$ of observations. This  average is obtained from 20 independent runs.

It is clear that this method for estimating the number of clusters $k$ is efficient and it converges to actual value of $k$ if $n$ i slarge enough. 
Moreover, the dimension $d$ does not have much of an impact on number of observations needed to find a good estimate for $k$. 
The results from these experiments are based on conditions C1-C2 being satisfied and $D$ taking sufficiently large values. 
When either of these two conditions is violated, the minimum cluster size for stopping the CRLM procedure should be adjusted to a positive integer greater than or equal to 1. 

\subsection{Real Data}

To show the potential of our CRLM algorithm, we also conduct some experiments on real datasets from two different sources. 
The original datasets are sets of images from different classes. We employ clustering with various clustering algorithms on these images and measure the performance by F-measure and Rand Index based on the true labels. 

\subsubsection{Kimia 216 Dataset And 1070  Shape Database}

The Kimia 216 \cite{sebastian2004recognition} contains 18 classes each consisting of 12  black and white  binary shape images.  
It contains shapes silhouettes for birds, bones, brick, camels, car, children, classic cards, elephants, faces, forks, fountains, glasses, hammers, hearts, keys, rays, turtles and a miscellaneous class.
 Most of images of the Kimia 216 datasets are in the 1070 Shape Database.
In Figure \ref{fig:Kimia dataset} are shown all the images of the Kimia 216 dataset.
 The datasets can be downloaded at \url{http://vision.lems.brown.edu/content/available-software-and-databases}. 
 \begin{figure*}[t]
 \vspace{4mm}
\centering
\includegraphics[width=12cm]{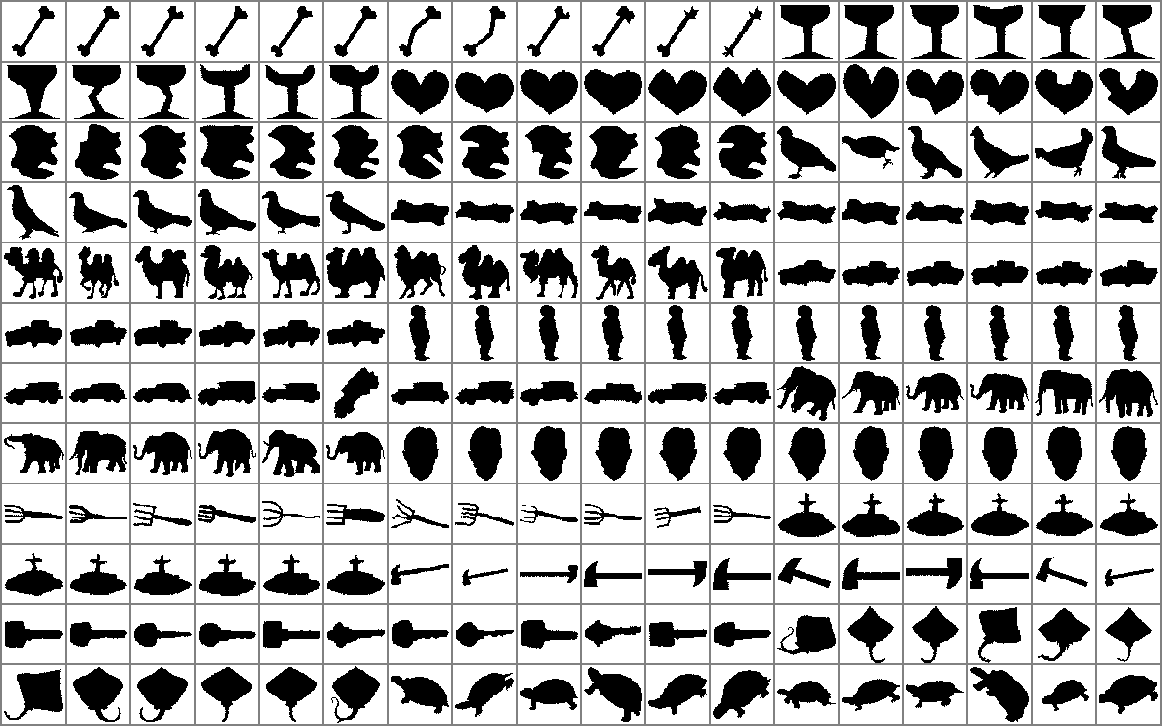}

\caption{The images in Kimia 216 shape dataset (\cite{sebastian2004recognition})}\label{fig:Kimia dataset}
\vspace{-6mm}
\end{figure*}

\noindent{\bf Data preprocessing.} The images are resized to 256$\times$256 pixels and vectorized. 
After that, we perform PCA  and use the 215 PC coefficients as the input for different clustering methods. 

\noindent{\bf Kimia results.} Because the Kimia 216 dataset has 18 classes, it can be fitted as a GMM model with 18 Gaussian clusters.  In Table \ref{Kimia216} are shown the clustering results measured as Rand Index, since the classes are balanced (12 observations each).

\begin{table}[h]
\vspace{-3mm}
\centering
\scalebox{0.95}{
\begin{tabular}{|c|c|c|c|c|c|c|c|c|c|} 
\hline
 Methods & K-means & DBSCAN &CL&TD & EM &SC&T-SNE &KC&Our (CLRM) \\ \hline 
  Rand Index (\%) &67.99 & 69.91 & 60.19& 28.24 & 18.06 & 68.98&68.06 &73.49& {\bf 73.61} \\ 
\hline
\end{tabular}}
\caption{Accuracy of different clustering algorithms on the Kimia 216 dataset}\label{Kimia216}
\vspace{-3mm}
\end{table}

From Table \ref{Kimia216} one can see that both our method and KNNCUST rank first and second, followed by DBSCAN and k-means. However, one could see that all clustering results are far from being acceptable. 
It is possible that GMM might not be a good model for the Kimia 216 dataset, and the similarity between observations from each group may not be accurately measured simply by distance or density. 

\begin{figure*}[t]
\vspace{-4mm}
\centering
\includegraphics[width=0.07\textwidth]{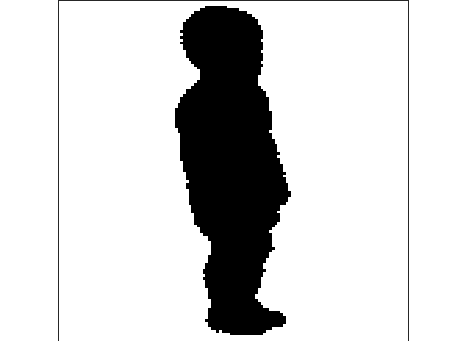}
\includegraphics[width=0.07\textwidth]{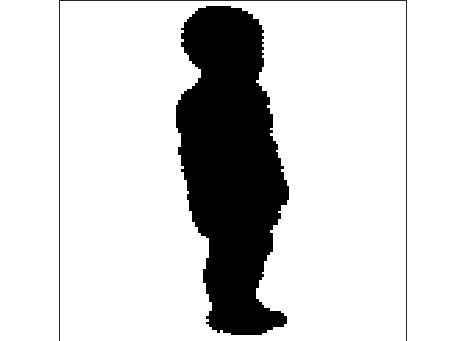}
\includegraphics[width=0.07\textwidth]{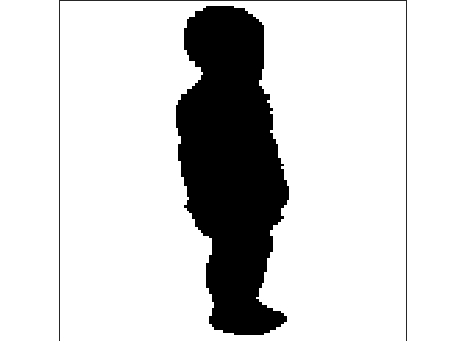}
\includegraphics[width=0.07\textwidth]{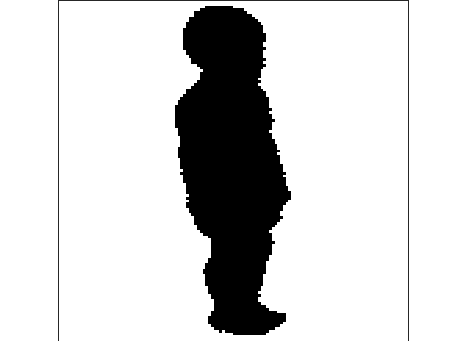}
\includegraphics[width=0.07\textwidth]{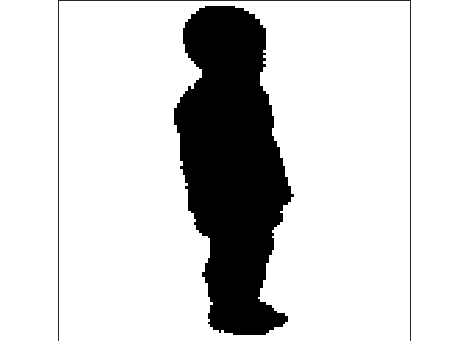}
\includegraphics[width=0.07\textwidth]{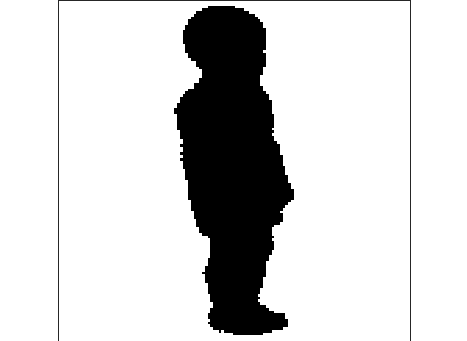}
\includegraphics[width=0.07\textwidth]{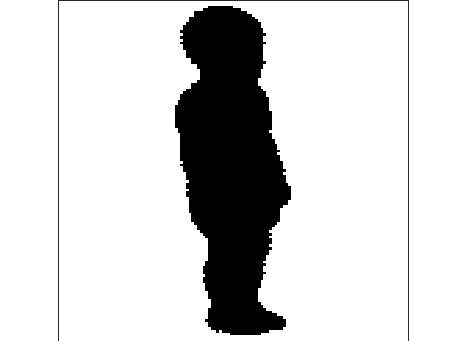}
\includegraphics[width=0.07\textwidth]{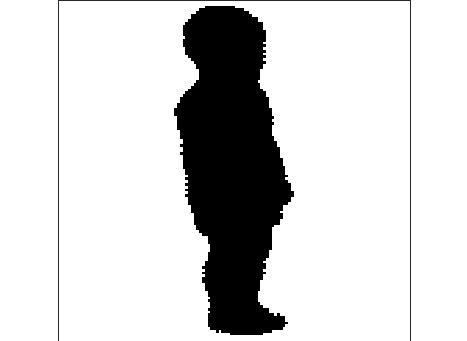}
\includegraphics[width=0.07\textwidth]{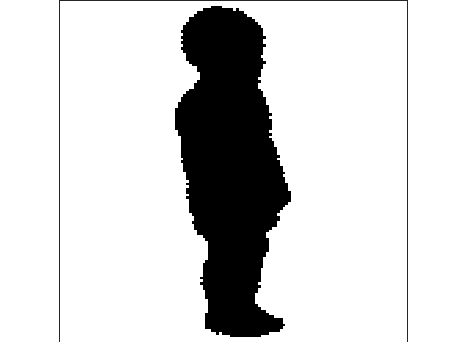}
\includegraphics[width=0.07\textwidth]{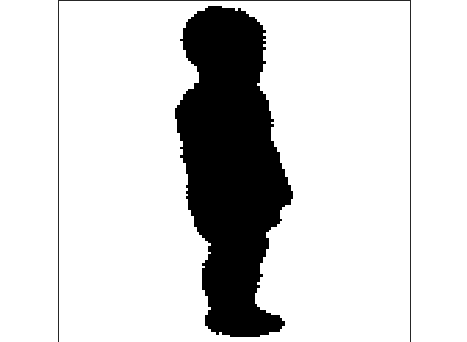}
\includegraphics[width=0.07\textwidth]{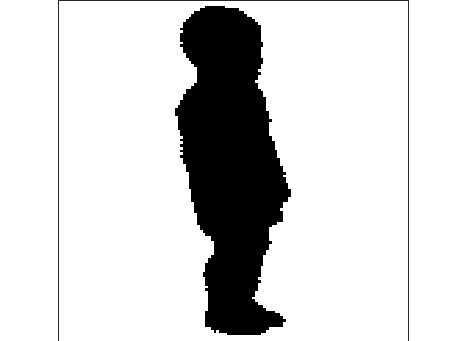}
\includegraphics[width=0.07\textwidth]{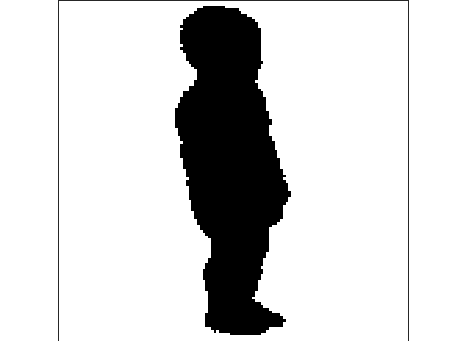}
\vskip 2mm
\includegraphics[width=0.07\textwidth]{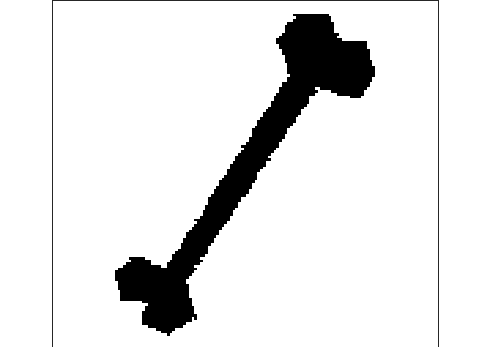}
\includegraphics[width=0.07\textwidth]{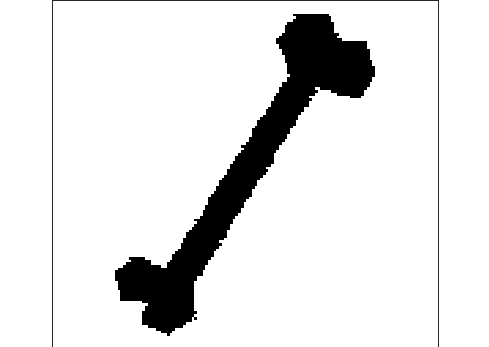}
\includegraphics[width=0.07\textwidth]{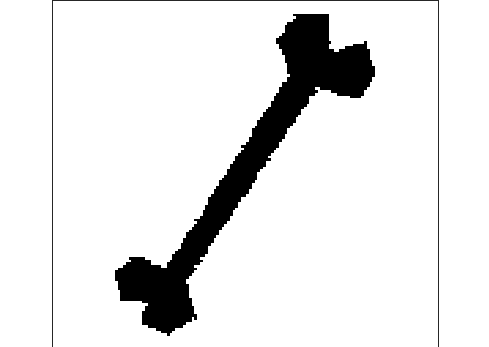}
\includegraphics[width=0.07\textwidth]{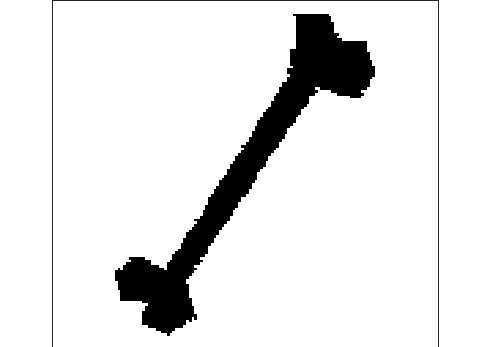}
\includegraphics[width=0.07\textwidth]{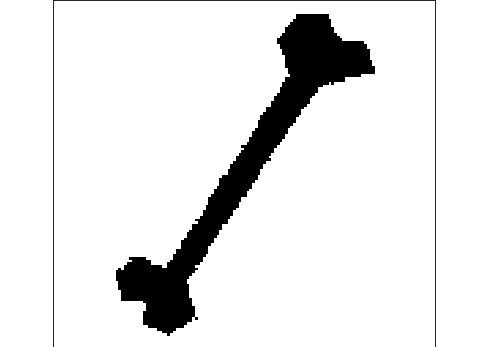}
\includegraphics[width=0.07\textwidth]{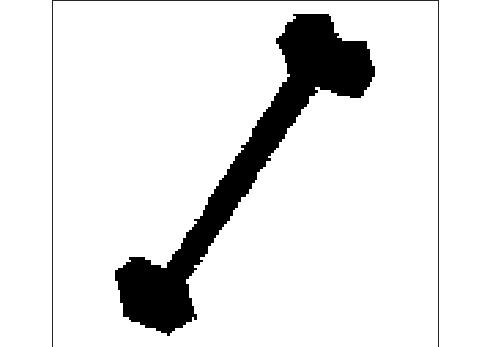}
\includegraphics[width=0.07\textwidth]{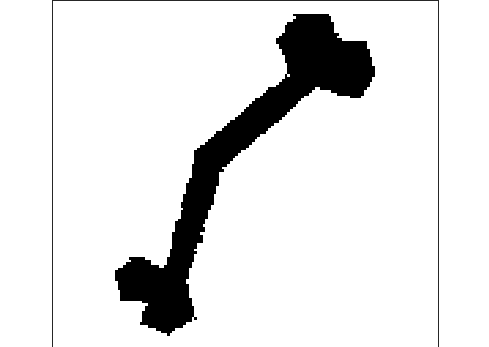}
\includegraphics[width=0.07\textwidth]{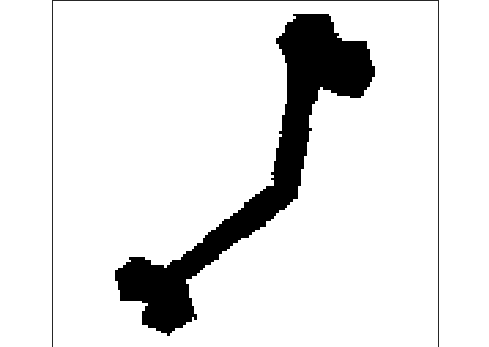}
\includegraphics[width=0.07\textwidth]{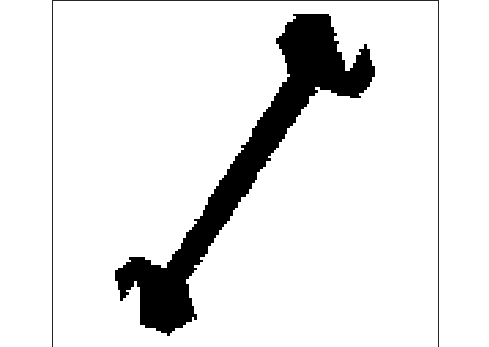}
\includegraphics[width=0.07\textwidth]{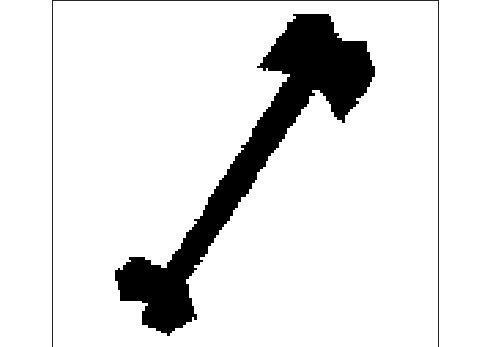}
\includegraphics[width=0.07\textwidth]{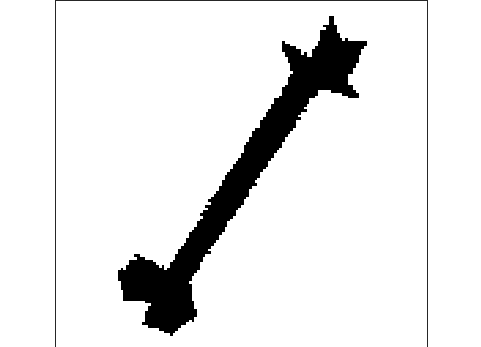}
\includegraphics[width=0.07\textwidth]{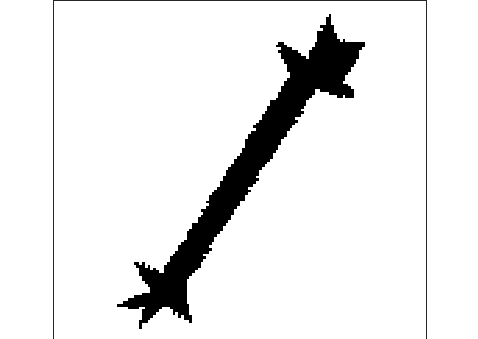}
\vskip 2mm
\includegraphics[width=0.064\textwidth]{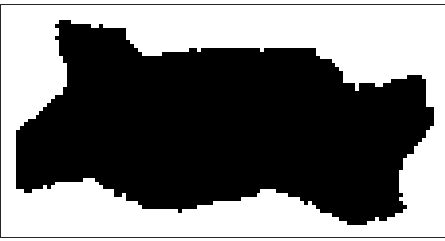}
\includegraphics[width=0.064\textwidth]{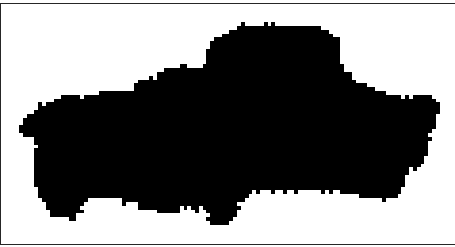}
\includegraphics[width=0.064\textwidth]{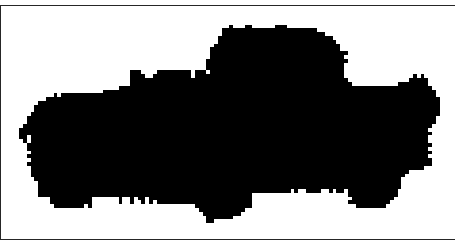}
\includegraphics[width=0.064\textwidth]{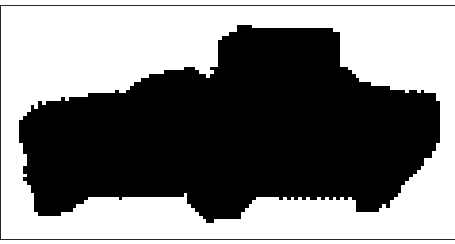}
\includegraphics[width=0.064\textwidth]{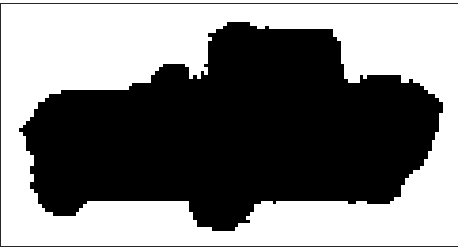}
\includegraphics[width=0.064\textwidth]{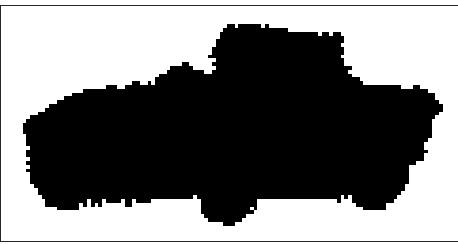}
\includegraphics[width=0.064\textwidth]{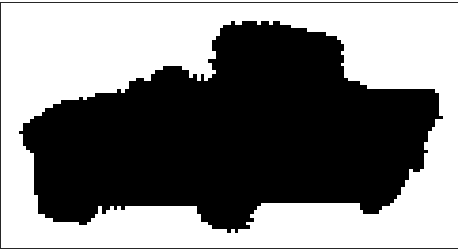}
\includegraphics[width=0.064\textwidth]{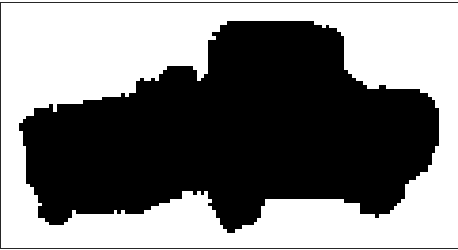}
\includegraphics[width=0.064\textwidth]{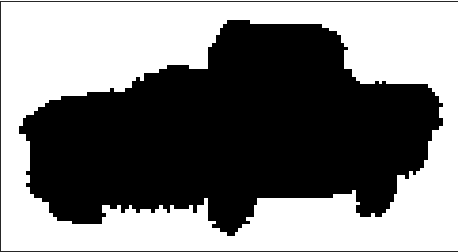}
\includegraphics[width=0.064\textwidth]{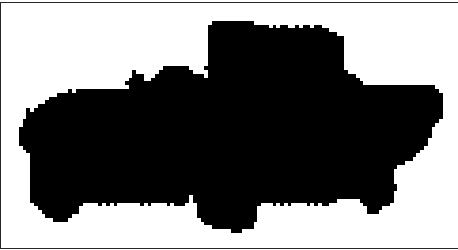}
\includegraphics[width=0.064\textwidth]{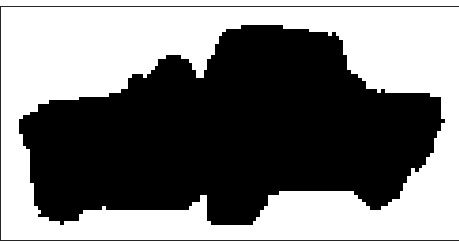}
\includegraphics[width=0.064\textwidth]{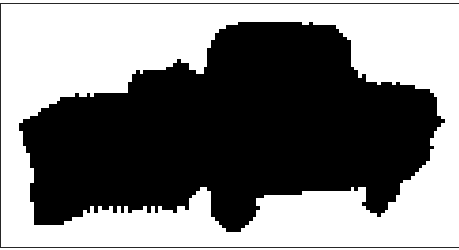}
\includegraphics[width=0.064\textwidth]{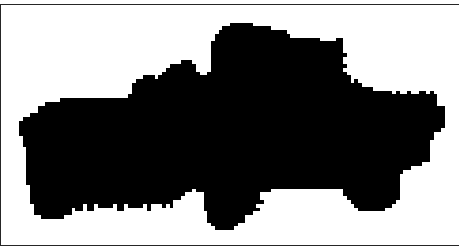}
\vskip 2mm
\includegraphics[width=0.07\textwidth]{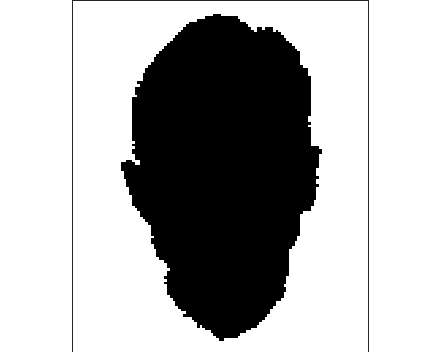}
\includegraphics[width=0.07\textwidth]{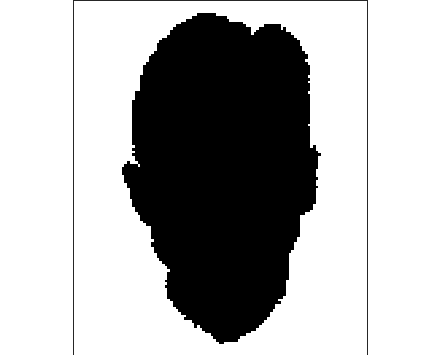}
\includegraphics[width=0.07\textwidth]{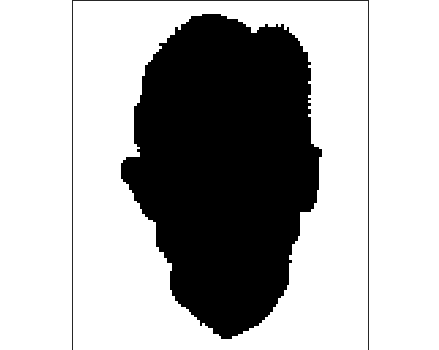}
\includegraphics[width=0.07\textwidth]{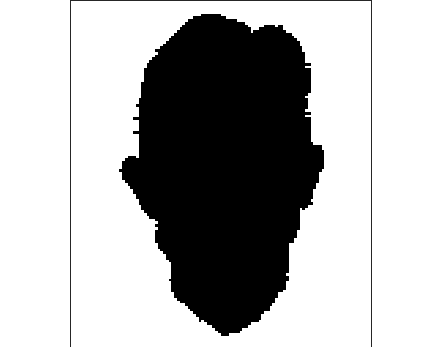}
\includegraphics[width=0.07\textwidth]{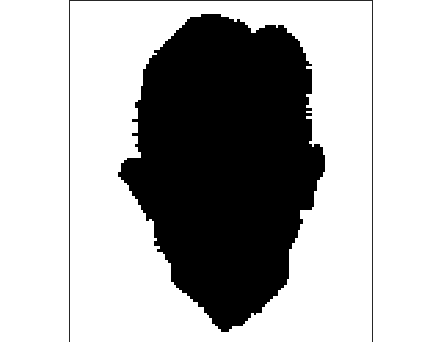}
\includegraphics[width=0.07\textwidth]{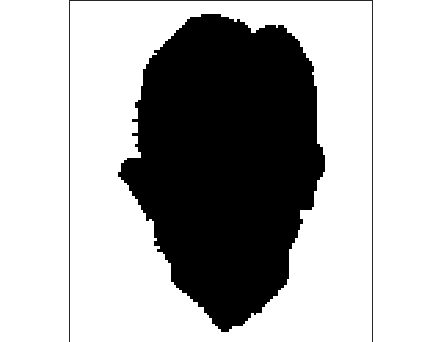}
\includegraphics[width=0.07\textwidth]{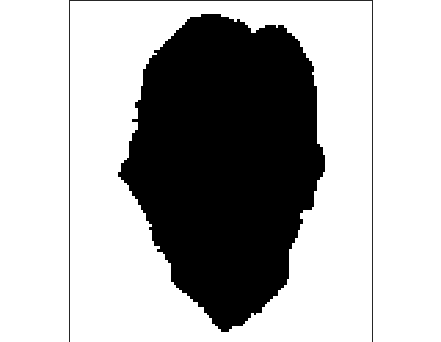}
\includegraphics[width=0.07\textwidth]{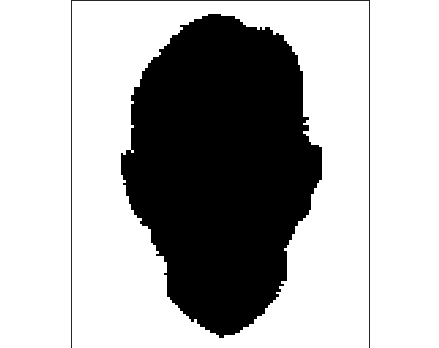}
\includegraphics[width=0.07\textwidth]{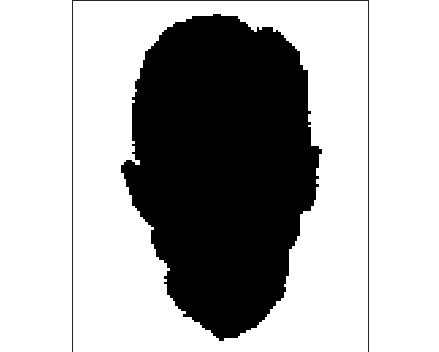}
\includegraphics[width=0.07\textwidth]{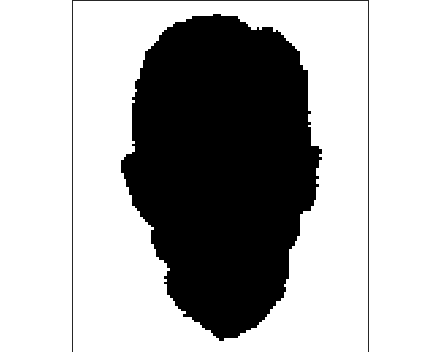}
\includegraphics[width=0.07\textwidth]{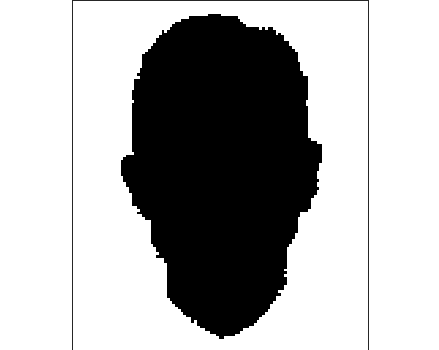}
\includegraphics[width=0.07\textwidth]{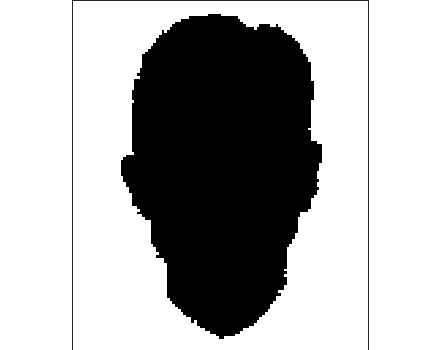}
\vskip 2mm
\includegraphics[width=0.075\textwidth]{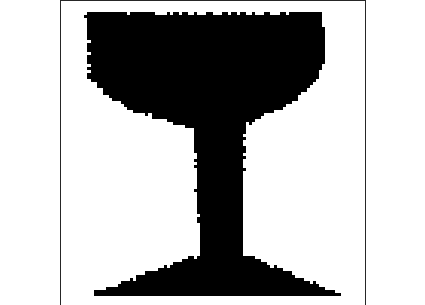}
\includegraphics[width=0.075\textwidth]{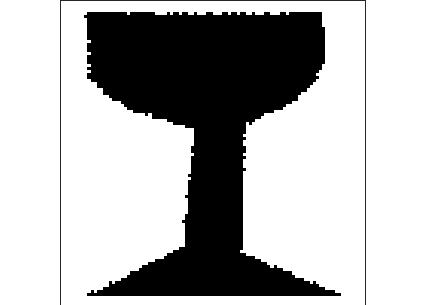}
\includegraphics[width=0.075\textwidth]{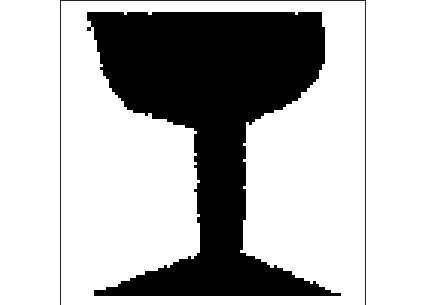}
\includegraphics[width=0.075\textwidth]{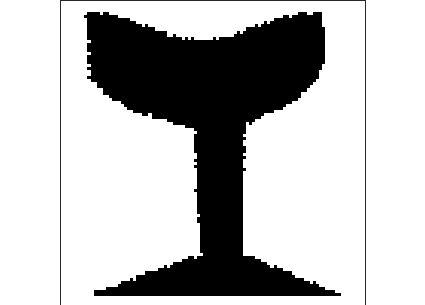}
\includegraphics[width=0.075\textwidth]{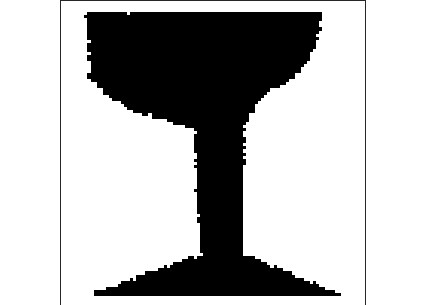}
\includegraphics[width=0.075\textwidth]{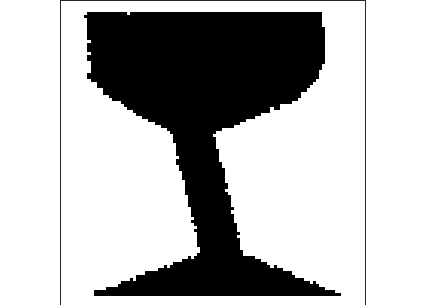}
\includegraphics[width=0.075\textwidth]{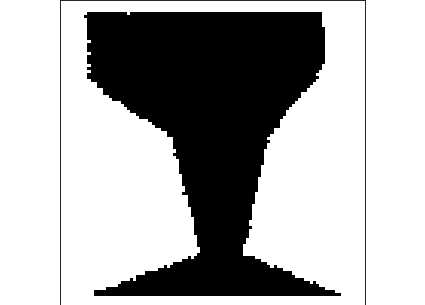}
\includegraphics[width=0.075\textwidth]{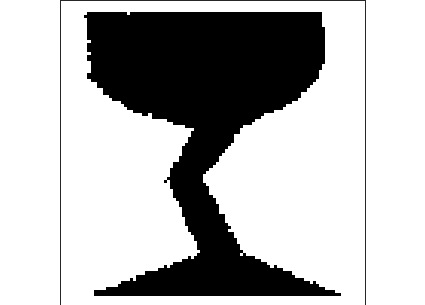}
\includegraphics[width=0.075\textwidth]{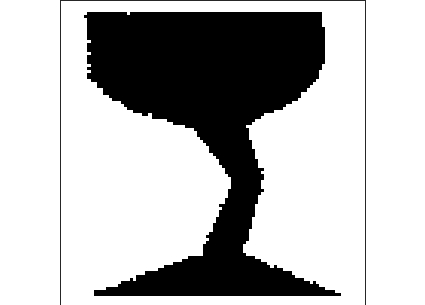}
\includegraphics[width=0.075\textwidth]{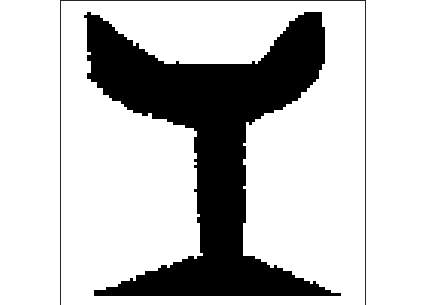}
\includegraphics[width=0.075\textwidth]{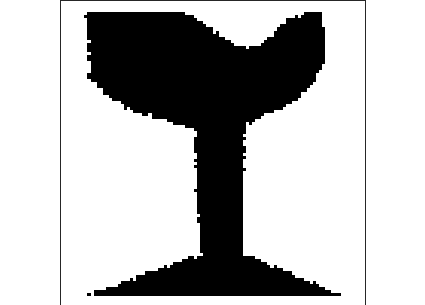}
\vskip 2mm
\includegraphics[width=0.07\textwidth]{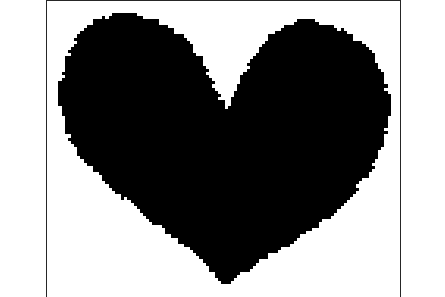}
\includegraphics[width=0.07\textwidth]{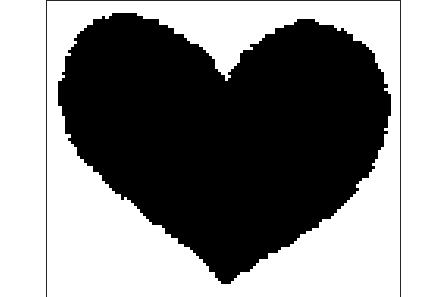}
\includegraphics[width=0.07\textwidth]{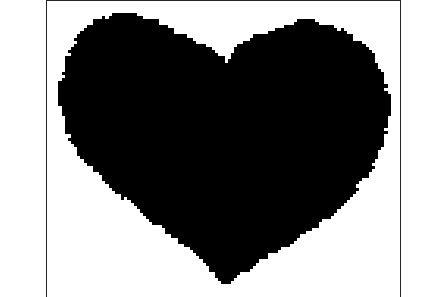}
\includegraphics[width=0.07\textwidth]{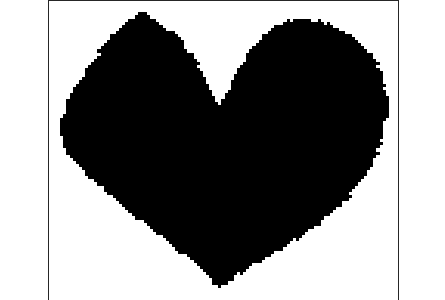}
\includegraphics[width=0.07\textwidth]{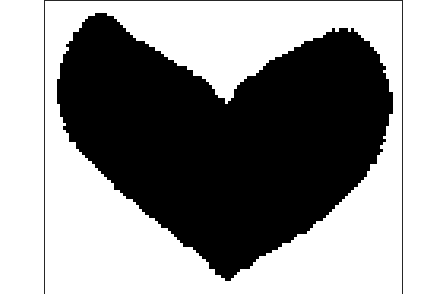}
\includegraphics[width=0.07\textwidth]{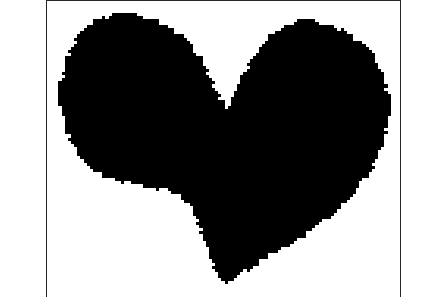}
\includegraphics[width=0.07\textwidth]{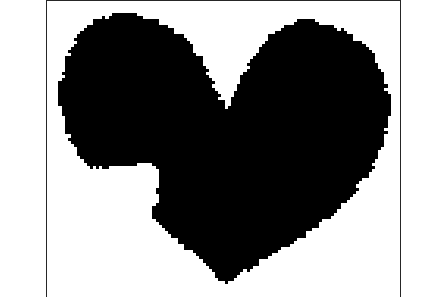}
\includegraphics[width=0.07\textwidth]{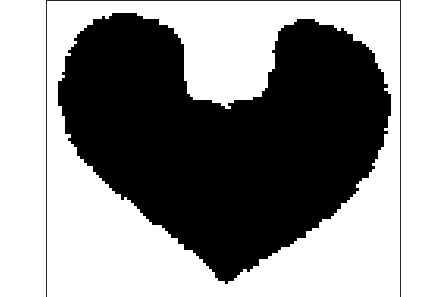}
\includegraphics[width=0.07\textwidth]{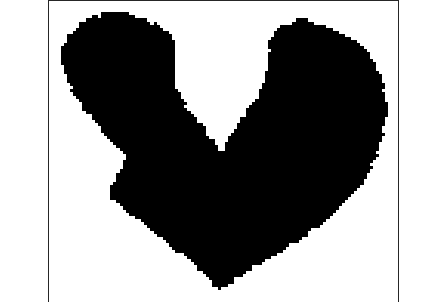}
\vskip 2mm
\includegraphics[width=0.07\textwidth]{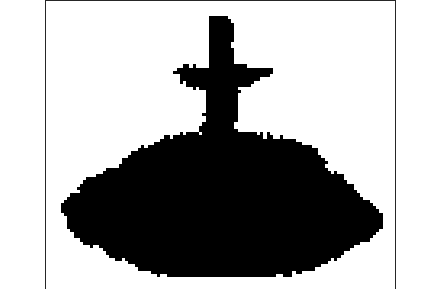}
\includegraphics[width=0.07\textwidth]{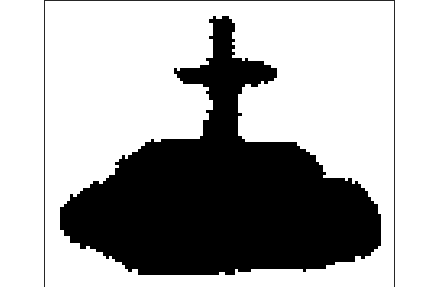}
\includegraphics[width=0.07\textwidth]{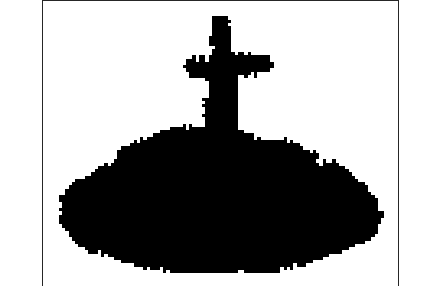}
\includegraphics[width=0.07\textwidth]{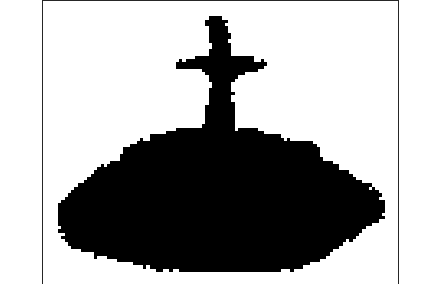}
\includegraphics[width=0.07\textwidth]{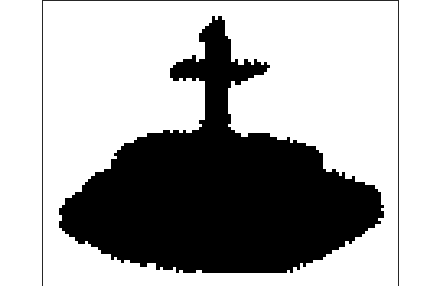}
\includegraphics[width=0.07\textwidth]{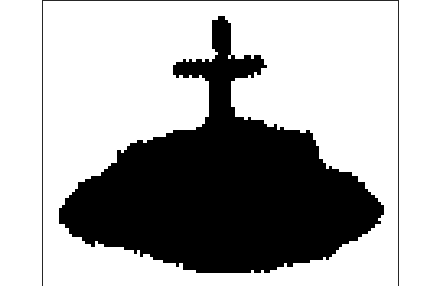}
\includegraphics[width=0.07\textwidth]{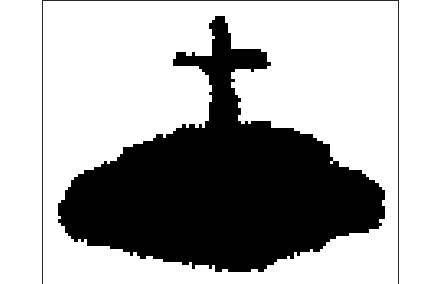}
\includegraphics[width=0.07\textwidth]{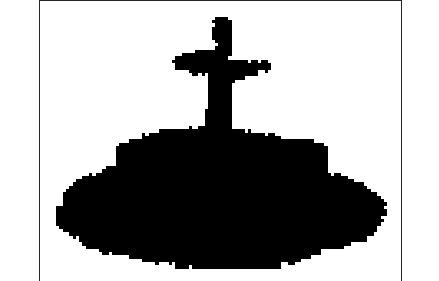}
\includegraphics[width=0.07\textwidth]{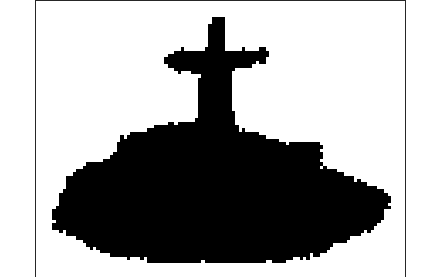}
\includegraphics[width=0.07\textwidth]{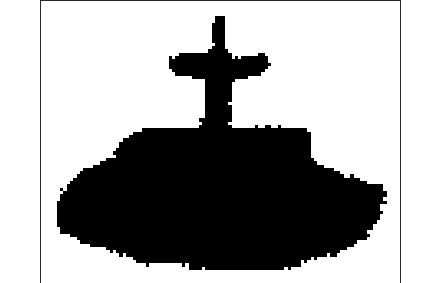}
\includegraphics[width=0.07\textwidth]{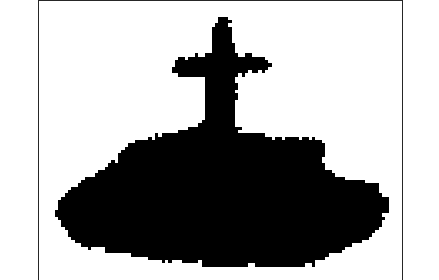}
\includegraphics[width=0.07\textwidth]{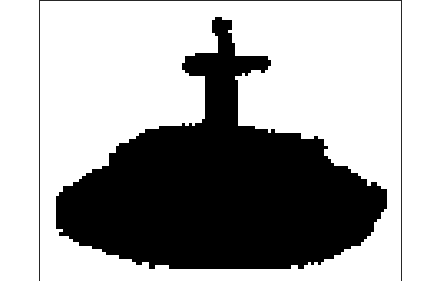}
\vskip 2mm
\includegraphics[width=0.06\textwidth]{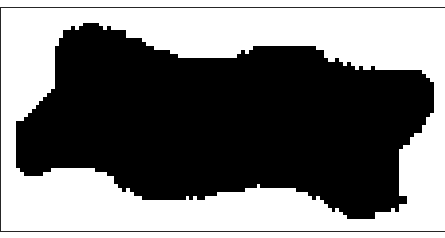}
\includegraphics[width=0.06\textwidth]{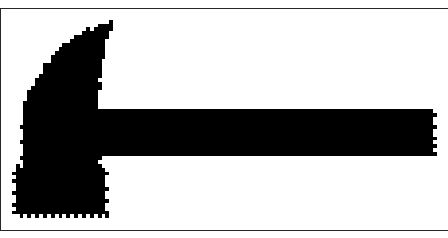}
\includegraphics[width=0.06\textwidth]{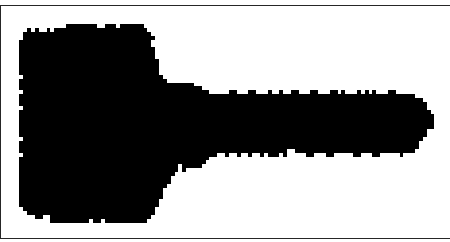}
\includegraphics[width=0.06\textwidth]{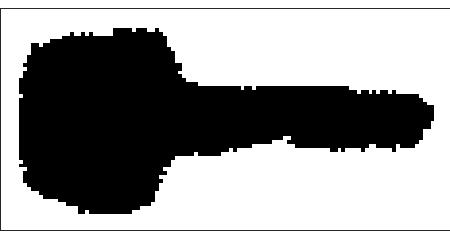}
\includegraphics[width=0.06\textwidth]{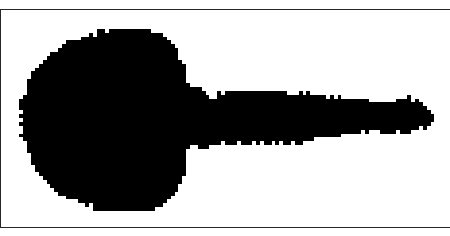}
\includegraphics[width=0.06\textwidth]{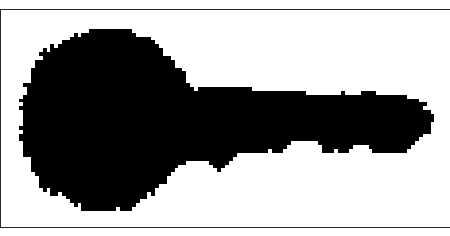}
\includegraphics[width=0.06\textwidth]{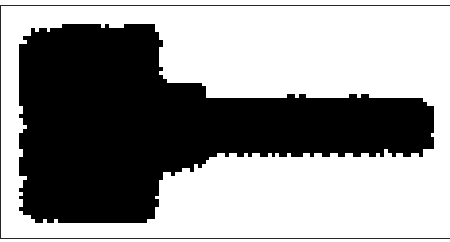}
\includegraphics[width=0.06\textwidth]{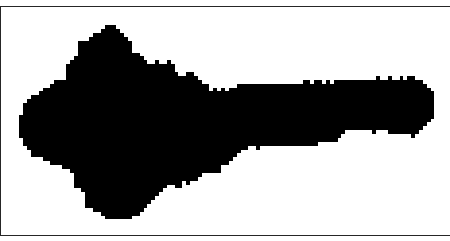}
\includegraphics[width=0.06\textwidth]{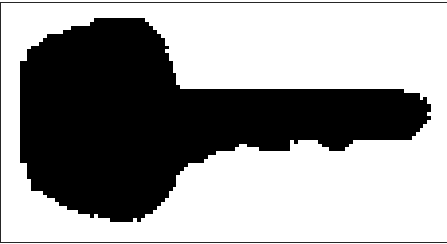}
\includegraphics[width=0.06\textwidth]{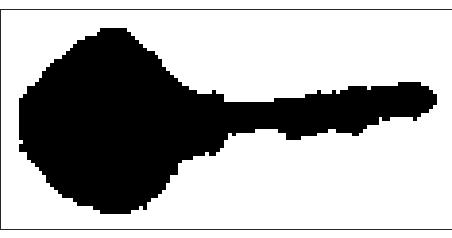}
\includegraphics[width=0.06\textwidth]{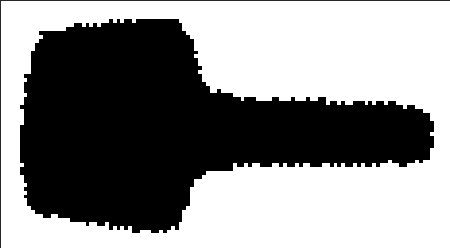}
\includegraphics[width=0.06\textwidth]{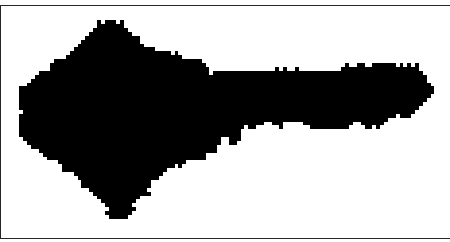}
\includegraphics[width=0.06\textwidth]{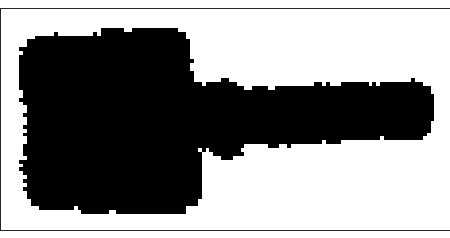}
\includegraphics[width=0.06\textwidth]{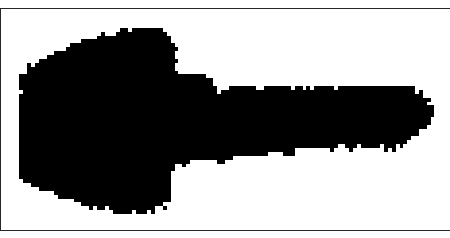}
\vskip 2mm
\includegraphics[width=0.05\textwidth]{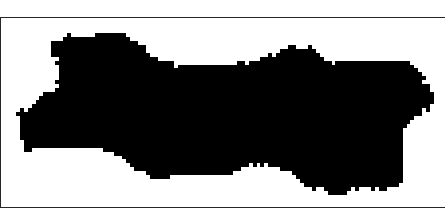}
\includegraphics[width=0.05\textwidth]{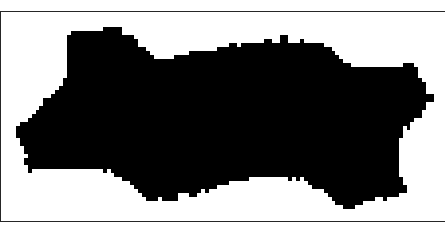}
\includegraphics[width=0.05\textwidth]{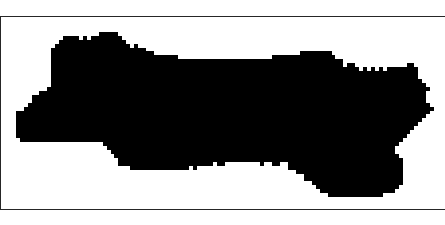}
\includegraphics[width=0.05\textwidth]{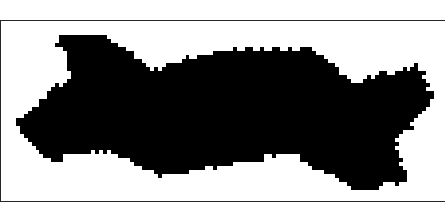}
\includegraphics[width=0.05\textwidth]{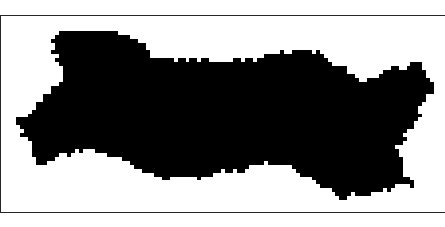}
\includegraphics[width=0.05\textwidth]{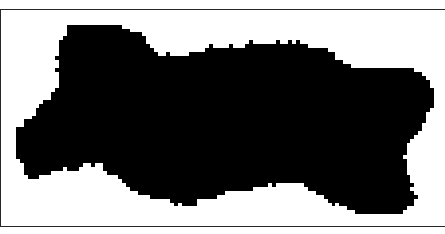}
\includegraphics[width=0.05\textwidth]{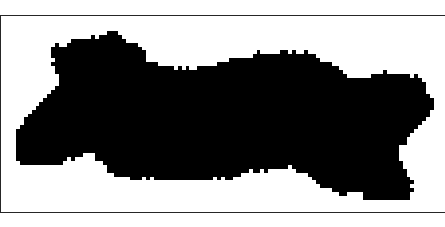}
\includegraphics[width=0.05\textwidth]{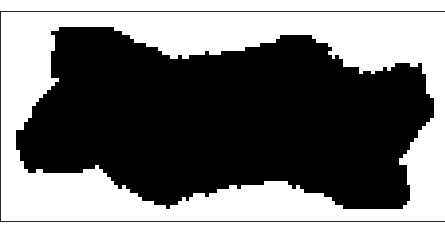}
\includegraphics[width=0.05\textwidth]{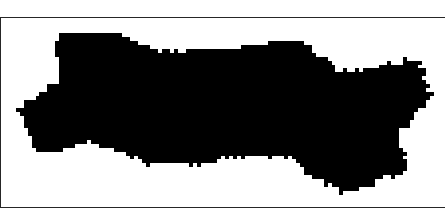}
\includegraphics[width=0.05\textwidth]{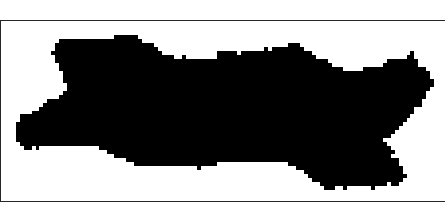}
\includegraphics[width=0.05\textwidth]{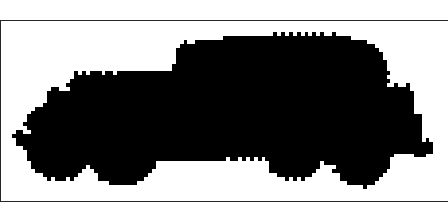}
\includegraphics[width=0.05\textwidth]{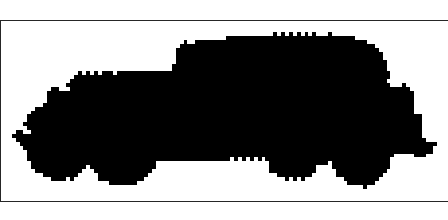}
\includegraphics[width=0.05\textwidth]{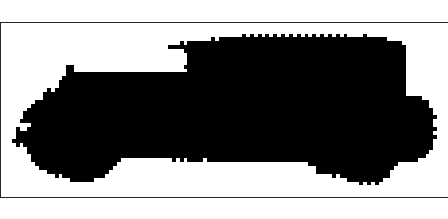}
\includegraphics[width=0.05\textwidth]{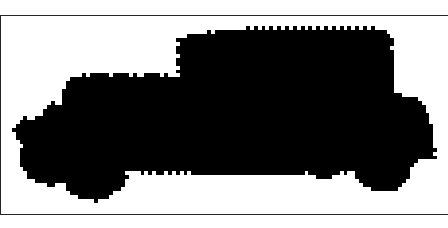}
\includegraphics[width=0.05\textwidth]{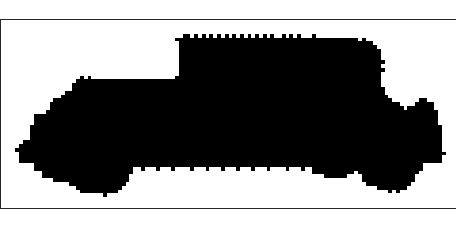}
\includegraphics[width=0.05\textwidth]{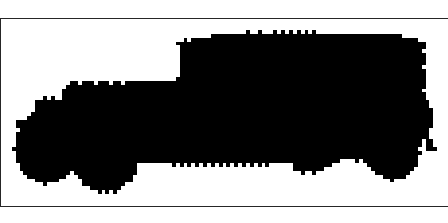}
\includegraphics[width=0.05\textwidth]{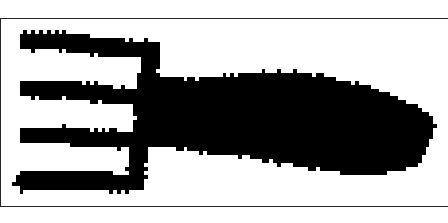}
\vskip 2mm
\includegraphics[width=0.06\textwidth]{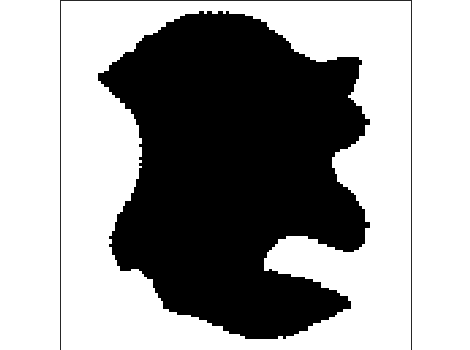}
\includegraphics[width=0.06\textwidth]{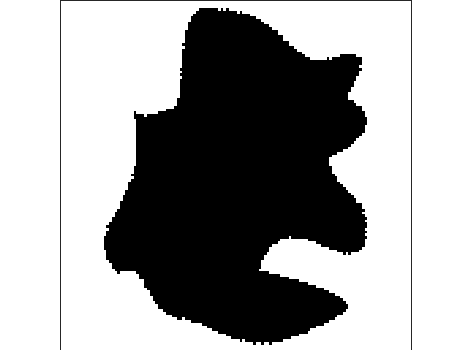}
\includegraphics[width=0.06\textwidth]{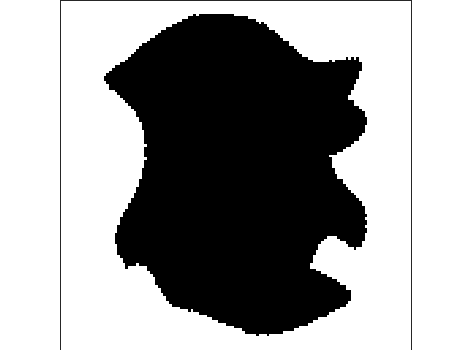}
\includegraphics[width=0.06\textwidth]{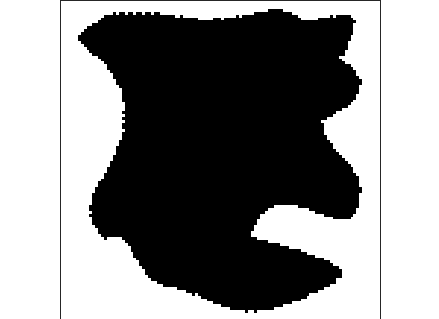}
\includegraphics[width=0.06\textwidth]{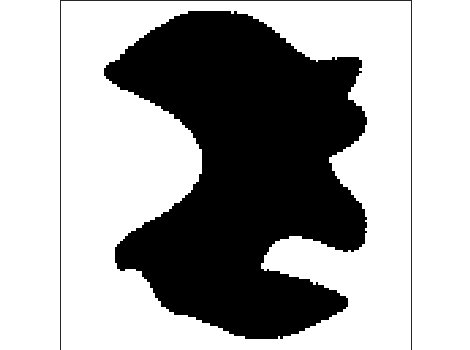}
\includegraphics[width=0.06\textwidth]{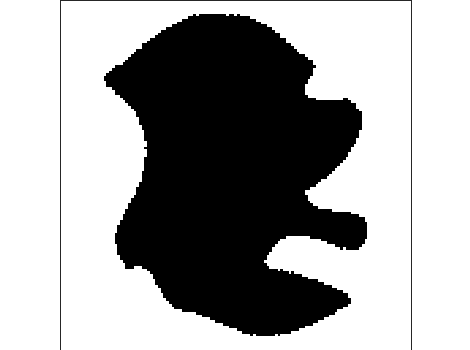}
\includegraphics[width=0.06\textwidth]{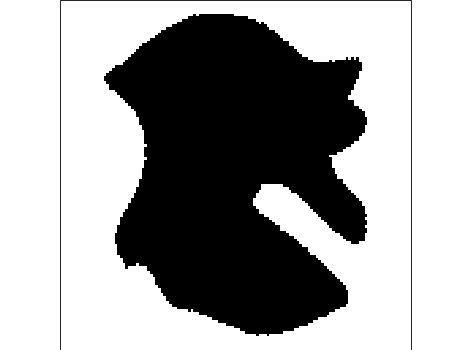}
\includegraphics[width=0.06\textwidth]{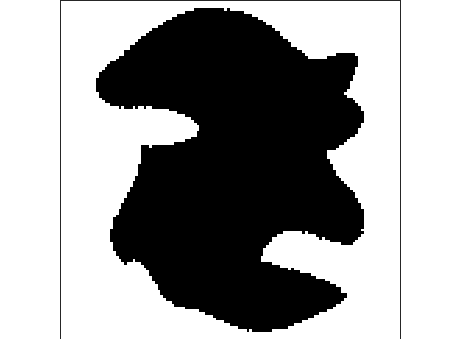}
\includegraphics[width=0.06\textwidth]{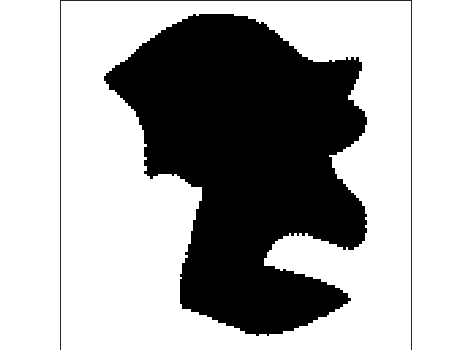}
\includegraphics[width=0.06\textwidth]{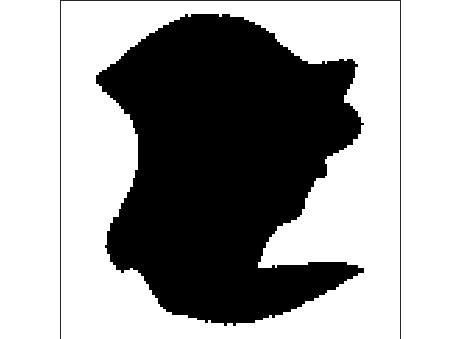}
\includegraphics[width=0.06\textwidth]{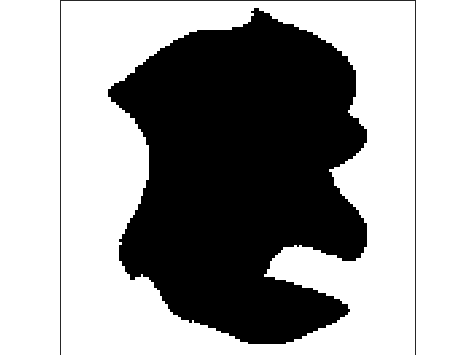}
\includegraphics[width=0.06\textwidth]{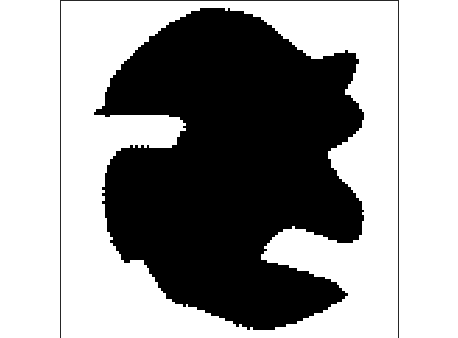}
\includegraphics[width=0.06\textwidth]{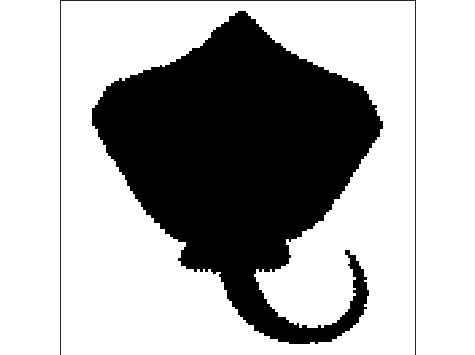}
\includegraphics[width=0.06\textwidth]{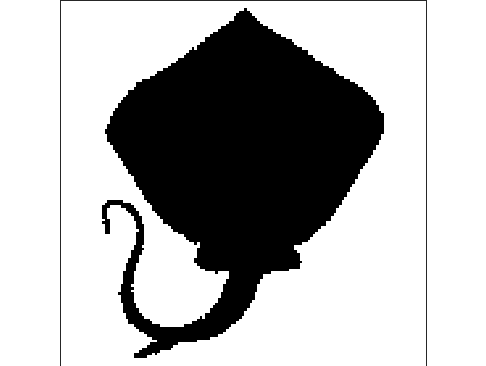}
\vskip -3mm
\caption{ The first 10 clusters clustered by CRLM (every row is one cluster)  }\label{fig:Kimia 216 Alg 2}
\vspace{-5mm}
\end{figure*}

In Figure \ref{fig:Kimia 216 Alg 2} are shown the observations clustered  by Algorithm 2 after each of the first 10 iterations. These clusters can be regarded as top 10 clusters that are more separated from other clusters and that have smaller inner distances within each cluster. 

\noindent{\bf Results on the 1070 shape database.}  For the 1070 shape database, we first conduct similar clustering analysis with different methods.
 However, the results are not as expected since this dataset contains 66 classes and the numbers of images in the different classes is not balanced.
 The class with max number of pictures has over 50 pictures while there are some classes with only  one picture. 
Furthermore, some classes are very similar to each other and they are prone to be clustered as a single cluster.
For example there are several different classes for different types of airplanes and bunnies, which can be easily clustered as just two clusters.

Due to the drawbacks of the labels of the 1070 Shape Database, we reassign the labels and make it similar to our theoretical data structure, GMM with a uniform background. 
We first perform clustering and use the original labels to obtain measurements of the clustering accuracy. 
Then we pick the top 6 clusters that are clustered with high accuracy for every method and label them as 6 positive clusters. Figure \ref{pos_examples} shows the six clusters selected as positive clusters. 
\begin{figure*}[h]
\vspace{-4mm}
\centering
\includegraphics[width=.13\textwidth]{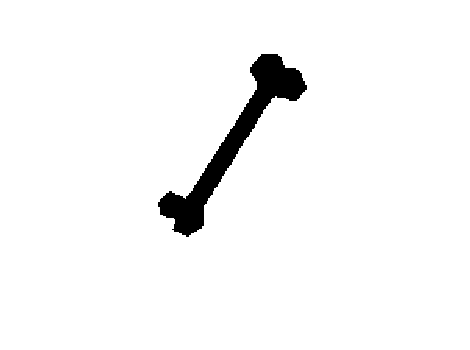}
\includegraphics[width=.13\textwidth]{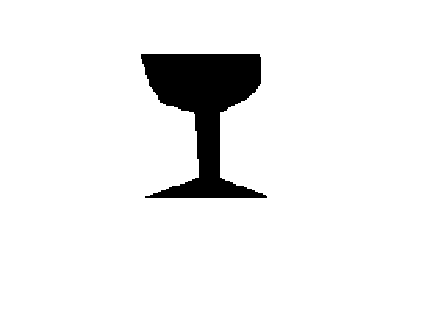}
\includegraphics[width=.13\textwidth]{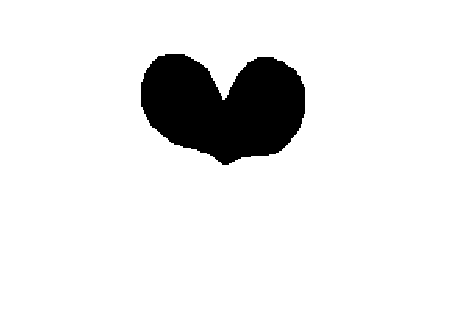}
\includegraphics[width=.13\textwidth]{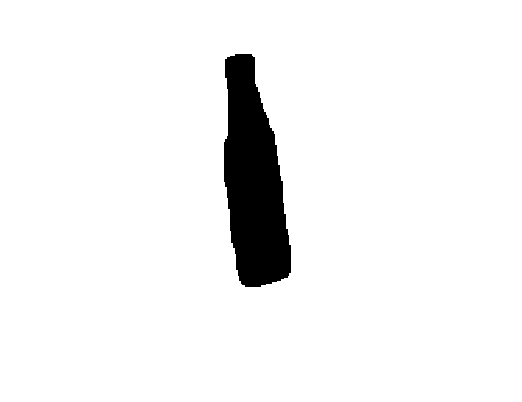}
\includegraphics[width=.13\textwidth]{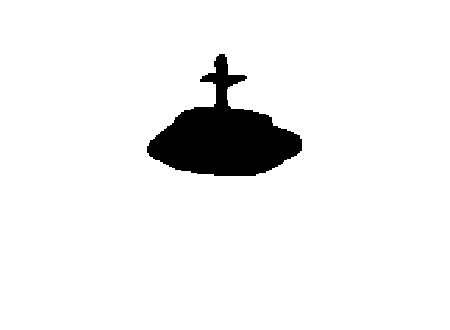}
\includegraphics[width=.13\textwidth]{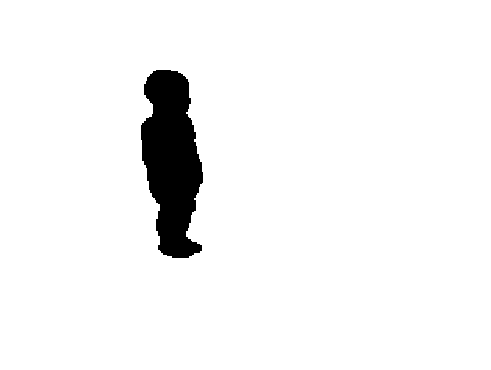}
\vskip -6mm
\caption{ Samples from six  distinct positive clusters that will be used in our experiment.}\label{pos_examples}
\vspace{-4mm}
\end{figure*}

The remaining observations that are not part of these six clusters are labeled as negatives. 
It is obvious that with the reassignment of the ground truth labels of 1070 Shape Data, the clustering accuracy for various clustering algorithms can be greatly improved. 
We take 10 samples from each positive cluster and perform two experiments with different numbers of negatives added to the dataset. 
In the first experiment, we take 60 negative samples, one from each of the 60 remaining original labels. 
In the second experiment, we take 60 more negative samples from the 60 original labels excluding the 6 positive clusters.  
The comparison results are in Table \ref{1070 Shape Database}.

\begin{table}[t]
\vspace{-3mm}
\center{
\scalebox{0.95}{
\begin{tabular}{c|ccccccccc}
\hline
$$  &K-means &DBSCAN &CL& EM & SC &TD & T-SNE&KC&Our (CRLM)  \\
\hline
\multicolumn{4}{c}{60 positives, 60 negative samples}  \\
\hline
Rand Index (\%)      &51.25   &89.17    &45.00 &22.50 &32.61  & 47.50&51.67&72.50&{\bf 99.17}      \\ 

F-measure (\%)  &55.06  &68.37    &41.10  &18.96  &35.83 &17.09&54.59&73.18&{\bf 98.79} \\
\hline
\multicolumn{4}{c}{60 positives, 120 negative samples} \\
\hline
Rand Index (\%)    &65.86   &66.11    &55.00 &46.67 &27.78 &55.56&47.22&56.11  &{\bf98.89}      \\ 
F-measure (\%)  &47.77  &  73.86   &37.43  &21.39 &29.13 &33.61 &54.24&66.64 &{\bf 98.38}\\
\hline
\end{tabular}}}
\vskip -2mm
\caption{Comparison for different methods on subsets of the 1070 Shape Database}\label{1070 Shape Database}
\vspace{-4mm}
\end{table}

In both cases, our loss-based approach (CLRM) outperforms the other methods evaluated in terms of Rand Index and F-measure. 
The clustering results indicate that our method obtains high Rand Index when the data follows the assumptions used for obtaining the theoretical guarantees. 
As more negatives are added to the data, there is still a large gap between the performance of other methods and our method. 
To sum up, when the data fits the GMMUB model well, our model will work very well. 

\subsubsection{IMDB-WIKI Face Dataset and ImageNet Dataset}

In this section we construct a dataset containing images from a mixture of instances of an object (the human face) and various images from many other classes, to see how well different algorithms can cluster the faces correctly. 
For this purpose we obtain faces from the IMDB-WIKI Face dataset  \cite{Rothe-ICCVW-2015} and the rest of the images from the ILSVRC 2012 dataset \cite{ILSVRC15}. Some examples are shown in Figure \ref{face_example}.

\begin{figure*}[ht]
\centering
\includegraphics[width=0.09\textwidth]{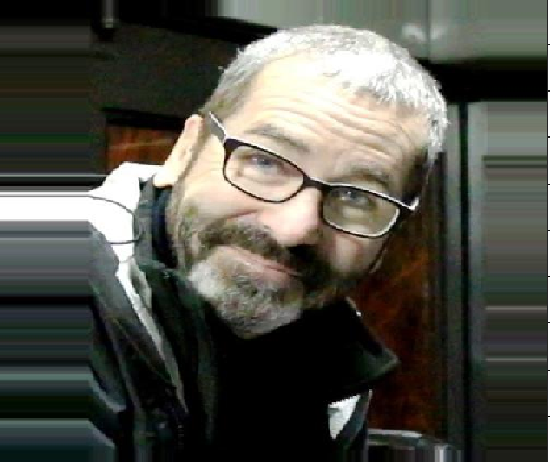}
\includegraphics[width=0.09\textwidth]{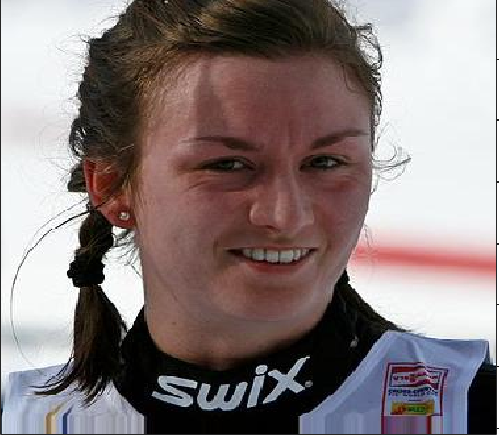}
\includegraphics[width=0.09\textwidth]{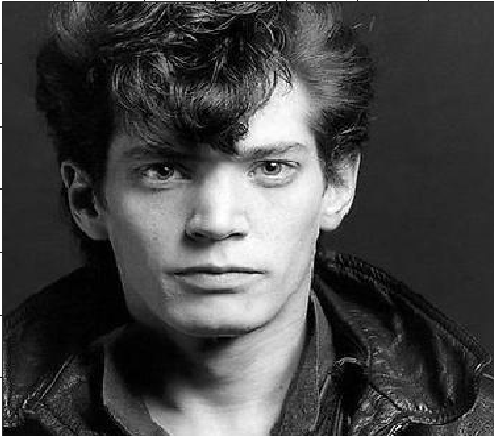}
\includegraphics[width=0.09\textwidth]{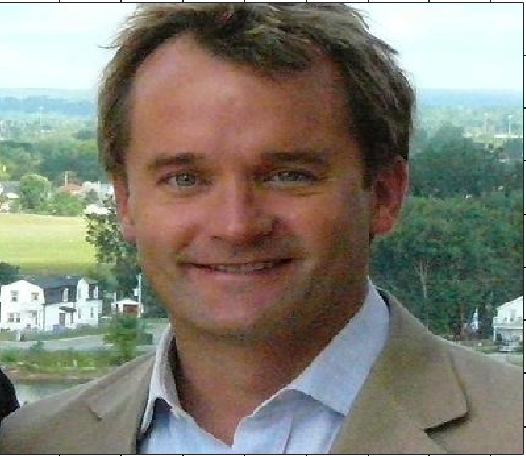}
\includegraphics[width=0.09\textwidth]{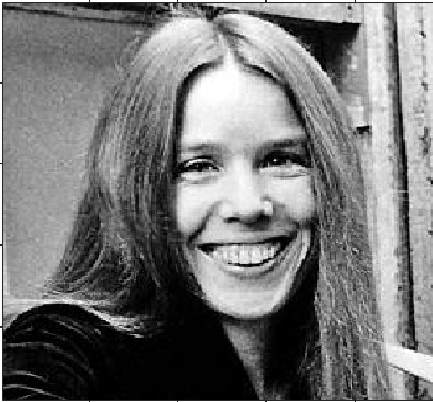}
\includegraphics[width=0.09\textwidth]{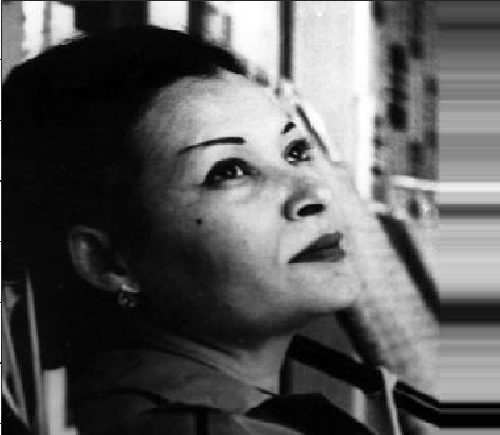}
\includegraphics[width=0.09\textwidth]{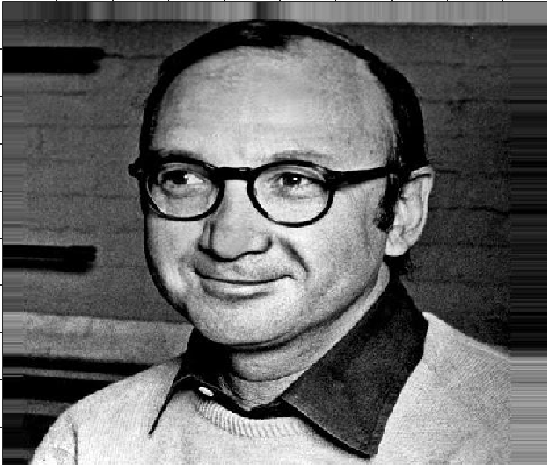}
\includegraphics[width=0.09\textwidth]{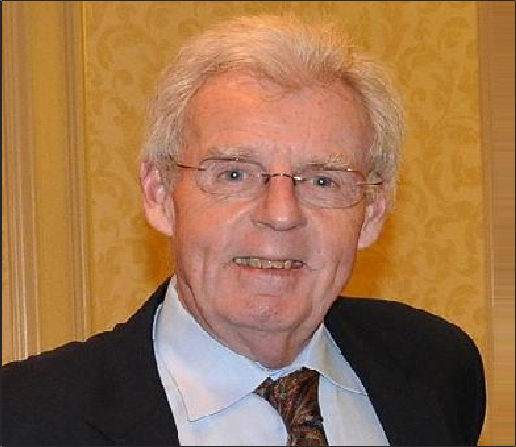}
\includegraphics[width=0.09\textwidth]{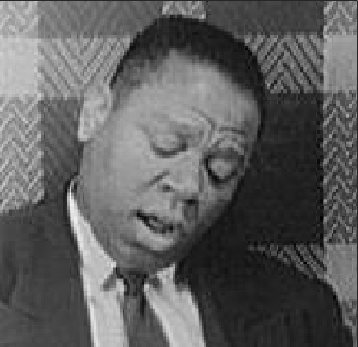}
\includegraphics[width=0.09\textwidth]{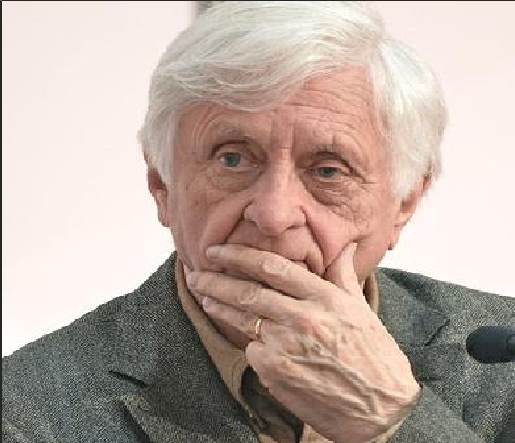}
\vskip 1mm
\includegraphics[width=0.09\textwidth]{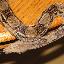}
\includegraphics[width=0.09\textwidth]{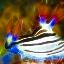}
\includegraphics[width=0.09\textwidth]{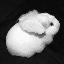}
\includegraphics[width=0.09\textwidth]{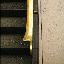}
\includegraphics[width=0.09\textwidth]{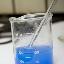}
\includegraphics[width=0.09\textwidth]{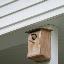}
\includegraphics[width=0.09\textwidth]{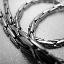}
\includegraphics[width=0.09\textwidth]{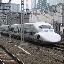}
\includegraphics[width=0.09\textwidth]{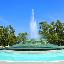}
\includegraphics[width=0.09\textwidth]{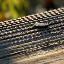}
\vskip -3mm
\caption{ Example images (top : positives, bottom: negatives) }\label{face_example}
\vspace{-5mm}
\end{figure*}

The IMDB-WIKI Face dataset contains  over 500k  face images, while the ILSVRC 2012 dataset contains images of 1000 different classes of objects with 600-1300 images from each class. Observe that the human face is not one of the 1000 classes of the ILSVRC dataset.

\noindent{\bf Data preprocessing.} The images from the IMDB-WIKI  and ILSVRC data sets are resized to $224\times 224$ pixels, and a  pre-trained CNN named VGG-very-deep-16 \cite{simonyan2014very} is used to obtain a 4096 dimensional feature vector for each image. Since the human face was not a label on which the CNN was trained, the clustering of the faces happened just because the CNN obtained a very meaningful representation that generalizes well to other classes. In fact it was observed that the features obtained by these pretrained CNNs can be very useful for many applications \cite{sharif2014cnn}.

As already mentioned, the training data is constructed by taking a subset of the IMDB-Wiki dataset and labeled them as the positive cluster and the rest of the images are picked  randomly from ILSVRC 2012 and are labeled as negatives. The positive/negative labels are not used for clustering, they are only used for computing the clustering accuracy measures.

 The clustering results are presented in Table \ref{ImageNet}. For any algorithm returning two labels, there are two ways of  mapping from the output labels to the true labels. 
The Rand Index and F-measure are calculated for each of the two ways and the maximum value is reported in Table \ref{ImageNet}. 
 
 For DBSCAN, the number of output labels (clusters) can be larger than 2. In this case we set the class with the largest number of observations as one class and the rest of the observations are labeled as the other class.  

\begin{table}[t]
\vspace{-4mm}
\center{
\scalebox{0.95}{
\begin{tabular}{c|cccccccc}
\hline
$$  &K-means & DBSCAN &CL & EM & SC& TSNE&KC&Our (CRLM)  \\
\hline
\multicolumn{4}{c}{$n=50$, $n_p=30$}  \\
\hline
Rand Index (\%)      &78.20   &89.40    &62.40 &57.20 & 55.31&55.80 &71.82&{\bf 99.60}    \\ 
F-Measure(\%)    &85.79 &97.27   &76.15 &62.23 &59.60&59.46&81.90&{\bf 99.66}     \\ 
\hline
\multicolumn{4}{c}{$n=100$,$n_p=30$} \\
\hline
Rand Index (\%)    &66.20  &94.90  &67.90  &55.40 &54.40&55.6&89.88& {\bf 99.80}      \\ 
F-Measure(\%)     &61.05  &91.86    &46.92  &43.02 &42.15&44.03&87.84&{\bf 99.67}      \\ 
\hline
\multicolumn{4}{c}{$n=200$, $n_p=30$} \\
\hline
Rand Index (\%)    &71.25   &95.85  &83.95 &53.15 & 54.55&51.75&90.35&{\bf 99.70}      \\ 
F-Measure(\%)        &31.43  &86.39    &26.33 &26.88 &26.38&25.71&91.17&{\bf 98.99}     \\ 
\hline
\multicolumn{4}{c}{$n=500$, $n_p=30$} \\
\hline
Rand Index (\%)    &75.40   &96.72    &93.48 &52.86  &51.84&51.00& 92.22&{\bf 99.76}      \\ 
F-Measure(\%)     &13.77  &76.91   &11.38 &12.01  &12.29&12.53&90.78&{\bf 98.02}      \\ 
\hline
\multicolumn{4}{c}{$n=1000$, $n_p=30$} \\
\hline
Rand Index (\%)    &74.88  &97.09   &96.68 &52.44 & 51.46 &50.45&93.35& {\bf 99.86}      \\ 
F-Measure(\%)    &7.43  & 53.21  &5.84&6.27 &6.66&6.39&92.00&{\bf 97.70}  \\ 
\hline
\multicolumn{4}{c}{$n=3000$, $n_p=30$} \\
\hline
Rand Index (\%)    &75.76  &98.90   &98.74 &51.35 & 50.45 &50.23& 90.88&{\bf 99.88}      \\ 
F-Measure(\%)    &2.57  &54.44  &1.99 &2.19 &2.34&2.21&91.10&{\bf 94.16}  \\ 
\hline
\end{tabular}}}
\caption{Accuracy of clustering algorithms on subset of  IMDB-WIKI face data set and ImageNet Dataset}\label{ImageNet}
\vspace{-6mm}
\end{table}

The results show the best accuracy is obtained by our method. However, note that when  the number of negatives increase, a high Rand Index can be achieved by just clustering all data as a single cluster. 
Hence,  comparison using the F-Measure and Rand Index together can provide a more reliable measure of the clustering accuracy. CRLM outperforms other methods with high  F-Measure and Rand Index. This indicates that the construction of the dataset probably satisfies the conditions of the GMMUB. 
Besides, when more negative observations are used, although the Rand Index doesn't decrease, the decreasing F-Measure shows that the clustering performance for CRLM decreases slightly. 
Among the other clustering algorithms, KNNCLUST outperforms the others with an acceptable Rand Index and F-measure when $n$ is small. EM and Spectral Clustering have similar results that cluster the dataset half into positives and half into negatives.

\section{Conclusion and Future Work}

In this paper, we proposed a novel  method (CRLM) based on robust loss minimization for clustering Gaussian Mixtures together with an extra mixture component that is a uniform distribution. This setup comes closer to the reality existent in image data where the images contain from some objects of interest (e.g. faces, buildings, cars, etc), but most of the images contain just clutter (e.g. outdoor images, or other rare random objects). In a standard GMM such clutter images are considered outliers, and therefore must be removed. However, in our setup these outliers dominate the data by a ratio 100:1 or more.

The basic assumptions for our algorithm are: 1. Isotropic Gaussians for the foreground (positive) clusters.  2. Large radius $D\sqrt{d}$ for the  background samples. 3.  Sufficient separation between any two positive clusters. 
Unlike other clustering methods, our algorithm enjoys strong theoretical guarantees that it finds the correct clusters with high probability, and does not depend on an initialization.
 Moreover, it can work with a predefined number of clusters or it can estimate the number of clusters. 

In the simulations, we generated data as a GMM with  a Uniform Background  under the assumptions above, with the majority of data points coming from the background component. 
The simulation experiments indicate that CRLM can obtain results close to perfect as long as the sample size is large enough. 
We also conducted an analysis of the robustness of CRLM with regards to  $\sigma_{\max}$ and the estimation of the number of clusters $k$. 
For real data analysis, we experimented with an original dataset and with two subsets constructed to have a structure similar to GMM with a uniform background. 
The real data results witness that CRLM usually outperforms other classic, regular clustering methods.

However, there are  still some drawbacks of CRLM that could lead to potential future work to overcome them.  
On one hand, the effectiveness of CRLM relies on some assumptions that are sometimes difficult to satisfy.  
On the other hand, real data clustering results of CRLM and other clustering methods on large image datasets  are far from being satisfactory.  
Hence, our future work will focus on two aspects.
 Firstly, we plan to modify and improve  our algorithm to make it applicable to more general cases.  
Secondly, we plan to apply our algorithm to other image datasets such as bioinformatics data, and investigate scalable semi-supervised learning approaches based on our algorithm for real image data.

\section*{Acknowledgment}

The work is supported in part by DARPA ARO W911NG-16-1-0579. 

\appendix

\section{Proofs}\label{app}

There are several technical lemmas and propositions that will be useful for the proofs.

\begin{proposition}
If $\bx\in \RR^d$ is a uniform sample inside the ball of radius $R$ centered at $0$, then the pdf of $u=\|\bx\|^2$ is \label{prop:pdfu}
$$f(u)\propto 
\begin{cases}
u^{d/2-1} &\text{if }\; u<R^2\\
0 &\text{else}
\end{cases}.$$
Therefore $u/R^2\sim Beta(d/2,1)$. 
\end{proposition}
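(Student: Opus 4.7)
The plan is to compute the CDF of $\|\bx\|$ directly from the volume of Euclidean balls, then convert to a density for $u=\|\bx\|^2$, and finally rescale to recognize the Beta distribution.

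First I would use the fact that $\bx$ is uniform inside the ball $B_R$ of radius $R$ centered at the origin. The event $\{\|\bx\|\le r\}$ corresponds to $\bx$ lying in the sub-ball $B_r$, which has volume $C_d r^d$ for the usual constant $C_d=\pi^{d/2}/\Gamma(d/2+1)$. Since $\bx$ is uniform, the probability of this event is the volume ratio, giving the CDF
\[
F_{\|\bx\|}(r)=\frac{C_d r^d}{C_d R^d}=\left(\frac{r}{R}\right)^{d},\qquad r\in[0,R].
\]

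Next I would pass to $u=\|\bx\|^2$. Because $\|\bx\|\ge 0$, squaring is monotone, so
\[
F_u(t)=P(\|\bx\|^2\le t)=P(\|\bx\|\le\sqrt{t})=\frac{t^{d/2}}{R^{d}},\qquad t\in[0,R^2],
\]
and $F_u(t)=0$ for $t<0$, $F_u(t)=1$ for $t>R^2$. Differentiating on $[0,R^2]$ yields
\[
f_u(t)=\frac{d}{2}\,\frac{t^{d/2-1}}{R^{d}}\propto t^{d/2-1},
\]
which is the first claim.

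Finally, to identify the distribution of $v=u/R^2$, I would apply the change-of-variables formula $f_v(v)=R^2 f_u(R^2 v)=\tfrac{d}{2}\,v^{d/2-1}$ on $[0,1]$. Recognizing that the Beta$(\alpha,\beta)$ density is $v^{\alpha-1}(1-v)^{\beta-1}/B(\alpha,\beta)$, which for $\beta=1$ reduces to $\alpha v^{\alpha-1}$, I would conclude that $v\sim\mathrm{Beta}(d/2,1)$.

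There is really no hard step here; the computation is routine. The only thing worth being careful about is ensuring the monotone transformation $r\mapsto r^2$ is handled correctly on the nonnegative half-line (so that no Jacobian factor is lost) and that the normalization of the Beta$(d/2,1)$ density is checked against the derived $\tfrac{d}{2}v^{d/2-1}$ to verify consistency.
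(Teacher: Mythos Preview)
Your proposal is correct and follows essentially the same approach as the paper: compute the CDF of $u=\|\bx\|^2$ via the volume ratio $P(\|\bx\|\le\sqrt{u})=u^{d/2}/R^d$ and differentiate to obtain the density. You additionally spell out the rescaling and normalization that identifies $u/R^2$ with a $\mathrm{Beta}(d/2,1)$ law, which the paper simply asserts, but this is a routine elaboration rather than a different method.
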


\begin{proof} We have the CDF
$$ P(\Vert \bx\Vert^{2}\leq u)= P(\Vert \bx\Vert\leq \sqrt{u}) =\frac{u^{d/2}}{R^{d}},$$ because of the volume ratio of the $d$-dimensional balls of radius $\sqrt{u}$ and $R$. By taking the derivative of the CDF, we obtain the pdf.
\end{proof}

\begin{corollary} \label{cor:pdfuniform}
If $\bx\in \RR^d$ is a uniform sample inside the ball of radius $R=\sigma\sqrt{dG}$ centered at $0$, then the pdf of the random variable $L=\displaystyle \min(\frac{\Vert \bx\Vert^{2}}{d\sigma^{2}}-G,0)$ is
\begin{equation}
f_\sigma(L)\propto 
\begin{cases}
(L+G)^{d/2-1} &\text{if }\; L\in [-G,0]\\
0 &\text{else} \label{eq:negpdf}
\end{cases}.
\end{equation}
and the expected value of  L is $E[L]=-G/(d/2+1)$.
\end{corollary}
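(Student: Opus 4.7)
The plan is to push the density from Proposition \ref{prop:pdfu} through the affine change of variables that defines $L$, and then to compute the mean by invoking the Beta distribution identity already noted in that proposition.

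First I would observe that because $\bx$ lies in the ball of radius $R=\sigma\sqrt{dG}$, we have $\|\bx\|^2 \le d\sigma^2 G$ almost surely, so $\|\bx\|^2/(d\sigma^2)-G \le 0$ and the $\min$ in the definition of $L$ is never binding. Hence $L = \|\bx\|^2/(d\sigma^2)-G$ almost surely, and the distribution of $L$ is just a deterministic affine image of the distribution of $u = \|\bx\|^2$.

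Next, I would apply the transformation-of-density formula. By Proposition \ref{prop:pdfu}, $u$ has density proportional to $u^{d/2-1}$ on $[0,R^2]$. With the bijection $u = d\sigma^2(L+G)$, whose Jacobian $d\sigma^2$ is a positive constant, the pushforward density of $L$ is proportional to $(d\sigma^2(L+G))^{d/2-1}$; absorbing the constant into the proportionality yields $f_\sigma(L)\propto (L+G)^{d/2-1}$, and the support $u\in[0,R^2]$ translates to $L\in[-G,0]$. This is exactly \eqref{eq:negpdf}.

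For the expected value, the cleanest route is via the Beta identity $u/R^2 \sim \mathrm{Beta}(d/2,1)$ from Proposition \ref{prop:pdfu}, which gives $E[u] = R^2\cdot\frac{d/2}{d/2+1} = d\sigma^2 G\cdot\frac{d/2}{d/2+1}$. Linearity then yields
\[
E[L] \;=\; \frac{E[u]}{d\sigma^2} - G \;=\; G\left(\frac{d/2}{d/2+1}-1\right) \;=\; -\frac{G}{d/2+1}.
\]
There is essentially no obstacle here; the only subtlety worth flagging is the initial observation that the truncation by $\min(\cdot,0)$ is inactive under the specific choice $R=\sigma\sqrt{dG}$, without which the density formula would require an atom at $L=0$.
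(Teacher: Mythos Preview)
Your proposal is correct and follows essentially the same route as the paper: both argue via the affine change of variables $u=d\sigma^2(L+G)$ from Proposition~\ref{prop:pdfu} to obtain the density, and both compute $E[L]$ by invoking the $\mathrm{Beta}(d/2,1)$ identification of $u/R^2$. Your explicit remark that the $\min$ is inactive under $R=\sigma\sqrt{dG}$ is a welcome clarification the paper leaves implicit.
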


\begin{proof} Denoting $u=\|\bx\|^2$ we have  
\[
L=\min(\frac{u}{d\sigma^{2}}-G,0)=
\min(\frac{u-R^2}{d\sigma^{2}},0)
\]
so $0\leq u\leq R^2$ iff $L\in [-G,0]$ and in this case $u=d\sigma^2L+R^2=d\sigma^2(L+G)$ and the proof follows from Proposition \ref{prop:pdfu}.

Since, from Proposition 1, $u/R^2=(L+G)/G \sim Beta(d/2,1)$
\[E[L]=E[L+G]-G=\frac{d/2}{d/2+1}G-G=-G/(d/2+1). \]
\end{proof}
\begin{proposition} \label{Guassiannorm}
For an isotropic Gaussian random variable $\bx\sim N(0,\sigma_1^2I_d)$ the pdf of  $u=\|\bx\|^2$ is \label{prop:pdfnor}
$\Gamma(d/2,2\sigma_1^2)$ i.e.
\[g(u)\propto
u^{d/2-1}\exp(-u/2\sigma_1^{2}).
\]
Thus $E[u]=d\sigma_1^2$.
\end{proposition}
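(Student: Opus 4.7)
The plan is to reduce this to the chi-squared distribution. Writing $\bx = (x_1, \ldots, x_d)$ with the $x_i \sim N(0, \sigma_1^2)$ independent, the normalized coordinates $x_i/\sigma_1$ are iid standard normal, so
\[
\frac{\|\bx\|^2}{\sigma_1^2} \;=\; \sum_{i=1}^d \left(\frac{x_i}{\sigma_1}\right)^2 \;\sim\; \chi^2_d.
\]
Since $\chi^2_d$ coincides with $\Gamma(d/2, 2)$, scaling by $\sigma_1^2$ gives $u = \|\bx\|^2 \sim \Gamma(d/2, 2\sigma_1^2)$, which is exactly the claimed density $g(u) \propto u^{d/2-1}\exp(-u/2\sigma_1^2)$.

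For the expected value I would either invoke the known mean $k\theta$ of a $\Gamma(k,\theta)$ distribution, giving $E[u] = (d/2)(2\sigma_1^2) = d\sigma_1^2$, or simply use linearity of expectation directly: $E[u] = \sum_{i=1}^d E[x_i^2] = d\sigma_1^2$, since each coordinate has mean zero and variance $\sigma_1^2$.

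If one preferred a self-contained derivation that avoids quoting the chi-squared/gamma identification, I would instead convert the joint density $(2\pi\sigma_1^2)^{-d/2}\exp(-\|\bx\|^2/2\sigma_1^2)$ to spherical coordinates with $r = \|\bx\|$. The radial density is then proportional to $r^{d-1}\exp(-r^2/2\sigma_1^2)$, and the substitution $u = r^2$ with $dr = du/(2\sqrt{u})$ produces the stated form. There is essentially no technical obstacle; the statement is a textbook distributional identity and either route is a routine one-line calculation, which is why the author presents it without proof.
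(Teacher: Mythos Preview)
Your argument is correct. The chi-squared reduction is clean and entirely rigorous: $\|\bx\|^2/\sigma_1^2$ is a sum of $d$ independent squared standard normals, hence $\chi^2_d=\Gamma(d/2,2)$, and the scale change gives $\Gamma(d/2,2\sigma_1^2)$. The expectation follows either from the gamma mean or directly by linearity, as you say.

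However, the paper does \emph{not} leave this unproved. Its proof is precisely your ``self-contained'' alternative: it writes the CDF
\[
P(\|\bx\|^2\le u)=P(\|\bx\|\le\sqrt{u})=cS_d\int_0^{\sqrt{u}}\exp\!\Bigl(-\frac{r^2}{2\sigma_1^2}\Bigr)r^{d-1}\,dr,
\]
using spherical coordinates (with $S_d$ the surface area of the unit sphere), and then differentiates in $u$ to read off the density. So your secondary route coincides with the paper's, while your primary route via the $\chi^2$ identification is a genuinely different---and arguably quicker---path that trades the explicit integral computation for a quoted distributional fact. Both give the same result with no essential difficulty; the paper's version has the minor advantage of being self-contained and parallel to the proof of the preceding Proposition~\ref{prop:pdfu}, while yours is shorter if one is willing to cite the chi-squared/gamma identity.
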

\begin{proof} We have the CDF
\[ P(\Vert \bx\Vert^{2}\leq u)= P(\Vert \bx\Vert\leq \sqrt{u})=cS_d\int_{0}^{\sqrt{u}}\exp(\frac{-r^{2}}{2\sigma_1^{2}})r^{d-1}dr,\]
where $S_d$ is the area of the unit ball in $\RR^d$. By taking the derivative of the CDF, we obtain the pdf.
\end{proof}
\begin{corollary} \label{cor:pdfgaussian}
For an isotropic Gaussian random variable $\bx\sim N(0,\sigma_1^2I_d)$ the pdf of  $L=\displaystyle \min(\frac{\Vert \bx\Vert^{2}}{d\sigma^{2}}-G,0)$ is
\begin{equation}
g_\sigma(L)\propto
\begin{cases}(L+G)^{d/2-1}\exp(-L\sigma^{2}/\sigma_1^{2})&\text{if }\; L\in [-G,0]\\
0 &\text{else} \label{eq:pospdf}
\end{cases}.
\end{equation}
and  $E[L]\leq\sigma_1^2/\sigma^2-G$.
\end{corollary}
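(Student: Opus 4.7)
The plan is to mirror the structure of Corollary~\ref{cor:pdfuniform}, which was derived from Proposition~\ref{prop:pdfu}, and now derive Corollary~\ref{cor:pdfgaussian} from Proposition~\ref{Guassiannorm}. First I would note that on the event $\{\|\bx\|^2 < R^2\}$ with $R = \sigma\sqrt{dG}$, the map $u = \|\bx\|^2 \mapsto L = u/(d\sigma^2) - G$ is a smooth bijection from $(0, d\sigma^2 G)$ onto $(-G, 0)$, with inverse $u = d\sigma^2(L+G)$ and Jacobian $|du/dL| = d\sigma^2$. Substituting the pdf $g(u) \propto u^{d/2-1}\exp(-u/2\sigma_1^2)$ from Proposition~\ref{Guassiannorm} into the standard change-of-variables formula $g_\sigma(L) = g(u)|du/dL|$ and absorbing all factors that do not depend on $L$ into the proportionality constant yields $g_\sigma(L) \propto (L+G)^{d/2-1}\exp(-cL\sigma^2/\sigma_1^2)$ on $(-G,0)$, for an appropriate constant $c$ from the exponential kernel, which matches the claimed form.

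For the expectation bound I would not use the explicit density at all, but instead exploit the pointwise inequality $\min(t, 0) \leq t$ valid for every $t \in \mathbb{R}$. Taking $t = \|\bx\|^2/(d\sigma^2) - G$ and then taking expectations gives
\[
E[L] = E\bigl[\min(\|\bx\|^2/(d\sigma^2) - G,\,0)\bigr] \leq \frac{E[\|\bx\|^2]}{d\sigma^2} - G.
\]
Since Proposition~\ref{Guassiannorm} already supplies $E[\|\bx\|^2] = d\sigma_1^2$, the right-hand side simplifies immediately to $\sigma_1^2/\sigma^2 - G$, giving the stated bound.

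The only delicate point I anticipate is bookkeeping around the truncation at $0$. The random variable $L$ carries a genuine point mass at $L = 0$ equal to $P(\|\bx\|^2 \geq d\sigma^2 G)$, so the expression $g_\sigma$ is only the absolutely continuous part of the distribution of $L$ on $(-G,0)$, not a full pdf integrating to $1$ on that interval. This is analogous to the situation in Corollary~\ref{cor:pdfuniform}, where the same truncation occurs but collapses to measure zero because the uniform sample is already supported inside the ball; in the Gaussian case the tail probability is nonzero and must be acknowledged. The expectation argument above sidesteps this subtlety entirely by using the pointwise inequality rather than integrating against the density, which is why I would present the two parts of the corollary in that order.
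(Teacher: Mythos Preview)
Your proposal is correct and follows essentially the same route as the paper: the change of variables $u=d\sigma^2(L+G)$ combined with Proposition~\ref{Guassiannorm} for the density, and the pointwise inequality $\min(t,0)\le t$ together with $E[\|\bx\|^2]=d\sigma_1^2$ for the expectation bound. Your remarks on the constant $c$ in the exponent and on the point mass at $L=0$ are more careful than the paper itself, which silently passes from $\exp(-Ld\sigma^2/2\sigma_1^2)$ in its derivation to $\exp(-L\sigma^2/\sigma_1^2)$ in the statement and does not mention the atom at $0$.
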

\begin{proof}
Using $u=d\sigma^2(L+G)$ and Proposition \ref{prop:pdfnor} we get that:
\begin{equation}
f(L)\propto
\begin{cases}(L+G)^{d/2-1}\exp(-\frac{d\sigma^2(L+G)}{2\sigma_1^{2}})&\text{if }\; L\in [-G,0]\\
0 &\text{else,} \label{eq:posspdf}
\end{cases}
\end{equation}

so

\begin{equation}
f(L)\propto
\begin{cases}(L+G)^{d/2-1}\exp(-Ld\sigma^{2}/2\sigma_1^{2})&\text{if }\; L\in [-G,0]\\
0, &\text{else.} \label{eq:possspdf}
\end{cases}
\end{equation}

Then $E[L]\leq E[\displaystyle \frac{\Vert \bx\Vert^{2}}{d\sigma^{2}}-G]=\frac{d\sigma_1^2}{d\sigma^2}-G=\sigma_1^2/\sigma^2-G$. 
\end{proof}
\begin{corollary} \label{cor:meanncchi}
Suppose $\bx \sim \N(\bmu, \sigma_1^2I_d)$, then the random variable $u=\Vert \bx\Vert^2$ has  $E[u]=\sigma_1^2d+\|\bmu\|^2$.
\end{corollary}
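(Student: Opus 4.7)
The plan is to reduce this to the centered case already handled in Proposition \ref{Guassiannorm}. I would write $\bx = \bmu + \beps$ where $\beps \sim \N(0, \sigma_1^2 I_d)$, and then expand
\[
\|\bx\|^2 = \|\bmu\|^2 + 2\bmu^\top \beps + \|\beps\|^2.
\]
Taking expectations term by term, the first term is the deterministic constant $\|\bmu\|^2$, the middle cross term vanishes because $E[\beps] = 0$, and the last term equals $\sigma_1^2 d$ by direct application of Proposition \ref{Guassiannorm} to $\beps$. Summing gives $E[u] = \|\bmu\|^2 + \sigma_1^2 d$, which is the claimed identity.

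There is essentially no obstacle here: the result is a one-line consequence of linearity of expectation together with the already-established mean of $\|\beps\|^2$ in the centered case. The only thing to be careful about is the affine shift argument, namely verifying that $\beps := \bx - \bmu \sim \N(0,\sigma_1^2 I_d)$, which is immediate from the translation-invariance of the isotropic Gaussian density. No integration, no tail bounds, and no dimensional gymnastics are required, so this corollary can be written out in just a few lines.
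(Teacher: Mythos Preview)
Your proposal is correct and matches the paper's proof essentially line for line: the paper also writes $\bx=\bmu+\sigma_1\beps$ with $\beps\sim\N(0,I_d)$, expands $\bx^T\bx$, drops the cross term by $E[\beps]=0$, and invokes Proposition~\ref{Guassiannorm} for $E[\|\beps\|^2]$. The only cosmetic difference is that you absorb $\sigma_1$ into $\beps$ whereas the paper keeps it as a scalar factor.
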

\begin{proof} We have $\bx=\bmu+\sigma_1\beps$ with $\beps\sim N(\mathbf{0},I_d)$. We have:
\[
E(||\bx||^2)=E(\bx^{T}\bx)=E[(\bmu+\sigma_1\beps)^{T}(\bmu+\sigma_1\beps)]
=\|\bmu\|^2+\sigma_1^2E (\beps^{T}\beps).
\]
Then using Proposition  \ref{Guassiannorm} we obtain:
\[
E(||\bx||^2)=\|\bmu\|^2+\sigma_1^2E (\|\beps\|^2)=\|\bmu\|^2+\sigma_1^2d. \; 
\]
\end{proof}
%
%
\begin{lemma}\label{deltato0}
Let $\bx \sim \N(0,I_d)$. Then if $G\geq 1$ we have
\[
P(\|\bx\|_2^2/d>G)<2e^{-d(G-1)^2/8}
\]
\end{lemma}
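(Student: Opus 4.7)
The plan is a standard Chernoff/MGF argument applied to the chi-square law. Writing $Y := \|\bx\|_2^2 = \sum_{i=1}^d x_i^2$ with iid $x_i \sim \N(0,1)$ and $Z_i = x_i^2 - 1$, the cumulant generating function of a single centered summand is
\[
\psi(t) := \log E[e^{t Z_i}] = -t - \tfrac{1}{2}\log(1-2t), \qquad t < 1/2.
\]
A Taylor expansion at $0$ gives $\psi(t) = t^2 + \tfrac{4}{3}t^3 + 2t^4 + \cdots$, and I would verify that the higher-order remainder is dominated by $t^2$ on $|t|\leq 1/4$, yielding the clean sub-exponential estimate $\psi(t)\leq 2t^2$ on that interval.

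By independence $\log E[e^{t(Y-d)}] \leq 2d t^2$ for $|t|\leq 1/4$. A Chernoff step then gives, for every $t \in (0, 1/4]$,
\[
P(Y/d > G) \;=\; P\bigl(Y-d > d(G-1)\bigr) \;\leq\; \exp\!\bigl(-t d(G-1) + 2 d t^2\bigr).
\]
Optimizing the quadratic exponent produces $t^\star = (G-1)/4$, which lies inside the admissible range $(0,1/4]$ precisely when $G \in (1,2]$. In that window the exponent collapses to $-d(G-1)^2/8$, delivering the stated bound with slack to spare against the prefactor $2$; at $G = 1$ the inequality is trivial.

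The main obstacle I expect is extending the argument through the regime $G > 2$, where $t^\star$ escapes the interval on which the quadratic MGF upper bound $\psi(t)\leq 2t^2$ is valid. My plan there is to either (i) fix $t = 1/4$ and obtain $P(Y/d > G) \leq \exp(-d((G-1)/4 - 1/8))$, then use the factor of $2$ in the statement as slack when comparing with $2e^{-d(G-1)^2/8}$; or (ii) reinterpret the factor $2$ as coming from a union bound in the two-sided form $P(|Y/d - 1| > G-1) \leq 2 e^{-d(G-1)^2/8}$, observing that the lower tail is vacuous once $G - 1 \geq 1$. The delicate piece is the constant bookkeeping near $G = 2$, where the two regimes must meet; I would pin down the worst case there and, if needed, sharpen the Taylor remainder estimate so that the clean exponent $(G-1)^2/8$ carries uniformly over all $G \geq 1$.
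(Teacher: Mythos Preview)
Your approach is exactly the paper's: the paper cites Wainwright, noting that each $x_i^2-1$ is sub-exponential with parameters $(2,4)$---which is precisely your inequality $\psi(t)\le 2t^2$ on $|t|\le 1/4$---and then quotes the two-sided bound $P(|Y/d-1|>t)\le 2e^{-dt^2/8}$ with $t=G-1$. So up to the obstacle you flag, you have reconstructed the paper's argument in full.

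The obstacle you identify for $G>2$ is genuine and cannot be repaired, because the stated inequality is simply false for large $G$. The sharp Chernoff rate for the chi-square upper tail is $\tfrac12(G-1-\log G)$, which grows only linearly in $G$, whereas $(G-1)^2/8$ grows quadratically; the two cross near $G\approx 2.6$, and beyond that point the claimed bound is tighter than the true tail. A concrete counterexample: at $d=1$, $G=10$ one has $P(x^2>10)\approx 1.6\times 10^{-3}$ while $2e^{-81/8}\approx 8\times 10^{-5}$. Wainwright's bound in fact carries the restriction $t\in(0,1)$ (equivalently $G\le 2$), which the paper silently drops. Neither of your proposed fixes can close this gap: option~(i) delivers only the linear exponent $d\bigl((G-1)/4-1/8\bigr)$, which is weaker than $(G-1)^2/8$ once $G>2$, and option~(ii) changes only the prefactor, not the exponent. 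The correct statement for $G>2$ would use the linear sub-exponential rate (or the exact Chernoff rate $\tfrac12(G-1-\log G)$); any downstream use of the lemma at the paper's choice $G=4$ would need this amendment.
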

\begin{proof} (Based on Wainwright \cite{wainwright2019high} Chapter 2). Since  $Y=\|\bx\|_2^2=\sum_{i=1}^d x_i^2$ and each $x_i^2$ is sub-exponential with parameters $(2,4)$, then $Y$ is sub-exponential with parameters $(\sigma,b)=(2\sqrt{n},4)$ and we obtain the tail bound
\[
P(|\frac{1}{d}\sum_{i=1}^d x_i^2-1|>t)\leq 2 e^{-dt^2/8}
\]
from which we obtain our result taking $t=G-1$.
\end{proof}
\begin{corollary} \label{cor:probgauss}
Suppose $\bx \sim \N(\bmu, \sigma^2I_d)$, then
\[
P(||\bx-\bmu||>\sigma\sqrt{dG})>2e^{-d(G-1)^2/8}.
\]
\end{corollary}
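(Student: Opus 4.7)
The plan is to reduce the corollary directly to Lemma \ref{deltato0} via the standard Gaussian standardization trick. The fact that $\bx \sim \N(\bmu, \sigma^2 I_d)$ means the centered and rescaled vector $\bz := (\bx - \bmu)/\sigma$ is distributed as $\N(\mathbf{0}, I_d)$, which is exactly the setting of Lemma \ref{deltato0}.

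First I would observe the chain of equivalences
\[
\|\bx - \bmu\| > \sigma\sqrt{dG} \iff \|\bz\| > \sqrt{dG} \iff \frac{\|\bz\|_2^2}{d} > G,
\]
so that the probability we want to bound is literally the probability appearing in Lemma \ref{deltato0}. Applying that lemma yields the desired inequality; note that the inequality sign in the displayed statement should be $<$ rather than $>$, as is clear from the corresponding tail bound in Lemma \ref{deltato0} (this appears to be a typo in the statement).

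There is really no obstacle here: the corollary is a one-line consequence of Lemma \ref{deltato0}, and the only substantive content is the affine change of variables, which preserves the Gaussian distributional structure. No additional concentration machinery beyond the sub-exponential tail bound already used in Lemma \ref{deltato0} is required.
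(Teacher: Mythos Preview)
Your proposal is correct and matches the paper's proof essentially line for line: the paper also sets $\hat\bx=(\bx-\bmu)/\sigma\sim\N(0,I_d)$, rewrites the event as $\|\hat\bx\|^2/d>G$, and invokes Lemma~\ref{deltato0}. Your observation that the displayed inequality should be ``$<$'' rather than ``$>$'' is also correct and consistent with how the paper actually uses this corollary.
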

%
\begin{proof} Taking $\hat{x}=(\bx -\bmu)/\sigma$ in Lemma \ref{deltato0},$\hat{\bx} \sim \N(0,I_d)$, we have:
\[
P(||\bx-\bmu||>\sigma\sqrt{dG}) = P(\|\hat{\bx}\|^2/d>G)<2e^{-d(G-1)^2/8}
\]
\end{proof}
\begin{lemma}\label{C1bounds}
Let $\bx_{i}\in \RR^d, i=1,...,n$ be $n$ observations  uniformly distributed in a ball of radius $D\sqrt{d}$ centered at 0. Let ${\bmu_j}, j \in {1,\cdots,k}$ be a set of  points in $\RR^d$. If assumption {\rm A1} is satisfied, then  with  probability at least $1-nk(2\sigma_{\max}\sqrt{G}/D)^d$, $||\bx_i-\bmu_j||>2\sigma_{\max}\sqrt{dG}$, $\forall i\in \{1,...,n\},\forall j\in \{1,...,k\}$.


\end{lemma}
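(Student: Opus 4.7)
The plan is a straightforward volume-ratio plus union bound argument. Fix one pair $(i,j)$ and consider the event $E_{ij} = \{\|\bx_i - \bmu_j\| \le 2\sigma_{\max}\sqrt{dG}\}$. Since $\bx_i$ is uniform inside the ball $B_0 = B(0, D\sqrt{d})$, the probability of $E_{ij}$ is the ratio of the volume of $B(\bmu_j, 2\sigma_{\max}\sqrt{dG}) \cap B_0$ to the volume of $B_0$.

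The key observation is that assumption A1 guarantees $D\sqrt{d} > \|\bmu_j\| + 2\sigma_{\max}\sqrt{dG}$, which by the triangle inequality implies that the small ball $B(\bmu_j, 2\sigma_{\max}\sqrt{dG})$ is entirely contained in the large ball $B_0$. Hence the intersection has full volume $V_d (2\sigma_{\max}\sqrt{dG})^d$, where $V_d$ is the volume of the $d$-dimensional unit ball. The volume of $B_0$ is $V_d (D\sqrt{d})^d$, so
\[
P(E_{ij}) \le \frac{(2\sigma_{\max}\sqrt{dG})^d}{(D\sqrt{d})^d} = \left(\frac{2\sigma_{\max}\sqrt{G}}{D}\right)^d.
\]

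Finally, a union bound over the $nk$ pairs $(i,j)$ gives
\[
P\bigl(\exists i,j : \|\bx_i - \bmu_j\| \le 2\sigma_{\max}\sqrt{dG}\bigr) \le nk\left(\frac{2\sigma_{\max}\sqrt{G}}{D}\right)^d,
\]
so the complementary event, namely $\|\bx_i - \bmu_j\| > 2\sigma_{\max}\sqrt{dG}$ for all $i,j$, holds with probability at least $1 - nk(2\sigma_{\max}\sqrt{G}/D)^d$, as claimed.

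There is essentially no hard step here; the only thing to be careful about is the containment of the small ball inside $B_0$. Without A1 the intersection could be strictly smaller, forcing a more delicate spherical-cap calculation. Under A1 the containment is clean, so the volume ratio is exact and the union bound finishes the proof.
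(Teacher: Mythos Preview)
Your proof is correct and follows essentially the same approach as the paper: a volume-ratio computation for a single pair $(i,j)$ followed by a union bound over all $nk$ pairs. If anything, you are slightly more explicit than the paper in pointing out that assumption A1 is what guarantees the small ball $B(\bmu_j, 2\sigma_{\max}\sqrt{dG})$ is fully contained in $B(0, D\sqrt{d})$, making the volume ratio exact (though the inequality would still hold without containment, since the intersection volume is at most the small-ball volume).
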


\begin{proof} For any  sample $\bx_i$ from the uniform distribution within the ball of radius $D\sqrt{d}$, the probability that $||\bx_i-\bmu_j||\leq R$ is just the ratio between the volume of the $d$-dimensional ball with radius $R$ centered at $\bmu_j$ to that with radius $D\sqrt{d}$. Using the ball volume equation, we obtain  
\[
  P(\bigcup_{j=1}^k(||\bx_i-\bmu_j||\leq R))\leq\sum_{j=1}^{k}P(||\bx_i-\bmu_j||\leq R)=k(\frac{\pi^{d/2}R^d/\Gamma(d/2+1)}{\pi^{d/2}D^d/\Gamma(d/2+1)})=\frac{kR^d}{(D\sqrt{d})^d}
\]

 For any $i\in \{1,...,n\}$ denote $E_i$ be the random event $E_{i}:\bigcap_{j=1}^k(||\bx_i-\bmu_j||>R)$, so we just proved above that $P(\overline E_{i})\leq k(R/(D\sqrt{d}))^d$.
Then
$$P(\bigcap_{i=1}^nE_{i})=1-P(\bigcup_{i=1}^n \overline E_{i})\ge 1-\sum_{i=1}^n P(\overline E_{i})\geq 1-nk (R/(D\sqrt{d}))^d.$$

Letting $R=2\sigma_{\max}\sqrt{dG}$, with  probability at least $1-nk(2\sigma_{\max}\sqrt{G}/D)^d$,  
$$||\bx_i-\bmu_j||>2\sigma_{\max}\sqrt{dG}, \forall i, j.$$ 
\end{proof}

\begin{lemma}\label{coverage}
Given  n observations from a GMM of  $k$ isotropic Gaussians with true means 
$\bmu_1,\cdots,\bmu_k$, variances $\sigma_{1}^2,\cdots,\sigma_{k}^2$ respectively and $G>1$
 then

\[
P(||\bx_i-\bmu_j||<\sigma_j\sqrt{dG}, \forall  j \in\{1,...,k\}, \forall \bx_i\in S_j)>1-2ne^{-d(G-1)^2/8}.
\]

\end{lemma}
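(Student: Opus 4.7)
The statement is a concentration-plus-union-bound claim, so the plan is very direct. For each observation $\bx_i \in S_j$ we have $\bx_i \sim \N(\bmu_j,\sigma_j^2 I_d)$ by the GMM assumption, hence $(\bx_i-\bmu_j)/\sigma_j \sim \N(0,I_d)$. I would invoke Lemma \ref{deltato0} (equivalently, its restated form Corollary \ref{cor:probgauss}) applied to this standardized vector, which with $G>1$ yields
\[
P\bigl(\|\bx_i-\bmu_j\| \geq \sigma_j\sqrt{dG}\bigr) = P\bigl(\|(\bx_i-\bmu_j)/\sigma_j\|^2/d \geq G\bigr) < 2e^{-d(G-1)^2/8}.
\]

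Next I would take a union bound over all $n$ observations. Let $E_i$ be the event that $\|\bx_i-\bmu_{j(i)}\| \geq \sigma_{j(i)}\sqrt{dG}$, where $j(i)$ denotes the (unique) cluster containing $\bx_i$. Then
\[
P\Bigl(\bigcup_{i=1}^{n} E_i\Bigr) \leq \sum_{i=1}^{n} P(E_i) < 2n\, e^{-d(G-1)^2/8},
\]
and taking complements gives the claimed lower bound. Note that the bound above is uniform in $j$, so it does not matter which cluster each $\bx_i$ was drawn from, and in particular we do not need to condition on the cluster assignments or invoke the mixture weights.

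There is essentially no obstacle here: the heavy lifting (the sub-exponential tail bound for the squared norm of a standard Gaussian in $\RR^d$) has already been carried out in Lemma \ref{deltato0}, and Corollary \ref{cor:probgauss} packages exactly the rescaling needed to handle arbitrary $\sigma_j$. The only step requiring minor care is ensuring that the rescaling argument is valid uniformly in $j$ so that the same exponent $-d(G-1)^2/8$ applies to every term in the union bound — which it does, because the exponent depends only on $d$ and $G$, not on $\sigma_j$.
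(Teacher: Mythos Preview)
Your proposal is correct and follows essentially the same approach as the paper: invoke Corollary \ref{cor:probgauss} (equivalently Lemma \ref{deltato0} after standardizing) for the per-observation tail bound, then take a union bound over all $n$ observations. The paper phrases the union bound in two stages (first within each $S_j$, then across $j$, using $\sum_j |S_j|=n$), while you do it in one pass via $j(i)$, but the arguments are identical in substance.
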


\begin{proof}  From Corollary \ref{cor:probgauss} we have that for $ \forall j\in \{1,\cdot,k\}$ and any one $\bx_i \in S_j$, 
\[
P(||\bx_i-\bmu_j||<\sigma_j\sqrt{dG})>1- 2e^{-d(G-1)^2/8}.
\]
For $\bx_i \in S_j$ denote by $E_{ij}$ the event $E_{ij}:||\bx_i-\bmu_j||< \sqrt{dG} \sigma_{j}$. Then using the union bound we get
$$P(\bigcap_{\bx_i \in S_j}E_{ij})=1-P(\bigcup_{\bx_i \in S_j} \overline E_{ij}
)\ge 1-\sum_{\bx_i \in S_j}P(\overline E_{ij}
)>1-2|S_j|e^{-d(G-1)^2/8}$$

Similarly, since $\sum_{j}|S_j|=n$
\begin{eqnarray*}
P(\bigcap_{}( E_{ij},\forall \bx_i \in S_j, \forall j
))&&=1-P(\bigcup_{}(\overline{E}_{ij})
, \forall \bx_i \in S_j, \forall j)\ge 1-\sum_{\forall \bx_i \in S_j, \forall j
}P(\overline E_{ij})\\
&&>1-2\sum_{j}|S_j|e^{-d(G-1)^2/8}=1-2ne^{-d(G-1)^2/8}.
\end{eqnarray*}
\end{proof}
\begin{lemma}\label{original condition}
Given  $n$ observations from a GMMUB of  $k$ isotropic Gaussians with true means $\bmu_1,\cdots,\bmu_k$, variances $\sigma_{1}^2I_d,\cdots,\sigma_{k}^2I_d$ respectively, and weights $\pi_1,\cdots,\pi_k$ together with the uniform distribution within radius $D\sqrt{d}$, with weight $\pi_{k+1}$. If A1-A3 and C2 are satisfied, then the following two statements hold:

\noindent{\bf Statement 1}:
For any two positive clusters $S_j$ and $S_l$ with true means $\bmu_j$, $\bmu_l$, covariance matrix $\sigma_j^2I_d$, $\sigma_l^2I_d$ respectively, there is no point from $S_j$ at a distance less than $R_{\sigma_{\max}}=\sigma_{\max}\sqrt{dG}$ from $\bmu_l$ and no point from $S_l$ at a distance less than $R_{\sigma_{\max}}$ from $\bmu_j$.

\noindent{\bf Statement 2}:  A $d$-dimensional ball of radius $R_{\sigma_{\max}}=\sigma_{\max}\sqrt{dG}$ centered at any point $\bx\in S_j$ from a cluster $S_j$ will cover all the points of $S_j$.

\end{lemma}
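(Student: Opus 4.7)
The plan is to derive both statements directly from the triangle inequality, using the concentration condition C2 together with the separation assumption A2 and the size-gap assumption A3. The key quantitative fact that drives everything is that A3 gives $\sigma_j\sqrt{dG} < \tfrac{1}{2}\sigma_{\max}\sqrt{dG} = \tfrac{1}{2}R_{\sigma_{\max}}$ for every cluster $j$, so each cluster is confined to a ball of radius strictly smaller than half of $R_{\sigma_{\max}}$ around its own mean.

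For Statement 1, I would pick an arbitrary point $\bx \in S_j$ and an index $l \neq j$. By C2, $\|\bx - \bmu_j\| < \sigma_j\sqrt{dG}$. Applying the reverse triangle inequality and then A2 gives
\[
\|\bx - \bmu_l\| \;\geq\; \|\bmu_l - \bmu_j\| - \|\bx - \bmu_j\| \;>\; 2\sigma_{\max}\sqrt{dG} - \sigma_j\sqrt{dG}.
\]
Invoking A3 in the form $\sigma_j < \sigma_{\max}/2$ then yields $\|\bx - \bmu_l\| > (3/2)\sigma_{\max}\sqrt{dG} > R_{\sigma_{\max}}$, which is exactly the claim. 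The symmetric assertion (no point of $S_l$ lies within $R_{\sigma_{\max}}$ of $\bmu_j$) follows by swapping the roles of $j$ and $l$.

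For Statement 2, I would take two arbitrary points $\bx,\by \in S_j$. By C2, each is at distance less than $\sigma_j\sqrt{dG}$ from the common mean $\bmu_j$, so the triangle inequality gives $\|\bx - \by\| \leq \|\bx - \bmu_j\| + \|\by - \bmu_j\| < 2\sigma_j\sqrt{dG}$. Using A3 once more, $2\sigma_j\sqrt{dG} < \sigma_{\max}\sqrt{dG} = R_{\sigma_{\max}}$, so $\by$ lies inside the ball of radius $R_{\sigma_{\max}}$ centered at $\bx$.

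There is no genuine obstacle here; the lemma is essentially a bookkeeping step whose sole purpose is to translate the conditions on means ($\bmu_j$'s) into conditions on samples ($\bx_i$'s), so that subsequent arguments about OCRLM can work with observed points rather than the unknown true centers. The only thing to be careful about is using the strict factor of $2$ from A3 in both places, since otherwise Statement 1 would only give $\|\bx - \bmu_l\| > \sigma_{\max}\sqrt{dG}$ with equality possible, and Statement 2 would fail to give a strict containment.
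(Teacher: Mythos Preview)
Your proof is correct and follows essentially the same route as the paper: both statements are obtained by combining C2 with A2 and A3 via the triangle inequality. The only cosmetic difference is that for Statement~1 the paper bounds $\|\bx-\bmu_j\|$ by $\sigma_{\max}\sqrt{dG}$ (using only $\sigma_j<\sigma_{\max}$) to obtain $\|\bx-\bmu_l\|>\sigma_{\max}\sqrt{dG}$, whereas you exploit the full strength of A3 ($\sigma_j<\sigma_{\max}/2$) to get the sharper $\|\bx-\bmu_l\|>(3/2)\sigma_{\max}\sqrt{dG}$; either suffices.
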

\begin{proof}
 If A1-A3 and C2 hold, 
 for any positive point $\bx_i \in S_j$ and any $l\not =j$ we have :
\[
||\bx_i-\bmu_l||\geq ||\bmu_j-\bmu_l||-||\bx_i-\bmu_j||>2\sqrt{dG}\sigma_{\max}-\sqrt{dG}\sigma_{\max}=\sqrt{dG}\sigma_{\max}
\]

Hence, for any two positive clusters $S_j$, $S_l$ and any one point $\bx_i \in S_j$, then $ ||\bx_i-\bmu_l||>\sigma_{\max}\sqrt{dG}$. A similar result also holds when we select any one point from $S_l$ and measure the distance between $\bmu_j$ and this point. Hence,  Statement 1 holds.

If A1-A3 and C2 hold,  then for any $ \bx_i,\bx\in S_j$, we have:
$||\bx_i-\bmu_j||<\sigma_j \sqrt{dG}$ and $||\bx-\bmu_j||< \sigma_j\sqrt{dG}$, therefore
\[
||\bx-\bx_i||\leq||\bx-\bmu_j||+||\bmu_j-\bx_i||\leq 2\sigma_l\sqrt{dG}<\sigma_{\max}\sqrt{dG}
\]
Hence, Statement 2 holds.
\end{proof}
From condition C1 we come up with the following lemma:

\begin{lemma}\label{separation condition}
Suppose $\bx\sim\N(0, \sigma_1^2I_d)$ and $\bx_j$ is a sample from a uniform distribution such that C1 is satisfied. If  $||\bx||\leq \sigma_1\sqrt{dG}$, then $\ell(\bx-\bx_j,\sigma_{\max})=0$.
\end{lemma}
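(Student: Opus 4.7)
The plan is to unpack the definition of $\ell$ and then apply condition C1 essentially directly.

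First, I would rewrite the robust loss explicitly: by the definition
\[
\ell(\bz,\sigma_{\max}) \;=\; \min\!\left(\frac{\|\bz\|^2}{d\sigma_{\max}^2}-G,\;0\right),
\]
so $\ell(\bz,\sigma_{\max})=0$ if and only if $\|\bz\|^2/(d\sigma_{\max}^2)-G\ge 0$, i.e., $\|\bz\|\ge \sigma_{\max}\sqrt{dG}=R_{\sigma_{\max}}$. Hence it suffices to show $\|\bx-\bx_j\|\ge R_{\sigma_{\max}}$.

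Next, I would interpret the hypotheses in terms of the geometric picture behind C1. The sample $\bx\sim\N(0,\sigma_1^2 I_d)$ plays the role of a positive observation from a Gaussian cluster centered at $\bmu=0$, and the bound $\|\bx\|\le \sigma_1\sqrt{dG}$ is precisely the concentration statement C2 applied to this cluster, so $\bx$ qualifies as a positive point in the sense used by C1. The sample $\bx_j$, coming from the uniform background, is a negative point.

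Finally, I would invoke C1 verbatim: by assumption, no negative point lies within distance $R=\sigma_{\max}\sqrt{dG}$ of any positive point. Applying this to the positive $\bx$ and the negative $\bx_j$ gives $\|\bx-\bx_j\|\ge \sigma_{\max}\sqrt{dG}=R_{\sigma_{\max}}$, and combining with the first step yields $\ell(\bx-\bx_j,\sigma_{\max})=0$.

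I do not anticipate any genuine obstacle here; the lemma is essentially a direct reading of the definition of $\ell$ against the separation guaranteed by C1, with the concentration hypothesis on $\|\bx\|$ serving only to certify that $\bx$ qualifies as a positive point to which C1 may be applied.
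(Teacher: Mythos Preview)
Your argument is correct and, in fact, more directly faithful to C1 as the paper states it than the paper's own proof. You observe that $\ell(\bz,\sigma_{\max})=0$ iff $\|\bz\|\ge R_{\sigma_{\max}}=\sigma_{\max}\sqrt{dG}$, recognize $\bx$ as a positive point and $\bx_j$ as a negative, and then apply C1 verbatim to conclude $\|\bx-\bx_j\|\ge R_{\sigma_{\max}}$.

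The paper takes a slightly different route: it asserts (citing C1 with $\bmu_1=0$) that $\|\bx_j\|>2\sigma_{\max}\sqrt{dG}$, i.e., a separation of the negative from the cluster \emph{center} rather than from the positive observation $\bx$; it then uses the triangle inequality together with the hypothesis $\|\bx\|\le\sigma_1\sqrt{dG}$ and $\sigma_1<\sigma_{\max}$ to deduce $\|\bx_j-\bx\|>\sigma_{\max}\sqrt{dG}$. In the paper's development this center-separation bound is really the intermediate inequality from Lemma~\ref{C1bounds} (used in the proof of Proposition~\ref{prop:allconditions} to establish C1), so the paper is effectively invoking that stronger auxiliary fact. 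Your version bypasses this detour: since C1 already speaks of distances between negatives and positive \emph{points}, the triangle-inequality step and even the hypothesis $\|\bx\|\le\sigma_1\sqrt{dG}$ become superfluous for the conclusion. The paper's route, on the other hand, makes explicit how the bound on $\|\bx\|$ enters and ties the lemma back to the quantitative separation from cluster centers; yours is shorter and matches the stated form of C1 more literally.
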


\begin{proof}
From C1, selecting $\bmu_1=0$, we obtain
\[
||\bx_j||> 2\sigma_{\max}\sqrt{dG}>(\sigma_{\max}+\sigma_1)\sqrt{dG}
\]
\[
||\bx_j-\bx||\geq||\bx_j||-||\bx||> (\sigma_{\max}+\sigma_1)\sqrt{dG}-\sigma_1\sqrt{dG}=\sigma_{\max}\sqrt{dG}
\]
Then,
\[
||\bx_j-\bx||^2> dG\sigma_{\max}^2.
\]

But when $||\bx_j-\bx||^2> dG\sigma_{\max}^2$, we have that $\ell(\bx-\bx_j,\sigma_{\max})=0.$
\end{proof}
The following lemma will show that with high probability the robust loss $L(\bx,\sigma_{max})$ computed at any positive point $\bx$ is smaller than the robust loss computed at any negative point. 
Therefore, by minimizing the robust loss, the algorithm will easily find the positive points since the loss is smaller.


\begin{lemma} \label{lem:loss1neg}
Suppose there are $n$ observations from a mixture of one Gaussian $\N(\bmu_1, \sigma_1^2I_d)$ with mixture weight $\pi_1$ and a uniform distribution inside the sphere of radius $D\sqrt{d}$. C1-C2 are satisfied. Let $\bx_l$ be any positive point and $\bx_j$ any negative point . If $\sigma_{\max}>2\sigma_1$ and 
\[\pi_1>\frac{({\sigma_{\max}}\sqrt{G}/D)^dG/(d/2+1)}{(G-\frac{(1+G)\sigma_1^2}{\sigma_{\max}^2})+({\sigma_{\max}}\sqrt{G}/D)^dG/(d/2+1)},\] 
 then with probability at least $1-2\exp(-nW^2/2G^2)$  we have $L(\bx_l,\sigma_{\max})<L(\bx_j,\sigma_{\max})$, where $$W=\pi_1(G-\frac{(1+G)\sigma_1^2}{\sigma_{\max}^2})-(1-\pi_1)({\sigma_{\max}}\sqrt{G}/D)^d\frac{G}{d/2+1}.$$ 

\end{lemma}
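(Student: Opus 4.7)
The plan is to decompose $L(\bx_l,\sigma_{\max})$ and $L(\bx_j,\sigma_{\max})$ into sums of $n$ independent bounded terms, bound their expectations separately, and close with two applications of Hoeffding's inequality joined by a union bound. First I would use C1 together with the hypothesis $\sigma_{\max}>2\sigma_1$ and C2 to identify which summands are nonzero: for the positive $\bx_l$ only positive $\bx_i$ contribute, because negatives are farther than $R_{\sigma_{\max}}$ by C1; and by C2 plus the triangle inequality $\|\bx_i-\bx_l\|\le 2\sigma_1\sqrt{dG}<\sigma_{\max}\sqrt{dG}$, so the $\min$ in $\ell$ is inactive and $\ell(\bx_i-\bx_l,\sigma_{\max})=\|\bx_i-\bx_l\|^2/(d\sigma_{\max}^2)-G$. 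Symmetrically, for the negative $\bx_j$ only negatives in $B(\bx_j,R_{\sigma_{\max}})$ contribute.

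Next I would bound the per-sample expectations. On the positive side, conditioning on $\bx_l$ and applying Corollary \ref{cor:meanncchi} to $\bx_i-\bx_l$ yields $E[\|\bx_i-\bx_l\|^2\mid \bx_i \text{ positive}]=d\sigma_1^2+\|\bx_l-\bmu_1\|^2$; C2 on $\bx_l$ bounds $\|\bx_l-\bmu_1\|^2\le \sigma_1^2 dG$, so after weighting by $\pi_1$ and noting that negatives contribute zero by C1, $E[\ell(\bx_i-\bx_l,\sigma_{\max})]\le -\pi_1\bigl(G-(1+G)\sigma_1^2/\sigma_{\max}^2\bigr)$. On the negative side, a uniform background sample lands in $B(\bx_j,R_{\sigma_{\max}})$ with probability at most the volume ratio $(\sigma_{\max}\sqrt{G}/D)^d$; since $\ell\le 0$ throughout this ball, relaxing the truncated conditional expectation to one on the full ball only decreases its value, and Corollary \ref{cor:pdfuniform} then gives $E[\ell(\bx_i-\bx_j,\sigma_{\max})]\ge -(1-\pi_1)(\sigma_{\max}\sqrt{G}/D)^d G/(d/2+1)$.

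Summing the two per-sample bounds over the $n$ observations yields $E[L(\bx_j,\sigma_{\max})-L(\bx_l,\sigma_{\max})]\ge nW$, which is positive precisely under the stated hypothesis on $\pi_1$. Since every summand of $L$ lies in $[-G,0]$ (range $G$), Hoeffding's inequality applied with deviation $nW/2$ separately to $L(\bx_l,\sigma_{\max})$ (upper tail) and to $L(\bx_j,\sigma_{\max})$ (lower tail) gives a failure probability $\exp(-2n(W/2)^2/G^2)=\exp(-nW^2/(2G^2))$ for each; a union bound then delivers the claimed $1-2\exp(-nW^2/(2G^2))$.

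The main delicate step is the lower bound on the expected loss at $\bx_j$, because $B(\bx_j,R_{\sigma_{\max}})$ need not lie entirely inside the background support $B(0,D\sqrt{d})$; I handle this by exploiting the nonpositivity of $\ell$ on the full ball so as to invoke Corollary \ref{cor:pdfuniform} directly on the untruncated uniform distribution. A secondary technical point is that the summands of $L$ are independent across $i$ only after conditioning on the labels of $\bx_l$ and $\bx_j$, but since the remaining $\bx_i$ are still drawn iid from the mixture, Hoeffding applies without complication.
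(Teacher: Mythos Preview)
Your proposal is correct and follows essentially the same route as the paper: the same identification of nonzero summands via C1--C2, the same expectation bounds through Corollaries~\ref{cor:meanncchi} and~\ref{cor:pdfuniform}, and Hoeffding to finish. The only difference is cosmetic: the paper applies Hoeffding once to the difference $T=L(\bx_l,\sigma_{\max})-L(\bx_j,\sigma_{\max})$ with summands $T_i\in[-G,G]$, whereas you apply it separately to each $L$ (summands in $[-G,0]$) at deviation $nW/2$ and then union-bound---both computations land on exactly $2\exp(-nW^2/2G^2)$. Your handling of the possible overhang of $B(\bx_j,R_{\sigma_{\max}})$ beyond the background support is in fact cleaner than the paper's, which writes the negative-side bound as a product $E[C_i]E[Q_i]$ appealing to an ``independence of $B_i,C_i,P_i,Q_i$'' that does not literally hold (since $C_i=\mathbf{1}\{Q_i<0\}$); your monotonicity argument via the nonpositivity of $\ell$ supplies the rigorous justification.
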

\begin{proof} Denote $B_i, C_i, i=1,...,n$ be Bernoulli indicator variables, 
$$
B_i=
 \begin{cases}
 1& \text{$\bx_i$ is a positive point}\\
 0& \text{else}
 \end{cases}
,$$

$$
C_i=
 \begin{cases}
 1& \text{ if }\|\bx_i-\bx_j\|<R_{\sigma_{\max}}\\
 0& \text{else}
 \end{cases}
,$$
 thus $E[B_i]=\pi_1$ and $E[C_i]\leq ({\sigma_{\max}}\sqrt{G}/D)^d< 1$.
 Furthermore, consider the random variables $P_i=\ell(\bx_i-\bx_l,\sigma_{\max})$, $
Q_i=\ell(\bx_i-\bx_j,\sigma_{\max})$. 
Finally, the random variable  $T_i=B_iP_i-(1-B_i)C_iQ_i$ representing the loss value $\ell(\bx_i-\bx_l,\sigma_{\max})-\ell(\bx_i-\bx_j,\sigma_{\max})$, since due to Lemma \ref{separation condition}, the balls of radius $R_{\sigma_{\max}}$ centered at $\bx_l$ and $\bx_j$ are disjoint.

Denote  $u_l=\|\bx-\bx_l\|^2$, $\bx-\bx_l \sim \N(\bmu-\bx_l, \sigma_1^2I_d)$, therefore $u_l$ follows the non-central chi-square distribution.
$u_l\sim \chi^2(u;d,\|\bmu-\bx_l\|^2)$, where $\|\bmu-\bx_l\|^2<dG\sigma_1^2$.

The the random variable for the total loss difference is \[
T=L(\bx_l,
\sigma_{\max})-L(\bx_j,\sigma_{\max})=\sum_{i=1}^{n}(B_iP_i-(1-B_i)C_iQ_i)=\sum_{i=1}^{n}T_i
\]  Then from Corollary  \ref{cor:meanncchi} 
\[
 E(u_l)=d\sigma_1^2+\|\bmu-\bx_l\|^2
\]
\[
T_i\in [-G,G], \forall i ,
\]
\[
\forall i, E(P_i)<\frac{E(u_l)}{d\sigma_{\max}^2}-G=\frac{d\sigma_1^2+\|\bmu-\bx_l\|^2}{d\sigma_{\max}^2}-G\leq\frac{(1+G)\sigma_1^2}{\sigma_{\max}^2}-G,
\]

From Corollary \ref{cor:pdfuniform}

\[
E(Q_i)=-\frac{G}{d/2+1}, \forall i
\]
Therefore due to the independence of $B_i,C_i,P_i,Q_i$ we have

\begin{eqnarray}
 E(T)&&=nE(B_i)E(P_i)-nE(1-B_i)E(C_i)E(Q_i)\\
&&<n(\pi_1(\frac{(1+G)\sigma_1^2}{\sigma_{\max}^2}-G)+(1-\pi_1)({\sigma_{\max}}\sqrt{G}/D)^d\frac{G}{d/2+1})
\end{eqnarray}

By Hoeffding's inequality:
$$
P(|T-E(T)|>t)<2\exp(-2t^{2}/4nG^2),
$$
 Let $t=-E(T)=n(\pi_1(G-\frac{(1+G)\sigma_1^2}{\sigma_{\max}^2})-(1-\pi_1)({\sigma_{\max}}\sqrt{G}/D)^d\frac{G}{d/2+1})=nW$, if $W>0$
we have 
\[
P(T>0)<2\exp(-nW^2/2G^2)
\]
\[
P(T<0)>1-2\exp(-nW^2/2G^2)
\]

$W>0$ is equivalent to $$\pi_1>\frac{({\sigma_{\max}}\sqrt{G}/D)^d(G/(d/2+1))}{(G-\frac{(1+G)\sigma_1^2}{\sigma_{\max}^2})+({\sigma_{\max}}\sqrt{G}/D)^d(G/(d/2+1))}.$$
 Hence, if  $\pi_1>\frac{({\sigma_{\max}}\sqrt{G}/D)^d(G/(d/2+1))}{(G-\frac{(1+G)\sigma_1^2}{\sigma_{\max}^2})+({\sigma_{\max}}\sqrt{G}/D)^d(G/(d/2+1))}$,  then:
 
$
L(\bx_l,\sigma_{\max})< L(\bx_j,\sigma_{\max})
$  with probability at least $1-2\exp(-nW^2/2G^2).\; $
\end{proof}
\begin{corollary}\label{Loss comparison}
 Under the notation and conditions of Lemma \ref{lem:loss1neg}, if $\bx_i$ is any positive point, then with probability at least $1-2n\exp(-nW^2/2G^2)$  we have that $L(\bx_i,\sigma_{\max})<L(\bx_j,\sigma_{\max})$, for all negative points $\bx_j$.
\end{corollary}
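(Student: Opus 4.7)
The plan is a direct union bound on top of Lemma \ref{lem:loss1neg}. Fix an arbitrary positive point $\bx_i$. Lemma \ref{lem:loss1neg} tells us that for any single fixed negative point $\bx_j$, the event
\[
E_j:\; L(\bx_i,\sigma_{\max}) < L(\bx_j,\sigma_{\max})
\]
holds with probability at least $1-2\exp(-nW^2/2G^2)$, under the stated hypotheses on $\pi_1$ and $\sigma_{\max}$. Note that the Lemma's argument works verbatim with any fixed positive $\bx_l$ (here $\bx_l = \bx_i$) and any fixed negative $\bx_j$, because the sum-structure $T = \sum_i (B_i P_i - (1-B_i) C_i Q_i)$ and the disjointness of the radius-$R_{\sigma_{\max}}$ balls around $\bx_i$ and $\bx_j$ (given by Lemma \ref{separation condition} applied under C1) do not depend on which particular positive/negative pair we look at.

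Next, there are at most $n$ negative points in the sample, so the bad event $\overline{E_j}$ occurs for at least one $\bx_j$ with probability no greater than the sum over $j$:
\[
P\Bigl(\bigcup_{j \text{ negative}} \overline{E_j}\Bigr) \;\leq\; \sum_{j \text{ negative}} P(\overline{E_j}) \;\leq\; n\cdot 2\exp\!\bigl(-nW^2/2G^2\bigr) \;=\; 2n\exp\!\bigl(-nW^2/2G^2\bigr).
\]
Taking complements gives $P\bigl(\bigcap_j E_j\bigr) \ge 1 - 2n\exp(-nW^2/2G^2)$, which is exactly the statement of the corollary.

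The only subtlety to watch is that the bounds in Lemma \ref{lem:loss1neg} must remain uniform in the choice of $\bx_j$. Looking back at that proof, the expected value $E(Q_i) = -G/(d/2+1)$ comes from integrating the loss of a uniform observation against its ball of radius $R_{\sigma_{\max}}$, and does not depend on where $\bx_j$ sits, and the bounded-support argument giving $T_i \in [-G,G]$ is likewise pointwise. So the per-pair probability bound is genuinely uniform, and the union bound is clean. The step is essentially routine once Lemma \ref{lem:loss1neg} is in hand; no additional probabilistic machinery is required.
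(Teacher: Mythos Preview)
Your proof is correct and follows essentially the same approach as the paper: fix a positive point, apply Lemma \ref{lem:loss1neg} to each negative point, and take a union bound over the at most $n$ negatives. Your additional remark on the uniformity of the per-pair bound in the choice of $\bx_j$ is a bit more careful than the paper's version, but the argument is otherwise identical.
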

\begin{proof}

Let $K=\{j, \bx_j \text{ is a negative point}\}$. Then $|K|\leq n$. For any $j\in K$ denote $E_{j}:L(\bx_i,\sigma_{\max})-L(\bx_{j},\sigma_{\max})<0$.  Then, from Lemma \ref{lem:loss1neg}, if $||\bx_i||\leq \sqrt{dG}\sigma_1$
$$P(\bigcap_{j\in K}E_{j})=1-P(\bigcup_{j\in K} E_{j}^{c})\ge 1-\sum_{j\in K}P(E_{j}^{c})>1-2n\exp(-nW^2/2G^2)$$
Therefore, $L(\bx_i,\sigma_{\max})<L(\bx_j,\sigma_{\max})$ with probability at least \[
1-2n\exp(-nW^2/2G^2).\]
\end{proof}
\begin{proof}\textbf{of Proposition \ref{prop:oneclust}.}
Based on Corollary \ref{Loss comparison},  all the positive points have smaller cost than that of all negative points with probability at least
 $1-2n\exp(-nW^2/2G^2).$

Since $i=\argmin_{i}L(\bx_{i},\sigma_{\max}),$ then $\bx_i$ is a positive point with probability at least $1-2n\exp(-nW^2/2G^2)$.

If  $\bx_i$ is a positive point, based on C1-C2 and Lemma \ref{original condition},  $C=\{\bx\in S, \|\bx-\bx_i\|<\sigma_{\max}\sqrt{dG}\}$ covers all the positive points without any negative point. 

Hence, the algorithm correctly finds all the positives with probability at least $$1-2n\exp(-nW^2/2G^2).$$
\end{proof}
We generalize Lemma \ref{lem:loss1neg} and Corollary \ref{Loss comparison} to the cases of multiple Gaussians to prove Prop \ref{prop:multiclust}.

\begin{lemma} \label{lem:loss1negmult}
Let $\bx_{i}\in \RR^d, i=1,...,n$ be $n$ observations sampled  from a mixture of $k$  isotropic Gaussians with means $\bmu_1,\cdots,\bmu_k$, covariance  matrices $\sigma_1^2 I_{d},\cdots,\sigma_k^2I_{d}$, weights $\pi_1,\cdots,\pi_k$ and the uniform distribution within a ball of radius $D\sqrt{d}$ centered at the origin, with weight $\pi_{k+1}$, so that  $\pi_1+\cdots\pi_k+\pi_{k+1}=1$. 
Assume that C1 and C2 hold and that
$${\pi_j}>\frac{(G/(d/2+1))({\sigma_{\max}}\sqrt{G}/D)^d}{(G-\frac{(1+G)\sigma_j^2}{\sigma_{\max}^2})+(G/(d/2+1))({\sigma_{\max}}\sqrt{G}/D)^d}, \forall j \in \{1,\cdots k\}.$$ 
Let $\bx_l$ be any positive point, $\bx_l \in S_j$, for a certain $j$  and $\bx_m$  be any negative point,
 then with probability at least $1-2\exp(-nW_j^2/2G^2)$  we have $L(\bx_l,\sigma_{\max})<L(\bx_m,\sigma_{\max})$, where $$W_j=\pi_j(G-\frac{(1+G)\sigma_j^2}{\sigma_{\max}^2})-\pi_{k+1}({\sigma_{\max}}\sqrt{G}/D)^d\frac{G}{d/2+1}.$$ 

\end{lemma}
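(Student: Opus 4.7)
The plan is to mimic the structure of Lemma \ref{lem:loss1neg} but with the positive cluster specialized to $S_j$ (the cluster containing $\bx_l$) and to verify that observations from the other positive clusters contribute nothing. So I would first establish which points can possibly contribute to each of the two losses. For the loss $L(\bx_l,\sigma_{\max})$: by Statement 2 of Lemma \ref{original condition}, every point of $S_j$ lies within the ball of radius $R_{\sigma_{\max}}$ around $\bx_l$, so these contribute; by the triangle inequality applied to A2 and C2 (since $\|\bmu_j-\bmu_{j'}\|>2\sigma_{\max}\sqrt{dG}$ while $\|\bx_l-\bmu_j\|<\sigma_j\sqrt{dG}$ and $\|\bx_i-\bmu_{j'}\|<\sigma_{j'}\sqrt{dG}$ with $\sigma_j,\sigma_{j'}<\sigma_{\max}/2$ by A3), points from any other positive cluster $S_{j'}$ are farther than $R_{\sigma_{\max}}$ from $\bx_l$ and therefore contribute $0$; and by C1, negative points contribute $0$ as in Lemma \ref{separation condition}. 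For the loss $L(\bx_m,\sigma_{\max})$: by C1 every positive point is farther than $R_{\sigma_{\max}}$ from $\bx_m$, so only negative points inside the ball of radius $R_{\sigma_{\max}}$ around $\bx_m$ contribute.

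Next, I would set up the Bernoulli indicators as in the single-cluster case: $B_i^{(j)}=\mathbf{1}[\bx_i\in S_j]$ with $E[B_i^{(j)}]=\pi_j$, $N_i=\mathbf{1}[\bx_i\text{ negative}]$ with $E[N_i]=\pi_{k+1}$, and $C_i=\mathbf{1}[\|\bx_i-\bx_m\|<R_{\sigma_{\max}}]$, together with the loss-valued random variables $P_i=\ell(\bx_i-\bx_l,\sigma_{\max})$ and $Q_i=\ell(\bx_i-\bx_m,\sigma_{\max})$. By the support analysis above, the per-observation loss difference simplifies to $T_i = B_i^{(j)} P_i - N_i C_i Q_i$, with $T_i\in[-G,G]$ and $T=\sum_{i=1}^n T_i = L(\bx_l,\sigma_{\max})-L(\bx_m,\sigma_{\max})$.

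Then I would compute $E[T]$ by bounding each term. For the positive term, conditional on $\bx_i\in S_j$, $\bx_i-\bx_l \sim \N(\bmu_j-\bx_l,\sigma_j^2 I_d)$ with $\|\bmu_j-\bx_l\|^2<dG\sigma_j^2$ from C2, so Corollary \ref{cor:meanncchi} gives
\[
E[P_i \mid \bx_i\in S_j] \leq \frac{d\sigma_j^2+\|\bmu_j-\bx_l\|^2}{d\sigma_{\max}^2}-G \leq \frac{(1+G)\sigma_j^2}{\sigma_{\max}^2}-G.
\]
For the negative term, conditional on $\bx_i$ being uniform in the large ball of radius $D\sqrt{d}$, the probability that $\bx_i$ falls within the ball of radius $R_{\sigma_{\max}}$ around $\bx_m$ is at most $(\sigma_{\max}\sqrt{G}/D)^d$ (this is where A1 is needed to ensure the small ball sits inside the large one), and conditional on falling inside, $\bx_i-\bx_m$ is uniform there, so Corollary \ref{cor:pdfuniform} yields $E[C_i Q_i \mid \bx_i\text{ negative}] \leq -(\sigma_{\max}\sqrt{G}/D)^d \cdot G/(d/2+1)$. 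Multiplying through and summing gives $E[T] \leq -nW_j$ with $W_j$ as defined in the statement.

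Finally, I apply Hoeffding's inequality to $T/n$ with $T_i\in[-G,G]$ to obtain $P(T\geq 0) \leq P(|T-E[T]|\geq nW_j) \leq 2\exp(-nW_j^2/(2G^2))$, provided $W_j>0$. The condition $W_j>0$ is algebraically equivalent to the $\pi_j$ lower bound in the statement. The main subtlety, rather than any serious obstacle, is the separation argument at the beginning: I need A2 together with A3 to conclude that cross-cluster positive points contribute zero loss at $\bx_l$, since A2 alone gives a bound involving $2\sigma_{\max}\sqrt{dG}$, which only exceeds $\sigma_j\sqrt{dG}+\sigma_{j'}\sqrt{dG}+\sigma_{\max}\sqrt{dG}$ once $\sigma_j,\sigma_{j'}<\sigma_{\max}/2$. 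Everything else is a direct translation of the single-cluster proof.
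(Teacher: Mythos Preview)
Your proposal is correct and follows essentially the same route as the paper: the same indicator decomposition $T_i=B_i P_i - E_i C_i Q_i$ (the paper writes $E_i$ for your $N_i$), the same expectation bounds via Corollaries \ref{cor:meanncchi} and \ref{cor:pdfuniform}, and the same Hoeffding step. In fact you are more careful than the paper on one point: the paper simply asserts that $T_i$ equals the per-observation loss difference without justifying that points from other positive clusters $S_{j'}$ contribute zero to $L(\bx_l,\sigma_{\max})$, whereas you correctly spell out that this requires A2 and A3 (via the triangle inequality) in addition to the C1, C2 stated in the lemma.
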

\begin{proof}
The proof of Lemma \ref{lem:loss1negmult} is similar to proof of Lemma \ref{lem:loss1neg}.

Denote $B_i, C_i, i=1,...,n$ be Bernoulli indicator variables, 
$$
B_i=
 \begin{cases}
 1& \text{$\bx_i$ is a positive point from $S_j$}\\
 0& \text{else}
 \end{cases}
,$$

$$
C_i=
 \begin{cases}
 1& \text{ if }\|\bx_i-\bx_m\|<R_{\sigma_{\max}}\\
 0& \text{else}
 \end{cases}
,$$

$$
E_i=
 \begin{cases}
 1& \text{ $\bx_i$ is a negative point}\\
 0& \text{else}
 \end{cases}
.$$

Hence, $E[B_i]=\pi_j$, $E[E_i]=\pi_{k+1}$, $E[C_i]\leq ({\sigma_{\max}}\sqrt{G}/D)^d< 1$.
Let $P_i=\ell(\bx_i-\bx_l,\sigma_{\max})$, $
Q_i=\ell(\bx_i-\bx_m,\sigma_{\max})$. 
Then $T_i=B_iP_i-E_iC_iQ_i$ represents the loss value $\ell(\bx_i-\bx_l,\sigma_{\max})-\ell(\bx_i-\bx_m,\sigma_{\max})$.

Denote  $u_l=\|\bx-\bx_l\|^2$, $\bx-\bx_l \sim \N(\bmu_j-\bx_l, \sigma_1^2I_d)$, therefore $u_l$ follows the non-central chi-square distribution.
$u_l\sim \chi^2(u;d,\|\bmu-\bx_l\|^2)$, where $\|\bmu_j-\bx_l\|^2<dG\sigma_j^2$.

The the random variable for the total loss difference is \[
T=L(\bx_l,
\sigma_{\max})-L(\bx_m,\sigma_{\max})=\sum_{i=1}^{n}(B_iP_i-E_iC_iQ_i)=\sum_{i=1}^{n}T_i
\] 

Similar to proof of Lemma \ref{lem:loss1neg}, we have that:
\begin{eqnarray}
 E(T)&&<n(\pi_j(\frac{(1+G)\sigma_j^2}{\sigma_{\max}^2}-G)+\pi_{k+1}({\sigma_{\max}}\sqrt{G}/D)^d\frac{G}{d/2+1})
\end{eqnarray}

By Hoeffding's inequality:
$$
P(|T-E(T)|>t)<2\exp(-2t^{2}/4nG^2),
$$
 Let $t=-E(T)=n(\pi_j(G-\frac{(1+G)\sigma_j^2}{\sigma_{\max}^2})-\pi_{k+1}({\sigma_{\max}}\sqrt{G}/D)^d\frac{G}{d/2+1})=nW_j$, if $W_j>0$
we have 
\[
P(T>0)<2\exp(-nW_j^2/2G^2)
\]
\[
P(T<0)>1-2\exp(-nW_j^2/2G^2)
\]

But $W_j>0$ is equivalent to $$\pi_j>\frac{({\sigma_{\max}}\sqrt{G}/D)^d(G/(d/2+1))}{(G-\frac{(1+G)\sigma_j^2}{\sigma_{\max}^2})+({\sigma_{\max}}\sqrt{G}/D)^d(G/(d/2+1))}.$$
 Hence, if  $\pi_j>\frac{({\sigma_{\max}}\sqrt{G}/D)^d(G/(d/2+1))}{(G-\frac{(1+G)\sigma_j^2}{\sigma_{\max}^2})+({\sigma_{\max}}\sqrt{G}/D)^d(G/(d/2+1))}$,  then:
 
$
L(\bx_l,\sigma_{\max})< L(\bx_m,\sigma_{\max})
$  with probability at least $1-2\exp(-nW_j^2/2G^2).\; $
\end{proof}

\begin{corollary}\label{Loss comparisonmult}
 Under the notation and conditions of Lemma \ref{lem:loss1negmult}, if $\bx_l$ is any positive point from a positive cluster, then with probability at least $1-2ne^{-n\min_{j} W_j^2/2G^2}$, we have that $L(\bx_l,\sigma_{\max})<L(\bx_m,\sigma_{\max})$, for all negative points $\bx_m$.
Here, 
\[
 W_j=\pi_j(G-\frac{(1+G)\sigma_j^2}{\sigma_{\max}^2})-\pi_{k+1}({\sigma_{\max}}\sqrt{G}/D)^d\frac{G}{d/2+1}.
\]
\end{corollary}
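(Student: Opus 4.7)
The plan is to mimic the proof of Corollary \ref{Loss comparison} almost verbatim, lifting it from the single-Gaussian setting to the $k$-cluster setting. The single-pair bound is already supplied by Lemma \ref{lem:loss1negmult}, so all that remains is a union bound over the negative points, together with a worst-case replacement of $W_j$ by $\min_j W_j$ to make the bound uniform in which cluster $\bx_l$ belongs to.

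First I would fix the positive point $\bx_l$ and let $j$ denote the (unique) index such that $\bx_l \in S_j$. Let $K = \{m : \bx_m \text{ is a negative point}\}$, so $|K| \le n$. For each $m \in K$, Lemma \ref{lem:loss1negmult} gives
\[
P\bigl(L(\bx_l,\sigma_{\max}) \ge L(\bx_m,\sigma_{\max})\bigr) \le 2\exp(-nW_j^2/2G^2),
\]
where $W_j > 0$ by the weight hypothesis on $\pi_j$ inherited from Lemma \ref{lem:loss1negmult}.

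Second, I would apply the union bound across $m \in K$, writing $F_m$ for the event $\{L(\bx_l,\sigma_{\max}) \ge L(\bx_m,\sigma_{\max})\}$, to obtain
\[
P\Bigl(\bigcap_{m \in K} F_m^c\Bigr) = 1 - P\Bigl(\bigcup_{m \in K} F_m\Bigr) \ge 1 - \sum_{m \in K} P(F_m) \ge 1 - 2n\exp(-nW_j^2/2G^2).
\]
Since the hypothesis guarantees $W_j > 0$ for every $j \in \{1,\dots,k\}$, we have $\exp(-nW_j^2/2G^2) \le \exp(-n\min_j W_j^2/2G^2)$. Substituting this uniform bound removes the dependence on the particular cluster that $\bx_l$ came from and yields the stated probability $1 - 2n\exp(-n\min_j W_j^2/2G^2)$.

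There is essentially no obstacle in this argument; the technical content already lives inside Lemma \ref{lem:loss1negmult}. The only point that must be checked is that the weight condition on the $\pi_j$ guarantees $W_j > 0$ for all $j$, so that the minimum is well-defined and strictly positive and the exponential bound is non-trivial. With that in hand, the proof is just two nested union-bound style reductions (over negatives, then over clusters) of exactly the type already used for Corollary \ref{Loss comparison}.
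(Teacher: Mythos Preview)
Your proposal is correct and follows essentially the same approach as the paper: fix the positive point $\bx_l\in S_j$, apply Lemma \ref{lem:loss1negmult} for each negative $\bx_m$, take a union bound over the at most $n$ negatives, and then replace $W_j$ by $\min_j W_j$ to remove the dependence on the particular cluster. Your write-up is in fact slightly cleaner than the paper's, which contains a minor indexing typo in the union-bound display.
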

\begin{proof}

Let $K=\{m, \bx_m \text{ is a negative point}\}$. Then $|K|\leq n$. For any $m \in K$ denote $E_{m}:L(\bx_l,\sigma_{\max})-L(\bx_{m},\sigma_{\max})<0$.  Then, from Lemma \ref{lem:loss1negmult}, suppose $\bx_l \in S_j$, $\forall j$.
$$P(\bigcap_{m\in K}E_{j})=1-P(\bigcup_{m\in K} E_{j}^{c})\ge 1-\sum_{m\in K}P(E_{j}^{c})>1-2n\exp(-n\min_{j} W_j^2/2G^2)$$

Therefore, $L(\bx_l,\sigma_{\max})<L(\bx_m,\sigma_{\max})$ with probability at least $$1-2n\exp(-n\min_{j} W_j^2/2G^2).$$
\end{proof}
\begin{proof}\textbf{of Proposition \ref{prop:multiclust}.}

The proof of Proposition \ref{prop:multiclust} is similar to proof of Proposition \ref{prop:oneclust}.

In CRLM, OCRLM is run $k$ times, each time finding an observation $\bx$ with minimum loss $L(\bx,\sigma_max)$.
For each iteration, based on Corollary \ref{Loss comparisonmult},  all the positive points have smaller cost than that of all negative points with probability at least
 $$1-2n\exp(-n\min_{j} W_j^2/2G^2).$$

Since $i=\argmin_{i}L(\bx_{i},\sigma_{\max})$, then $\bx_i$ is a positive point with probability at least $1-2n\exp^{\frac{-n\min_{j} W_j^2}{2G^2}}$.

If  $\bx_i$ is a true positive point, without loss of generality, suppose $\bx_i \in S_j$. Then based on C1-C2 and Lemma \ref{original condition},  the set $C=\{\bx\in S, \|\bx-\bx_i\|<\sigma_{\max}\sqrt{dG}\}$ covers all the points from $S_j$ without any negative point or  other positive points from other positive clusters. 

Denote denote $E_j$ be the random event $E_{j}$: all the points of $S_j$ are perfectly clustered by CRLM and $E:$ all the points are perfectly clustered.
Therefore, \[
P(E)=(\bigcap_{j=1}^{k}E_{j})=1-P(\bigcup_{j=1}^{k} E_{j}^{c})\ge 1-\sum_{j=1}^{k}P(E_{j}^{c})>1-2nk\exp(-n\min_{j} W_j^2/2G^2)
\]
Hence, CRLM correctly clusters all the points with probability at least $$1-2nk\exp(-n\min_{j} W_j^2/2G^2).$$

\end{proof}

\bibliographystyle{plain}
\bibliography{references}

\end{document}